\documentclass[twoside,11pt]{article}

\usepackage{blindtext}

%

%
%
%

\usepackage[preprint]{jmlr2e}
\usepackage{listings}
\usepackage{booktabs}
\usepackage{amsmath} 
\usepackage{xcolor}
\usepackage{wrapfig}
\usepackage{colortbl}
\usepackage{makecell}
\usepackage{etoc}
\etocdepthtag.toc{mtchapter}
\etocsettagdepth{mtchapter}{subsection}
\etocsettagdepth{mtappendix}{none}
\usepackage{svg}
\usepackage{wrapfig}
\usepackage{enumitem,algorithm,algorithmic}
\usepackage{subfigure}
\usepackage{multirow}
\usepackage[pdf]{graphviz}
\usepackage{longtable}
\usepackage{tikz}
\usetikzlibrary{arrows.meta, positioning, decorations.markings}

\usepackage[T1]{fontenc}
\usepackage{hyperref}
\usepackage{url}
\usepackage{xcolor}		
\definecolor{darkblue}{rgb}{0, 0, 0.5}
\definecolor{beaublue}{rgb}{0.74, 0.83, 0.9}
\definecolor{gainsboro}{rgb}{0.86, 0.86, 0.86}
\definecolor{kleinblue}{rgb}{0,0.18,0.65}
\hypersetup{colorlinks=true,citecolor=kleinblue, linkcolor=kleinblue, urlcolor=kleinblue}

\lstset{
  language=Python,
  backgroundcolor=\color{backcolour},
  commentstyle=\color{codegreen},
  keywordstyle=\color{magenta},
  numberstyle=\tiny\color{codegreen},
  stringstyle=\color{codepurple},
  basicstyle=\ttfamily\small,
  breakatwhitespace=false,         
  breaklines=true,                 
  captionpos=b,                    
  keepspaces=true,                 
  numbers=left,                    
  numbersep=5pt,                  
  showspaces=false,                
  showstringspaces=false,
  showtabs=false,                  
  tabsize=2,
  frame=single,
  rulecolor=\color{black}  
}



\usepackage{amsmath,amsfonts,bm}


\newcommand{\ind}{\perp\!\!\!\!\perp}
\newcommand{\notind}{\not \! \ind}

\newcommand{\varset}[1]{\mathcal{#1}}    









\def\eqref#1{equation~\ref{#1}}









\def\1{\bm{1}}








\def\vp{{\bm{p}}}

\def\vu{{\bm{u}}}
\def\vv{{\bm{v}}}
\def\vw{{\bm{w}}}
\def\vx{{\bm{x}}}



\def\mU{{\bm{U}}}
\def\mV{{\bm{V}}}
\def\mW{{\bm{W}}}
\def\mX{{\bm{X}}}

\DeclareMathAlphabet{\mathsfit}{\encodingdefault}{\sfdefault}{m}{sl}
\SetMathAlphabet{\mathsfit}{bold}{\encodingdefault}{\sfdefault}{bx}{n}

\def\gA{{\mathcal{A}}}

\def\gC{{\mathcal{C}}}
\def\gD{{\mathcal{D}}}

\def\gF{{\mathcal{F}}}
\def\gG{{\mathcal{G}}}

\def\gI{{\mathcal{I}}}

\def\gM{{\mathcal{M}}}
\def\gN{{\mathcal{N}}}

\def\gP{{\mathcal{P}}}

\def\gR{{\mathcal{R}}}
\def\gS{{\mathcal{S}}}

\def\gW{{\mathcal{W}}}
\def\gX{{\mathcal{X}}}










\newcommand{\R}{\mathbb{R}}
\newcommand{\N}{\mathbb{N}}



\DeclareMathOperator*{\argmax}{arg\,max}

\newenvironment{myquotation}{\setlength{\leftmargini}{0em}\quotation}{\endquotation}
\newcommand{\para}[1]{\paragraph{{#1}}}
\newtheorem{assumption}[theorem]{Assumption}

\usepackage{footmisc} 

\definecolor{codegreen}{rgb}{0,0.6,0}
\definecolor{codegray}{rgb}{0.5,0.5,0.5}
\definecolor{codepurple}{rgb}{0.58,0,0.82}
\definecolor{backcolour}{rgb}{0.95,0.95,0.92}

\definecolor{ensogreen}{RGB}{130, 179, 102}
\definecolor{ensoblue}{RGB}{108, 142, 191}
\definecolor{ensoorange}{RGB}{215, 155, 0}

\newcommand{\ourtitle}{Discovering and Reasoning of Causality in the Hidden World with Large Language Models\xspace}
\newcommand{\ourauthors}{Liu, Chen, Liu, Gong, Cheng, Han, and Zhang}
\newcommand{\ours}[0]{\text{COAT}\xspace}
\newcommand{\ourst}[0]{\text{COAT}\xspace}
\newcommand{\oursMB}[0]{\text{COAT-MB}\xspace}
\newcommand{\oursPAG}[0]{\text{COAT-PAG}\xspace}
\newcommand{\oursGAS}[0]{\text{COAT-ADJ}\xspace}
\newcommand{\oursElicit}[0]{Active Causal Representation Elicitation\xspace}
\newcommand{\oursfull}[0]{\textbf{C}ausal representati\textbf{O}n \textbf{A}ssistan\textbf{T}\xspace}

\newcommand{\MB}{\text{MB}}

\newcommand{\apple}{\text{AppleGastronome}\xspace}
\newcommand{\pain}{\text{Neuropathic}\xspace}
\newcommand{\chatgpt}{\text{GPT-3.5}\xspace}
\newcommand{\gptf}{\text{GPT-4}\xspace}
\newcommand{\gptfo}{\text{GPT-4o}\xspace}
\newcommand{\llama}{\text{LLaMA2}\xspace}
\newcommand{\llamathree}{\text{LLaMA3}\xspace}
\newcommand{\mistral}{\text{Mistral}\xspace}

\newcommand{\std}[1]{\textit{\scriptsize{$\pm$#1}}}
\usepackage{inconsolata}
\newcommand{\codeblock}[1]{ \colorbox{black!5}{\texttt{#1}} }


\newcommand{\starrightarrow}{ { \ * \! \! \rightarrow \ } }
\newcommand{\starleftarrow}{ { \ \leftarrow  \! \! * \ } }
\newcommand{\circarrowcirc}{ {  \ \circ  \! \! - \! \!  \circ  \  } }
\newcommand{\stararrowcirc}{ {  \ *  \! \! - \! \!  \circ  \  } }
\newcommand{\circarrowstar}{ {  \ \circ  \! \! - \! \!  *  \  } }
\newcommand{\circrightarrow}{ { \ \circ \! \! \rightarrow \ } }



\usepackage{lastpage}
\jmlrheading{26}{2025}{1-\pageref{LastPage}}{9/5; Revised xx}{xx}{21-0000}{\ourauthors}


\ShortHeadings{\ourtitle}{\ourauthors}
\firstpageno{1}

\begin{document}

\title{\ourtitle}

\newcommand{\addrHKBU}{$^1$~TMLR Group, Hong Kong Baptist University, Kowloon, Hong Kong SAR}
\newcommand{\addrMBZUAI}{$^2$~Mohamed bin Zayed University of Artificial Intelligence, Abu Dhabi, UAE}
\newcommand{\addrCMU}{$^3$~Department of Philosophy, Carnegie Mellon University, Pittsburgh, PA 15213, USA}
\newcommand{\addrSYD}{$^4$~School of Computer Science, The University of Sydney, NSW 2008, Australia}
\newcommand{\addrUMelb}{$^5$~School of Mathematics and Statistics, The University of Melbourne, Victoria 3010, Australia}
\newcommand{\addrCUHK}{$^6$~The Chinese University of Hong Kong, Shatin, Hong Kong SAR}

\author{\name Chenxi Liu$^{*~1}$ \email cscxliu@comp.hkbu.edu.hk 
       \AND
       \name Yongqiang Chen$^{*~2,~3}$   \email yqchen24@gmail.com 
       \AND
       \name Tongliang Liu$^{~4,~2}$  \email tongliang.liu@sydney.edu.au 
       \AND
       \name Mingming Gong$^{~5,~2}$  \email mingming.gong@unimelb.edu.au  
       \AND
       \name James Cheng$^{~6}$   \email jcheng@cse.cuhk.edu.hk
       \AND
       \name Bo Han$^{\ddag~1}$   \email bhanml@comp.hkbu.edu.hk
       \AND
       \name  Kun Zhang$^{~3,~2}$   \email kunz1@cmu.edu  
       \AND
       \addr \addrHKBU \\
       \addr \addrMBZUAI \\ 
       \addr \addrCMU \\
       \addr \addrSYD  \\
       \addr \addrUMelb \\
       \addr \addrCUHK  
}

\editor{}

\maketitle

\renewcommand{\thefootnote}{\fnsymbol{footnote}}
\footnotetext{\small \textsuperscript{*} These authors contributed equally to this work. \quad \textsuperscript{\ddag} Corresponding author. }
\renewcommand{\thefootnote}{\arabic{footnote}}

\begin{abstract}
Revealing hidden causal variables alongside the underlying causal mechanisms is essential to the development of science.
Despite the progress in the past decades, existing practice in causal discovery (CD) heavily relies on high-quality measured variables, which are usually given by human experts. 
In fact, the \textit{lack of well-defined high-level variables behind unstructured data} has been a longstanding roadblock to a broader real-world application of CD.
This procedure can naturally benefit from an automated process that can suggest potential hidden variables in the system. 
Interestingly, Large language models (LLMs) are trained on massive observations of the world and have demonstrated great capability in processing unstructured data.
To leverage the power of LLMs , we develop a new framework termed \oursfull (\ours) that incorporates the rich world knowledge of LLMs to propose useful measured variables for CD with respect to high-value target variables on their paired unstructured data.
Instead of directly inferring causality with LLMs, \ours constructs feedback from intermediate CD results to LLMs to refine the proposed variables.
Given the target variable and the paired unstructured data, we first develop \oursMB that leverages the predictivity of the proposed variables to iteratively uncover the Markov Blanket of the target variable.
Built upon \oursMB, \oursPAG further extends to uncover a more complete causal graph, i.e., Partial Ancestral Graph, by iterating over the target variables and actively seeking new high-level variables.
Moreover, the reliable CD capabilities of \ours also extend the debiased causal inference to unstructured data by discovering an adjustment set.
We establish theoretical guarantees for the CD results and verify that \ours can effectively and reliably make the most use of LLM knowledge for CD across $7$ realistic benchmarks and real-world case studies, including analysis of human reviews and diagnosis of neuropathic and brain tumors.
\end{abstract}

\begin{keywords}
Causal Discovery, Large Language Models, Causal Representation Learning, Active Causal Representation Elicitation, Partial Ancestral Graph, Markov Blanket
\end{keywords}

\section{Introduction}
\label{sec:intro}
Science originates along with identifying important variables and revealing their causal relations~\citep{Hanson1958PatternsOD,sci_revolution}.
Despite the progress in the past decades, existing causal discovery (CD) approaches  mainly rely on high-quality measured variables, which are usually given by human experts~\citep{causation_1st,auto_causal,causal_learn_survey}. 
However, the desired variables and their measurements are usually unavailable in a wide range of real-world applications. For example, Amazon sellers who want to analyze the factors related to user ratings only have raw user reviews, which are written according to the underlying user preferences for certain product characteristics. 
The high-level variables, such as the preferred product characteristics, reflected from the low-level raw text reviews, are useful for causal analysis on how to improve purchase rates of the products.
Therefore, \textit{the lack of high-quality high-level variables} has been a longstanding impediment to broader real-world applications of CD or causality-inspired methods~\citep{causal_rep_learn,zhang2024causal}.

The recently emerged Large Language Models (LLMs)~\citep{gpt3,chatgpt,llama,openai2023gpt4} offer a new opportunity to mitigate the gap~\citep{causal_llm_frontier}.
Trained from massive observations of the world, LLMs demonstrate impressive capabilities in \textit{comprehending unstructured inputs}, and combining the rich knowledge learned from pretraining to resolve a variety of general tasks~\citep{spark_AGI}.
Early studies suggest that incorporating LLMs as a \textit{direct reasoner} can effectively answer commonsense causal questions, demonstrating the great potential in resolving the bottlenecks in causal learning and reasoning~\citep{understand_causality_llm,causal_llm_frontier,cma}.
Nevertheless, the reliability of LLMs in directly reasoning the causal structure behind any specific data-generating process remains a debate~\citep{causalparrots,cladder,corr2cau,understand_causality_llm} due to a series of drawbacks of LLMs~\citep{llm_hallucination,gpt4bias,reversal_curse}.
The other works also try to leverage the prior knowledge of LLMs to assist causal discovery with respect to \textit{a fixed set of given variables}~\citep{llm_query_tool,cma}, while overlooking the valuable insight from unstructured data, e.g., image or text. 
In fact, the existing use of LLMs for CD still sticks to the traditional paradigm of CD, while surprisingly overlooking the identifiability of causal structure that is essential for the faithfulness of CD results~\citep{causation_1st,auto_causal,causal_learn_survey}. Furthermore, it is also unclear how LLMs can be integrated into causal inference tasks~\citep{elements_ci,bareinboim2022pearl}. Hence, it raises a challenging research question:
\begin{myquotation}
    \textit{How can LLMs reliably assist with discovering and reasoning the causality behind the observed world?}
\end{myquotation}
To answer the question, we consider a general setting ubiquitous in real-world applications, where the data consists of a set of high-value numerical variables $\varset{U}$, such as customer ratings and medical diagnoses, and their paired unstructured data $\mX$, such as customer reviews or medical images.
Built upon the setting, we further decompose and approach the general question into actionable sub-questions:
\begin{enumerate}
    \item[\textbf{Q1.}] As the first step, we begin by identifying factors that are useful for predicting the high-value target variable, which are immediately valuable for a variety of applications. Specifically, given one scalar target variable $Y\in \varset{U}$, how can we reliably acquire a set of high-level factors from unstructured data $\mX$ that is a Markov Blanket of $Y$?
    \item[\textbf{Q2.}] Furthermore, we extend our analysis to reveal more causal insights on the underlying high-level factors. Specifically, given a set of variables $\varset{U}$, how can we maximally reveal the causal information in the Partial Ancestral Graph (PAG) $\gP_\varset{U}$ by extending the nodes with factors from unstructured data $\mX$?
    \item[\textbf{Q3.}] Built upon the high-level factors identified from low-level observations data, we then explore another important application of causality in real-world applications, i.e., causal effect estimation: How can the elicitation of causal variables facilitate treatment effect estimation?
\end{enumerate}

To answer the aforementioned questions, we investigate a new interplay between CD and LLMs, and propose a different paradigm termed \oursfull (\ours): rather than directly reasoning for causality on existing variables, we harness LLMs to actively augment datasets by proposing and parsing useful high-level variables from unstructured data, which we term as \textit{\oursElicit}.
In general, the \ours framework employs two key components. (\textit{i}) LLMs: these models can understand unstructured input $\mX$ like text and images. Therefore, they are capable of proposing useful candidate factors $\varset{W}$ from the underlying possible factors $\varset{O}$ based on observation and instruction, and can parse out actual factor values for each unstructured observation. (\textit{ii}) Guaranteed CD Methods for structured data: these numerical algorithms have rigorous identifiability about when and how the causal information can be revealed. We utilize them to parse and construct feedback to refine the candidate factors $\varset{W} \subseteq \varset{O}$ proposed by LLMs. The feedback is the essential drive for \oursElicit, with which \ours is able to combine the best of LLMs and traditional CD methods to uncover the desired causality from the unstructured data.

Under the setup of \ours, we begin by developing \oursMB that aims to reveal the Markov Blanket (MB) factors in \textbf{Q1}, in a reliable manner. As stated in Algorithm~\ref{alg:coat-main}, in each iteration \oursMB first uses LLMs to propose a set of candidate factors, and then uses CD methods to construct feedback based on the predictivity of the target variable. The feedback filters out samples where the target variable can not be well explained by existing factors discovered in previous rounds. We also establish conditions on when LLMs can effectively identify the subset $\varset{W} \subseteq \varset{O}$ as a Markov Blanket of the given target variable $Y\in\varset{U}$ (Section~\ref{subsec:MB-identification}).


\begin{figure}[t]
    \subfigtopskip=2pt
    \centering
    \subfigure[]{
        \begin{minipage}[c][2.5cm][c]{0.3\linewidth}
            \centering
            \begin{tikzpicture}[
                baseline=(current bounding box.center),
                node distance=0.6cm,
                every node/.style={circle, draw, minimum size=6mm, inner sep=1pt},
                nonmbset/.style={circle, draw, fill=red!10, minimum size=6mm, inner sep=1pt},
                edge/.style={->, >=Latex, semithick},
                boxnode/.style={rectangle, draw, fill=black!20, minimum size=6mm, inner sep=1pt}
            ]
            \node (X3) [nonmbset] {$X_3$};
            \node[above right=0.2cm and 0.6cm of X3] (X1) {$X_1$};
            \node[below right=0.2cm and 0.6cm of X3] (X2) {$X_2$};
            \node[below right=0.2cm and 0.6cm of X1] (Y) {$Y$};
            
            \draw[edge] (X3) -- (X1);
            \draw[edge] (X3) -- (X2);
            \draw[edge] (X1) -- (Y);
            \draw[edge] (X2) -- (Y);
            \end{tikzpicture}
        \end{minipage}
        \label{fig:Q2_example_a}
    }
    \subfigure[]{
        \begin{minipage}[c][2.5cm][c]{0.3\linewidth}
            \centering
            \begin{tikzpicture}[
                baseline=(current bounding box.center),
                node distance=0.6cm,
                every node/.style={circle, draw, minimum size=6mm, inner sep=1pt},
                edge/.style={->, >=Latex, semithick},
                boxnode/.style={rectangle, draw, fill=black!20, minimum size=6mm, inner sep=1pt},
                oedge/.style={line width=0.8pt, decoration={markings, 
                    mark=at position 0.1 with {\node[circle, draw, fill=white, inner sep=0pt, minimum size=4pt] {};}, 
                    mark=at position 0.9 with {\node[circle, draw, fill=white, inner sep=0pt, minimum size=4pt] {};}}, 
                    postaction=decorate},
                circnormaledge/.style={line width=0.8pt, decoration={markings, 
                    mark=at position 0.1 with {\node[circle, draw, fill=white, inner sep=0pt, minimum size=4pt] {};}, 
                    mark=at position 0.99 with {\arrow{Latex}}}, 
                    postaction=decorate, shorten >=2pt},
            ]
            \node (Y) {$Y$};
            \node[above left=0.2cm and 0.6cm of Y] (X1) {$X_1$};
            \node[below left=0.2cm and 0.6cm of Y] (X2) {$X_2$};
   
            \draw[oedge] (X1) -- (Y);
            \draw[oedge] (X2) -- (Y);
            \draw[oedge] (X1) -- (X2);
            \end{tikzpicture}
        \end{minipage}
        \label{fig:Q2_example_b}
    }
    \subfigure[]{
        \begin{minipage}[c][2.5cm][c]{0.3\linewidth}
            \centering
            \begin{tikzpicture}[
                baseline=(current bounding box.center),
                node distance=0.6cm,
                every node/.style={circle, draw, minimum size=6mm, inner sep=1pt},
                edge/.style={->, >=Latex, semithick},
                nonmbset/.style={circle, draw, fill=red!10, minimum size=6mm, inner sep=1pt},
                oedge/.style={line width=0.8pt, decoration={markings, 
                    mark=at position 0.1 with {\node[circle, draw, fill=white, inner sep=0pt, minimum size=4pt] {};}, 
                    mark=at position 0.9 with {\node[circle, draw, fill=white, inner sep=0pt, minimum size=4pt] {};}}, 
                    postaction=decorate},
                circnormaledge/.style={line width=0.8pt, decoration={markings, 
                    mark=at position 0.1 with {\node[circle, draw, fill=white, inner sep=0pt, minimum size=4pt] {};}, 
                    mark=at position 0.99 with {\arrow{Latex}}}, 
                    postaction=decorate, shorten >=2pt},
            ]
            \node (X3) [nonmbset] {$X_3$};
            \node[above right=0.2cm and 0.6cm of X3] (X1) {$X_1$};
            \node[below right=0.2cm and 0.6cm of X3] (X2) {$X_2$};
            \node[below right=0.2cm and 0.6cm of X1] (Y)  {$Y$};
            
            \draw[oedge] (X3) -- (X1);
            \draw[oedge] (X3) -- (X2);
            \draw[circnormaledge] (X1) -- (Y);
            \draw[circnormaledge] (X2) -- (Y);
            
            \end{tikzpicture}
        \end{minipage}
        \label{fig:Q2_example_c}
    }
    \vspace{-0.1in}
    \caption{An example where PAG is less informative when limited in $\MB(Y)$. (a) is the true causal structure indicating causal directions; (b) and (c) are the partial ancestral graphs. The circle mark ($\circ$) means it is undetermined to be arrow head or tail. The node out of $\MB(Y)$ is colored. }
    \label{fig:Q2_example}
\end{figure}

Essentially, Markov Blanket factors only cover limited causal information of the high-level variables. Shown as in Figure~\ref{fig:Q2_example}, the skeleton between two Markov Blanket factors $X_1, X_2 \in \text{MB}(Y)$ and their arrow heads towards $Y$, cannot be revealed without $X_3 \notin \text{MB}(Y)$. Therefore, it motivates us to develop \oursPAG to tackle \textbf{Q2}. \oursPAG can elicit helpful factor set $\varset{W}$ from unstructured input $\mX$ to further reveal causal structure (in the partial ancestral graph) among the input variable set $\varset{U}$. In Section~\ref{subsec:pag-identifiability}, we show that the skeleton and arrow-head information within any subset $\varset{U}' \subseteq \varset{V}\triangleq \varset{W} \cup \varset{U}$ (where $\varset{W}$ is produced by \oursPAG) can be maximally revealed if $ \MB(\varset{U}' )\subseteq \varset{V}$.

With the reliable framework to uncover the high-level hidden variable from the unstructured data, we further extend \ours to causal inference. Specifically, we propose \oursGAS that finds a PAG $\gP_\varset{V}$ where the adjustment sets within any subset $\varset{U}'\subseteq \varset{V}$ are preserved if $\MB(\varset{U}' )\subseteq \varset{V}$, as stated in Algorithm~\ref{alg:coat-t-y}.
In this paper, we employ the Generalized Adjustment Criterion~\citep{perkovi2018complete}, a sound and complete graphical characterization for adjustments sets, to process the PAG returned by \oursGAS. By this approach, we show that \oursGAS can achieve human-level unbiased treatment effect estimation from unstructured data(Section~\ref{subsec:adjustment-set}).

This paper builds upon the preliminary conference work by Liu and Chen \textit{et al.}~(\citeyear{causalcoat2024}), which focuses on identifying a set of Markov Blanket factors (Section~\ref{sec:causal_feedback}~and~\ref{subsec:MB-identification}). 
However, the present paper is significantly extended in several aspects. 
\begin{itemize}
    \item In Section~\ref{subsec:introduce-coat-pag}, we further extend the original framework to refine causal information about the input structured variable set $\varset{U}$ with unstructured data $\mX$. 
    We propose a new approach \oursPAG, as stated in Algorithm~\ref{alg:coat-search}, to augment the node set  $\varset{U}$ by proposing factors $\varset{W}$ from $\mX$. In Section~\ref{subsec:pag-identifiability}, we show that \oursPAG can produce an augmented node set $\varset{V}$ where the skeleton and arrow heads among any subset $\varset{U}' \subseteq \varset{V}$ are maximally revealed if $\MB(\varset{U}') \subseteq \varset{V}$ (Proposition~\ref{prop:arrowhead-preserve}). In Corollary~\ref{cor:arrow-tails-in-discriminating-path}, we show that \oursPAG can uncover information about discriminating paths. Furthermore, we also provide additional evaluation for this setting in Section~\ref{subsec:applesetting2}.
    \item In Section~\ref{subsec:intro-coat-adj}, we extend the original framework to construct adjustment sets from unstructured data $\mX$ and the input variable set $\varset{U}$ for causal effect estimation from a treatment $T$ to an effect $Y$. We propose a novel approach \oursGAS, as stated in Algorithm~\ref{alg:coat-t-y}, to augment the node set  $\varset{U}$ by proposing factors $\varset{W}$ from $\mX$. In Section~\ref{subsec:CI-ATE}, with Proposition~\ref{prop:Adjustment-Sets-Preservation} and other related propositions, we show that on the partial ancestral graph over the augmented node set $\varset{V}$, all valid adjustment set within any subset $\varset{U}' \subseteq \varset{V}$ can be read off by graphical criterion 
    if $\MB(\varset{U}') \subseteq \varset{V}$.
\end{itemize}


This paper is organized as follows. In Section~\ref{sec:Preliminary-and-Related-Work}, we briefly review related work, and elaborate the novelty and motivation of our research motivation. 
In Section~\ref{sec:setting}, we introduce our \ours framework. We first explain the key components, then describe the variant and extensions including \oursPAG and \oursGAS. 
In Section~\ref{subsec:theory}, we establish the identifiability results for the proposed methods. 
In Section~\ref{sec:apple_bench}, we present comprehensive evaluation, including synthetic data for eliciting the Markov Blanket and extending Partial Ancestral Graphs, ablation studies as well as the results on several realistic benchmarks. 
In Section~\ref{sec:realwold-casestudy}, we perform case studies of \ours on the real-world unstructured data including images, database that requires coding, and sequential news data. 

\section{Related Work}
\label{sec:Preliminary-and-Related-Work}



\subsection{Causal Methods for Structured Data}

\paragraph{Causal discovery} This topic aims to uncover the underlying causal structure from observational data~\citep{causation_1st}. It is critical to both real-world applications and scientific discoveries~\citep{Pearl1999CausalDF,book_why}. 
Such structures are usually represented by Directed Acyclic Graphs, where each arrow start from a cause to a reason.

Classical methods include constrained-based methods~\citep{auto_causal,causation_1st,fci} that incorporate conditional independence tests to find Markov Equivalence class. There are also different methods with different assumptions, like constrained functional~\citep{shimizu2006linear,zhang2012identifiability,hoyer2008nonlinear}, multiple domain data~\citep{huang2020causal,pmlr-v80-yang18a,NEURIPS2020_f8b7aa3a,mooij2020joint,NEURIPS2022_46a12649}, and non-stationary data~\citep{pmlr-v89-malinsky19a,pmlr-v97-huang19g,huang2020causal,liu2023causal}.
There are also some works aiming to address realistic challenges, like structure models with arbitrary distribution~\citep{Chen_Xie_Qiao_Hao_Zhang_Cai_2022}, sub-sampled time series~\citep{wulearning}, and latent hierarchical causal structure ~\citep{huang2022latent,tin,NEURIPS2023_bdeab378,li2024causal}.

Despite recent theoretical and empirical improvements~\citep{Glymour2019ReviewOC,causal_learn_survey}, most existing causal discovery approaches rely on well-structured data with human-crafted factors as inputs. For example, the identification of causal direction can still be challenging when there are confounders not included in data~\citep{montagna2023assumption}. In this work, we show how to improve causal identification by introducing new variables from the paired unstructured data.

\paragraph{Causal effect estimation} This topic aims to estimate the causal influence of one variable on another.
The causal effect can be formulated as the average treatment effect in the potential outcome framework~\citep{rubin1974estimating,rosenbaum1983central,holland1986statistics}, or represented by an interventional distribution in the structural causal model~\citep{goldszmidt1992rank,pearl1995theory,causality}.
While the random control experiment is the gold standard for the inference of causal effect~\citep{rubin1978bayesian}, it is appealing to conduct estimation through nonexperimental data~\citep{pearl1995causal}.
One approach is to utilize instrumental variables that influence the effect variable only through the treatment variable~\citep{angrist1996identification,angrist2009mostly}. For instance, Mendelian randomization analyzes causal effects among clinical factors with genetic variants~\citep{burgess2015mendelian}. Another approach is to estimate the interventional distribution by conditional distributions through a set of covariates satisfying certain graphical criterion~\citep{pearl2011graphical,shpitser2012validity}. 
In this paper, we employ the Generalized Adjustment Criterion~\citep{perkovi2018complete} as a showcase to demonstrate the usefulness of the proposed \ours framework in constructing valid adjustment sets with unstructured data.


\subsection{Causal Methods for Unstructured Data}

\paragraph{Causal representation learning} This topic aims to develop approaches to identifying latent high-level variables (like location and color of an object) from low-level observations such as images~\citep{causal_rep_learn,zhang2024causal}. Such identification can be feasible with certain conditions, like auxiliary information~\citep{pmlr-v89-hyvarinen19a,pmlr-v108-khemakhem20a,pmlr-v119-locatello20a,pmlr-v162-kong22a,zhang2024causal}, sparsity~\citep{klindt2021nonlineardisentanglementnaturaldata,pmlr-v177-lachapelle22a,NEURIPS2022_63d3bae2,NEURIPS2022_6801fa3f}, or interventional data~\citep{ahuja2023interventional,zhang2024identifiability,NEURIPS2023_97fe251c,pmlr-v202-squires23a,buchholz2024learning}.
Recent works also generalize the causal representation learning to graph-structured data~\citep{ciga,gala,gmt}, and discuss the optimization challenges in realizing causal representation learning~\citep{pair,feat}.
In this work, we open up a different setting with LLMs, which we termed as \textit{\oursElicit}, to iteratively augment tabular data with constructed interpretable factors to improve causal structure identification.

\paragraph{Causal reasoning with LLMs} This topic aims to analyze the causal relations implied by the semantics given a text.
One research line is \textit{text mining of causal relations}~\citep{Drury2022ASO,cui2024odyssey,hosseini2021predicting}. In particular, some works focus on the identification of causal relations among events specified in documents~\citep{gao2019modeling,weber2020causal,liu2021everything}. 
Another line is to \textit{understand the causal reasoning ability of LLMs}. \citet{causal_llm_frontier} find that LLMs can recover the pairwise causal relations based on their textual description. \citet{llm_anticausal} find that it is crucial to incorporate prompts aligned with the underlying causal story for LLMs to do pairwise causal relation inference.
\citet{willig2022can,causalparrots} find that LLMs can not understand causality but simply retell the causal knowledge contained in the training data.

Recent works explore causal reasoning with LLMs in trustworthy, explainable, and high-stake scenario~\citep{Han2025TrustworthyML,zhou2025explainable,li2025survey}. \citet{chi2024unveiling} propose goal-oriented causal reasoning with retrieval-augmented generation system to involve external reliable knowledge. \citet{khatibi2025cdf} incorporate causal insights to refine the query iteratively for multi-hop reasoning.
In this work, instead of reasoning about existing causal knowledge, \ours focuses on learning novel causal insights from data, which is complementary to these approaches.




\paragraph{Causal learning with LLMs} This topic aims to draw novel causal insights on data with LLMs.
Preliminary studies~\citep{understand_causality_llm,corr2cau,cladder,causalparrots} find that LLMs can hardly provide satisfactory answers for discovering new knowledge or decision-making tasks. \cite{wu2025llm} suggest that LLMs should be a non-decisional support in causal discovery.
One promising direction is to incorporate the causal discovery results by LLMs as a prior or constraint~\citep{lmprior,imperfect_llm_expert,llm_query_tool,cma} to improve the performance of data-driven causal discovery algorithms.
\cite{vashishtha2023causal} find that causal order is a more stable interface for utilizing imperfect expert knowledge from LLMs. 
There are also some active learning approaches. \citet{lampinen2023passive} shows that transformer-based agents can learn causal strategies passively if intervention is allowed during tests. \citet{li2025can} incorporates LLMs to augment numerical causal discovery by intervention targeting. \citet{wang2025causal} develops the Causal Copilot framework to enable rigorous causal analysis for non-experts.


In parallel, there has been a growing interest in exploiting LLMs in the pipelines of treatment-effect estimation. Notably, \cite{dhawan2024end} propose the NATURAL framework to automate treatment effect estimation with textual data. Given an observational study design with covariates properly defined by experts, NATURAL would process textual reports, estimate the conditional distributions, and then compute treatment effects.
More recently, \cite{schulte2025adjustment} discuss the conditions under which it is feasible to use pre-trained representations for confounding adjustment.
\cite{lin2024isolated} studies the causal effect of how changes in language affect readers' feelings with representations from LLMs.

Despite promising advances in integrating LLMs with causal discovery and causal inference, previous approaches still focus on a fixed set of artificially curated variables. 
Recently, there have been explorations of potential variables in unstructured data.
In our preliminary paper (\citeyear{causalcoat2024}), we consider identifying Markov Blankets from unstructured data, and we present a recap in Section~\ref{sec:setting}.
Notably, \cite{Li2025RevealingMC} proposes a novel contrastive module for factor identification, revealing both intra- and inter-modal causal mechanisms behind multi-modal unstructured data.
\cite{feng2025iris} incorporate search engine APIs to enable automatic sample collection and propose variables from document data.
Different from these works, in this paper, we conduct further investigation on actively extending the variables from unstructured data for identifying more causal directions in Partial Ancestral Graphs (Algorithm~\ref{alg:coat-search}) and Adjustment sets in treatment-effect estimation (Algorithm~\ref{alg:coat-t-y}) with identifiability guarantees.

\section{The \oursfull Framework}
\label{sec:setting}

In this section, we present the \ours framework, which leverages an LLM to serve as a representation assistant and actively elicits the desired causal variables on unstructured data. 
The key drive for the \oursElicit is based on the feedback constructed from the intermediate causal discovery results with CD methods.
We will begin with revealing the underlying Markov Blanket for a given target variable, and then extend to Partial Ancestral Graph, and the desirable Adjustment Set for treatment effect estimation.



\paragraph{Notation}
In this paper, random variables are denoted by capital Roman letters (e.g., $U_1, U_2, \dots, U_m$); random vectors are expressed by bold capital Roman letters (e.g., $\mathbf{U}$); and sets of random variables are written by calligraphic letters (e.g., $\mathcal{U}$). Consistency in the base letter reflects their relationship: for instance, $\mathbf{U} = [U_1, U_2, \dots, U_m]^\top$ and $\mathcal{U} = \{U_1, U_2, \dots, U_m\}$. We summarize the symbols with special meanings in this paper in Appendix~\ref{tab:notation}.

For presenting edges in PAGs, this paper follows the practice in \cite{ZHANG20081873} and \cite{perkovi2018complete}. Each end of one edge is associated with one of three possible marks: tail ($-$), arrow ($>$), and circle ($\circ$). For examples, edges can be bi-directed $\leftrightarrow$, directed $\rightarrow$, partially directed $\circrightarrow$, and undirected $\circarrowcirc$. An asterisk ($*$) would be used in a rule if it can be applied to any of the three marks.

\subsection{Active Causal Representation Elicitation}
\label{sec:method}
\label{sec:problem_def}




\paragraph{Data}
We are given \textit{unstructured data} (e.g., text or images) as a high-dimensional random vector $\mX$, and \textit{structured data} (e.g., stars rated by a customer or the tumor type of a patient) as a set of random variables of interest $\varset{U} = \{U_1, \dots, U_m\}$. The input dataset $\gD = \{\vx^{(j)}, \vu^{(j)}\}^n_{j=1}$ contains $n$ samples independently drawn from the joint distribution over $(\mX,\mU)$, where $\mU$ is an $m$-dim random vector that consists of variables in $\varset{U}$. There is no prior assumption on the relations between $\mX$ and $\mU$. 

\paragraph{LLM as a representation assistant} 
Given the unstructured data $\mX$, we aim to make use of the rich knowledge of LLMs to assist in converting meaningful unstructured information in a structured form. Formally speaking, we seek a mapping $h: \mX \mapsto \mW$ which serves as a \textit{structured} representation $\mW=h(\mX)$.
We present the following definition:

\begin{definition}[Representation Assistant]\label{def:llm-ra}
An \textbf{Representation Assistant} for the unstructured data $\mX$ with domain $\gX$ is defined by a set of high-level factors $\varset{W} \triangleq \{W_1, \dots, W_k\}$. Each factor $W_i$ is induced by a prompt instruction (e.g., a natural language query), which specifies a deterministic mapping $\vw_i: \gX \to \gC_i$ via an LLM $\Psi$. Here, $\gC_i \subseteq \R$ is a predefined numerical set (discrete or continuous), and $W_i = \vw_i(\mX)$ is a random variable with range $\gC_i$. The structured representation is the concatenation:
$$
h\big(\mX\big) = \mW = \Big( \, W_1, \, W_2, \, \cdots \, , W_k \, \Big).
$$
\end{definition}


\begin{figure*}[t]
  \centering
  \includegraphics[width=\textwidth]{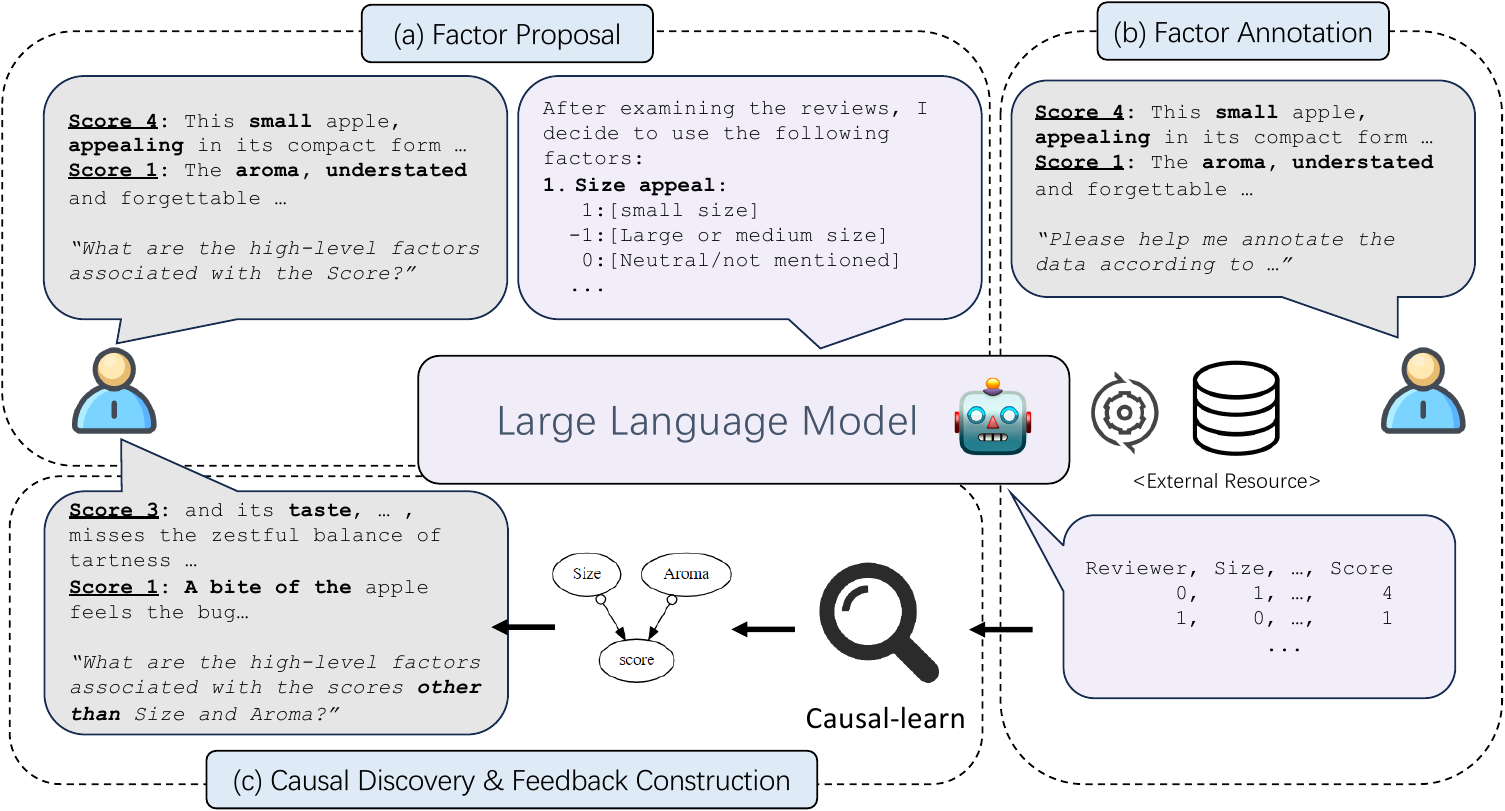}
  \caption{Illustration of \ours framework in eliciting the Markov Blanket of the rating score from the paired text reviews.
  \ours first (a) adopts an LLM to read, comprehend, and relate the rich knowledge about reviews on apples. The LLM needs to propose a series of candidate factors such as apple sizes and smells, along with some meta-information such as annotation guidelines. 
  Based on the candidate factors, \ours then (b) prompts another LLM to annotate the unstructured review into structured data. (c) The CD algorithm then finds causal relations among the factors, and constructs feedback based on samples where the ratings can not be well explained by the existing factors. By looking into the new samples, the LLM is expected to use more related knowledge to uncover the desired causal factors. 
  }
  \label{fig:illustration}
\end{figure*}


Each factor is defined by a specific prompt instruction so that an LLM can decide a value based on a sample $\vx$ (drawn from $\mX$).
For example, given a customer review on an apple $\vx^{(j)}$, an LLM-proposed factor $\vw_i$ with $\gC=\left\{ -1, 0, 1\right\}$ could be described as: $\vw_i(\vx^{(j)})=1$ if the customer appreciates the sweetness of the apple; $\vw_i(\vx^{(j)})=-1$ if  the customer is disappointed about the sweetness; and $\vw_i(\vx^{(j)})=0$ if the sweetness has not been mentioned. In addition, a factor instruction can require LLM to utilize external tools for further processing. 
\paragraph{Active causal representation elicitation}
In this work, we propose \oursfull (\ours) framework to actively elicit representations consisting of factors $\varset{W}$ to reveal causality in different settings. 
\begin{itemize}
    \item Given a target variable $Y \in \varset{U}$, \oursMB (Section~\ref{sec:causal_feedback}) finds factors $\varset{W}$ that serves a Markov Blanket of $Y$ with respect to $\mX$, i.e., $Y \ind \mX \mid \varset{W}$. That is, $\varset{W}$ captures all relevant information about $Y$ from $\mX$;
    \item Furthermore, based on \oursMB, \oursPAG (Section~\ref{subsec:introduce-coat-pag}) extends the structured input set to $\varset{V} \triangleq \varset{W} \cup \varset{U}$ so that the causal information, i.e., skeleton or arrow heads, among  $\varset{U}$ can be maximally revealed in the partial ancestral graph $\gP_\varset{V}$ with the node set $\varset{V}$;
    \item In causal inference setting, for a treatment-effect pair $T,Y \in \varset{U}$, \oursGAS (Section~\ref{subsec:intro-coat-adj}) extends structured input set to $\varset{V} \triangleq \varset{W} \cup \varset{U}$ so that valid adjustment sets within $\varset{U}$ for the causal effect on $T \rightarrow Y$ can be found by graphical criterion on the partial ancestral graph $\gP_\varset{V}$ with the node set $\varset{V}$. 
\end{itemize}


To introduce the key components of \ours framework, we take the illustration in Figure~\ref{fig:illustration} as a running example, where \ours aims to elicit useful factors to serve as a Markov Blanket of a target variable $Y$ (rating score) with respect to text reviews through multiple rounds of iteration. 
We use the superscript $t$ to refer to the input and output of LLM at the $t$-th round. We denote the union of results from the first $t$ rounds by the superscript ``$\le t$''.



\paragraph{Factor proposal} To obtain useful high-level factors from the rich world knowledge of LLMs, \ours employs a prompt $\vp$ and a few samples $\widehat{\gD} \subseteq \gD$. 
Specifically, in Fig.~\ref{fig:prompt_example}, $\vp$ contains three components: \emph{samples}, \emph{instructions}, and \emph{format control}. To encourage LLM to focus on the information related to the target variable $Y$, \textit{samples} are grouped by the value of $Y$. The \textit{instruction} requires $\Psi$ to give each proposed factor $\vw_i$ a concrete description of the mapping $\vw_i(\cdot)$, like how to decide the factor values. 
In addition, the metadata about the task, such as the task description and context, can also be incorporated if available.
The prompt $\vp$ essentially imitates the typical process where human experts abstract and define high-level variables. 
The set of resulting factors in the $t$-th round, defined with natural language by the LLM $\Psi$, is denoted as $\gW^t  = \Psi(\vp^t, \widehat{\gD}^t)$. We merge them with factors proposed in the previous rounds to update the set of all factors $\gW^{\le t}  = \gW^1 \cup \cdots \cup \gW^t$.

\begin{figure*}[t]
    \centering
    \vspace{-0.1in}
    \includegraphics[width=1\textwidth, trim=50 80 50 80, clip]{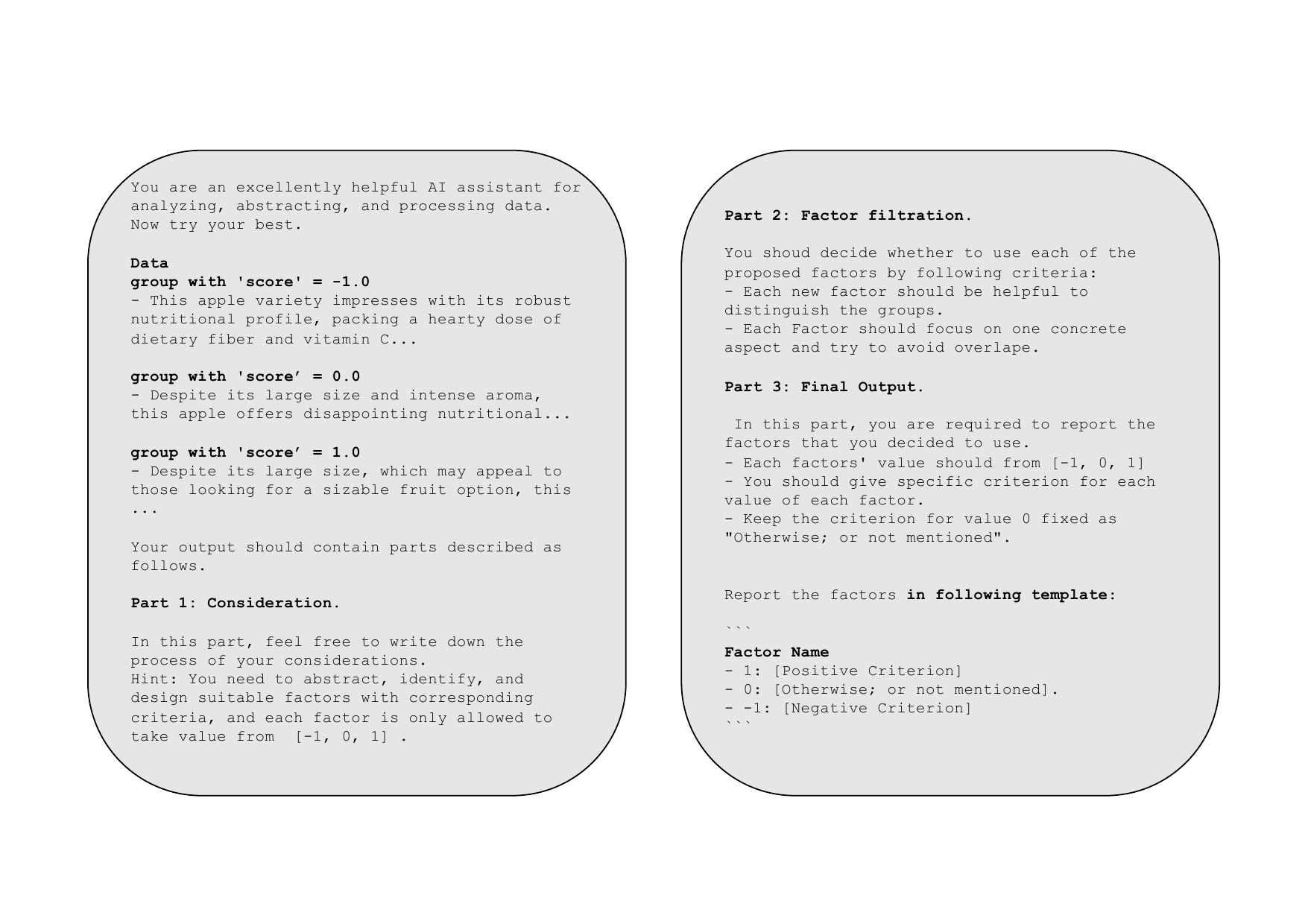}
    \vspace{-0.1in}
    \caption{Illustration of the prompt template for factor proposal in \ours.}
    \label{fig:prompt_example}
\end{figure*}


\paragraph{Factor parsing}
Once we obtain the candidate factors, we then collect the values of the factors from the unstructured observations. In prior works, they are usually collected from human experts according to the given factors~\citep{causation_1st}.
To do so, another LLM $\Psi_s$ is instructed to read the annotated guidelines of each variable $\vw_i$ and parse the unstructured observations into structured or tabular data, with its entity at the $i$-th column and the $j$-th row:
\begin{equation}\label{eq:factor_parsing}
    \vw_i\big( \, \vx^{(j)} \, \big) := \Psi\big( \, \vx^{(j)}, \, \vw_i, \, \vp_p \, \big),
\end{equation}
where $\vp_p$ refers to the additional instruction to prompt $\Psi_s$ to parse the observed data.

When the data curation of the proposed factors requires additional domain-specific knowledge/skills (e.g., intervening on the external environments) that the LLMs do not acquire, we could fetch $\vw_i(\vx^{(j)})$, the value of each factor $\vw_i$ for each sample $\vx^{(j)}$, through some external process~\citep{toolformer,llm_agent}. For example, studying the causes of a disease requires annotating relevant symptoms from diagnosis records and conducting additional medical checks~\citep{neuropanic}.
Our experiments show that \ours can effectively extract the hidden factors under both schemes.

\paragraph{Causal discovery} Given the candidate factors $\varset{W}^{\le t}$ with their associated factor values on dataset $\gD$, a CD algorithm $\gA$ (e.g., FCI~\citep{fci}) is used to reason about the causal structure based on the parsed data:
\begin{equation}
    {\gG}^t=\gA \\ \Big( \ \varset{W}^{\le t}\cup\{Y\} \ \Big),
\end{equation}
where ${\gG}^t$ is the discovered causal structure. In general, any CD approach with suitable theoretical assumptions could be used for $\gA$. More detailed theoretical discussions are presented in Section~\ref{subsec:MB-identification}. In this work, we demonstrate the idea of \ours via the FCI algorithm~\citep{auto_causal} as it is flexible with respect to different functional classes of the underlying generation process, allowing for the existence of latent confounders~\citep{fci}, which aligns well with our objective.

%

\paragraph{Improving factor proposal with causal feedback} 
LLMs require proper prompts to fully unlock their capabilities~\citep{cot,llm_self-improve,prompt_reasoning_survey}. 
When it comes to factor proposing, it is also hard for LLMs to propose all factors at once. 
Nevertheless, from the causal discovery results, we could find useful information and thus provide feedback to further improve the factor proposal:
\begin{equation}\label{eq:feedback}
    \Big( \, \widehat{\gD}^{t+1}, \, \vp^{t+1} \, \Big)= \gF\Big( \, {\gG}^t, \, \gD, \, \vp^t \, \Big),
\end{equation}
where $\gF$ samples specific examples from $\gD$ and constructs new prompts according to the results of $\gA$ for the next round of factor proposal.
For example, FCI is able to imply the existence of latent confounders, from which we could refine $\vp$ to prompt $\Psi$ to focus on the corresponding factors.


\begin{figure}
    \vspace{-0.1in}
    \centering
    \subfigure[Parent]{
        \begin{minipage}[c][2.5cm][c]{0.22\linewidth}
            \centering
            \begin{tikzpicture}[
                baseline=(current bounding box.center),
                node distance=0.6cm,
                every node/.style={circle, draw, minimum size=6mm, inner sep=1pt},
                edge/.style={->, >=Latex, semithick},
                observed/.style={circle, draw, fill=black!10, minimum size=6mm, inner sep=1pt},
                oedge/.style={line width=0.8pt, decoration={markings, 
                    mark=at position 0.1 with {\node[circle, draw, fill=white, inner sep=0pt, minimum size=4pt] {};}, 
                    mark=at position 0.9 with {\node[circle, draw, fill=white, inner sep=0pt, minimum size=4pt] {};}}, 
                    postaction=decorate},
                circnormaledge/.style={line width=0.8pt, decoration={markings, 
                    mark=at position 0.1 with {\node[circle, draw, fill=white, inner sep=0pt, minimum size=4pt] {};}, 
                    mark=at position 0.99 with {\arrow{Latex}}}, 
                    postaction=decorate, shorten >=2pt},
            ]
            \node (new) {$U$};
            \node[below right=0.6cm and 0.2cm of new] (Y) [observed] {$Y$};
            
            \draw[edge] (new) -- (Y);
            \end{tikzpicture}
        \end{minipage}
    }
    \subfigure[Child]{
        \begin{minipage}[c][2.5cm][c]{0.22\linewidth}
            \centering
            \begin{tikzpicture}[
                baseline=(current bounding box.center),
                node distance=0.6cm,
                every node/.style={circle, draw, minimum size=6mm, inner sep=1pt},
                edge/.style={->, >=Latex, semithick},
                observed/.style={circle, draw, fill=black!10, minimum size=6mm, inner sep=1pt},
                oedge/.style={line width=0.8pt, decoration={markings, 
                    mark=at position 0.1 with {\node[circle, draw, fill=white, inner sep=0pt, minimum size=4pt] {};}, 
                    mark=at position 0.9 with {\node[circle, draw, fill=white, inner sep=0pt, minimum size=4pt] {};}}, 
                    postaction=decorate},
                circnormaledge/.style={line width=0.8pt, decoration={markings, 
                    mark=at position 0.1 with {\node[circle, draw, fill=white, inner sep=0pt, minimum size=4pt] {};}, 
                    mark=at position 0.99 with {\arrow{Latex}}}, 
                    postaction=decorate, shorten >=2pt},
            ]
            \node (new) {$U$};
            \node[below right=0.6cm and 0.2cm of new] (Y) [observed] {$Y$};
            
            \draw[edge] (Y) -- (new);
            \end{tikzpicture}
        \end{minipage}
    }
    \subfigure[Confounded]{
        \begin{minipage}[c][2.5cm][c]{0.22\linewidth}
            \centering
            \begin{tikzpicture}[
                baseline=(current bounding box.center),
                node distance=0.6cm,
                every node/.style={circle, draw, minimum size=6mm, inner sep=1pt},
                edge/.style={->, >=Latex, semithick},
                observed/.style={circle, draw, fill=black!10, minimum size=6mm, inner sep=1pt},
                oedge/.style={line width=0.8pt, decoration={markings, 
                    mark=at position 0.1 with {\node[circle, draw, fill=white, inner sep=0pt, minimum size=4pt] {};}, 
                    mark=at position 0.9 with {\node[circle, draw, fill=white, inner sep=0pt, minimum size=4pt] {};}}, 
                    postaction=decorate},
                circnormaledge/.style={line width=0.8pt, decoration={markings, 
                    mark=at position 0.1 with {\node[circle, draw, fill=white, inner sep=0pt, minimum size=4pt] {};}, 
                    mark=at position 0.99 with {\arrow{Latex}}}, 
                    postaction=decorate, shorten >=2pt},
            ]
            \node (new) {$U$};
            \node[below right=0.6cm and 0.2cm of new] (Y) [observed] {$Y$};
            \node[above right=0.6cm and 0.2cm of Y] (W) [observed] {$W$};
            
            \draw[edge] (new) -- (Y);
            \draw[edge] (W) -- (Y);
            \draw[edge] (W) -- (new);
            \end{tikzpicture}
        \end{minipage}
    }
    \subfigure[Spouse]{
        \begin{minipage}[c][2.5cm][c]{0.22\linewidth}
            \centering
            \begin{tikzpicture}[
                baseline=(current bounding box.center),
                node distance=0.6cm,
                every node/.style={circle, draw, minimum size=6mm, inner sep=1pt},
                edge/.style={->, >=Latex, semithick},
                observed/.style={circle, draw, fill=black!10, minimum size=6mm, inner sep=1pt},
                oedge/.style={line width=0.8pt, decoration={markings, 
                    mark=at position 0.1 with {\node[circle, draw, fill=white, inner sep=0pt, minimum size=4pt] {};}, 
                    mark=at position 0.9 with {\node[circle, draw, fill=white, inner sep=0pt, minimum size=4pt] {};}}, 
                    postaction=decorate},
                circnormaledge/.style={line width=0.8pt, decoration={markings, 
                    mark=at position 0.1 with {\node[circle, draw, fill=white, inner sep=0pt, minimum size=4pt] {};}, 
                    mark=at position 0.99 with {\arrow{Latex}}}, 
                    postaction=decorate, shorten >=2pt},
            ]
            \node (new) {$U$};
            \node[below right=0.6cm and 0.2cm of new] (Y) [observed] {$Y$};
            \node[above right=0.6cm and 0.2cm of Y] (W) [observed] {$W$};
            
            \draw[edge] (Y) -- (W);
            \draw[edge] (new) -- (W);
            \end{tikzpicture}
        \end{minipage}
    }
    \caption{Illustration of variables that could be discovered with \ours. 
    Let $Y$ be the target variable, and $W$ be a factor that has been discovered, and also assume a latent variable $U=\widehat{\vw}(\mX) \in \MB(Y)$. Conditioning on $W$ facilitates the discovery of $U$. 
    }
    \label{fig:scm}
    \vspace{0.1in}
\end{figure}

\subsection{\ours for Markov Blanket} 
\label{sec:causal_feedback}

\begin{algorithm}[t]
        \caption{\ourst for Markov Blanket (\oursMB)}
        \label{alg:coat-main}
    \textbf{Input:} Dataset $\gD = \big\{ \, (\vx^{(j)}, \, y_j ) \, \big\}_{j=1}^{n}$; LLM for factor proposal $\Psi$; Model for factor parsing $\Psi_s$; causal discovery algorithm $\gA$; Maximal rounds $t_{\text{Max}}$.
    \begin{enumerate}
        \item Let $\gM = \emptyset$ be the initial Markov Blanket. Let $\varset{W}=\Big\{ \,  W_1, \, \cdots \, , \, W_m \Big\}$ be the set of all the proposed factors from $\mX$. Initially, $m=0$.
        \item Repeat with loop index $t$ starting from $1$, stop when $t=t_{\text{Max}}$ or $\gM$ is unchanged:
            \begin{enumerate}
                \item Do clustering on samples with $\varset{M}$; use random partition if $\varset{M}$ is empty.
                \item Take $\widehat{\gD}^t$ as the cluster with highest conditional entropy w.r.t. $Y$.
                \item LLM $\Psi$ propose new factors $W_{m+1}, \cdots, W_{m'}$, then update $\varset{W}$ and set $m=m'$.
                \item For each new factor, LLM $\Psi_s$ parse out its values on $\big\{\vx^{(j)}\big\}_{j \in [n]}$.
                \item if $\varset{M}$ is not $Y$'s Markov Blanket w.r.t. $\varset{W}$, update $\gM$ with $\gA$.
            \end{enumerate} 
        \item Get causal graph $\gG$ over $\{Y\}\cup\varset{W}$ with $\gA$.
    \end{enumerate}
    \textbf{Output:} Factors $\varset{W}$, Markov Blanket set $\varset{M}$, causal graph $\gG$.
\end{algorithm}



In this section, we propose a method to identify a set of \textit{structured} high-level factors $\varset{M}$ from \textit{unstructured} observations $\mX$ that serve as a Markov Blanket of the given target variable $Y$. The basic idea is intuitive: we iteratively verify the proposed factors from LLMs, and, in the meantime, encourage LLMs to propose diverse factors based on their observations. All proposed factors will be put into the factor pool $\varset{W}$ to serve as proxies of the unstructured variable $\mX$ in statistical testing. After each iteration, we update the choice of Markov Blanket $\varset{M} \subseteq \varset{W}$ with respect to the current factor pool, i.e., $Y \ind \varset{W} \setminus \varset{M}  \mid \varset{M}$.

The concrete procedure is stated in Algorithm~\ref{alg:coat-main}. In the $t$-th loop, we first conduct clustering on the current Markov Blanket set. The intuition is to strengthen the correlation between $Y$ and the potential factors. If there exist such a potential factor, say, $\widehat{\vw}$, that means: 
\begin{equation}\label{eq:markov_property}
     H\big( \, Y \, | \, \varset{M} \, \big) > H\Big(Y \, | \, \varset{M}, \, \widehat{\vw}(\mX)\Big),
\end{equation}
where $H(\cdot)$ refers to the entropy. 
This is because $Y$ should be independent with other factors from $\mX$ conditioning on the Markov Blanket.
As shown in Figure~\ref{fig:scm}, finding factors satisfying 
Inequality~\ref{eq:markov_property} progressively expands the discovered factors and pushes $\varset{M}$ to a valid Markov Blanket. 
Therefore, to find the desired factor, we are motivated to select a suitable $\widehat{\gD}^{t}$ for the current iteration such that $\widehat{\gD}^{t}=\text{$\argmax$}_{\widehat{\gD}\subseteq \gD} H_{\widehat{\gD}} (Y|\varset{M}),$
where $\varset{M}$ can not well explain $Y$. 
This problem can be interpreted as selecting the hardest dataset $\widehat{\gD}^*$ for predicting $Y$ with existing factors.
In our experiments, we implement the classification via clustering with respect to $\varset{M}$. The clustering elicits $C$ groups $\widehat{\gD}_c:=\big\{\vx_i \, \, \text{for} \, \, i \in \gI_c\big\}$: $\gI_1, \cdots, \gI_C = \text{K-Means}\big(\varset{M}\big)$.
We then take the group of samples with the largest conditional entropy to construct the feedback. In the rest of the $t$-th loop, we utilize the \ours components stated in Section~\ref{sec:method} to instruct LLMs to propose candidate factors and then parse out their values on each observation. The Markov Blanket can be obtained by statistical testing or the causal graph from $\gA$.

In practice, many factors, such as the LLM capabilities, data faithfulness, and prompt templates, could affect the success of Algorithm~\ref{alg:coat-main}. Therefore, in the Section~\ref{subsec:MB-identification}, we formalize the required assumptions on the LLMs' ability, and establish a theoretical guarantee on the factor identification for Markov Blanket.

\subsection{\ours for Partial Ancestral Graph} 
\label{subsec:introduce-coat-pag}

The causal information may not be maximally revealed due to insufficient node set. For example. as we have seen in Figure~\ref{fig:Q2_example}, we could further reveal 
the skeleton and arrow heads among $\text{MB}(Y)$ if we can include one additional factor $X_3 \notin \text{MB}(Y)$. 

\begin{figure}[t]
    \centering
    \subfigure[Ground-truth causal graph]{
        \begin{minipage}[c][3cm][c]{.4\linewidth} 
            \centering
            \vspace*{-0.2in} 
            \begin{tikzpicture}[
                baseline=(current bounding box.center),
                node distance=0.6cm,
                every node/.style={circle, draw, minimum size=6mm, inner sep=1pt},
                edge/.style={->, >=stealth, semithick},
                dottededge/.style={->, >=stealth, semithick, dotted}
            ]
            \node (BN) {$B_N$};
            \node[right=of BN] (B1) {$B_1$};
            \node[right=of B1] (Y) {Y};
            \node[right=of Y] (A) {A};
            \node[above left=0.2cm and 0.6cm of BN] (C) {C};
            \node[below left=0.2cm and 0.6cm of BN] (D) {D};
            \draw[edge] (C) -- (BN);
            \draw[edge] (D) -- (BN);
            \draw[dottededge] (BN) -- (B1);
            \draw[edge] (B1) -- (Y);
            \draw[edge] (Y) -- (A);
            \end{tikzpicture}
        \end{minipage}
        \label{fig:Example_for_motivating_coatpag_1}
    }
    \subfigure[PAG over $\MB^{(N)}(Y)$]{
        \begin{minipage}[c][3cm][c]{.4\linewidth} 
            \centering
            \vspace*{-0.2in} 
            \begin{tikzpicture}[
                baseline=(current bounding box.center),
                node distance=0.6cm,
                every node/.style={circle, draw, minimum size=6mm, inner sep=1pt},
                oedge/.style={line width=0.8pt, decoration={markings, 
                    mark=at position 0.1 with {\node[circle, draw, fill=white, inner sep=0pt, minimum size=4pt] {};}, 
                    mark=at position 0.9 with {\node[circle, draw, fill=white, inner sep=0pt, minimum size=4pt] {};}}, 
                    postaction=decorate}
            ]
            \node (BN) {$B_N$};
            \node[right=of BN] (B1) {$B_1$};
            \node[right=of B1] (Y) {Y};
            \node[right=of Y] (A) {A};
            \draw[oedge] (BN) -- (B1);
            \draw[oedge] (B1) -- (Y);
            \draw[oedge] (Y) -- (A);
            \end{tikzpicture}
        \end{minipage}
        \label{fig:Example_for_motivating_coatpag_2}
    }
    \caption{An example where arrow heads cannot be revealed within $\MB^{(N)}(Y)$.}
    \label{fig:motivating_coatpag}
\end{figure}

One straightforward approach is to apply \oursMB recursively to the discovered variables to obtain $\text{MB}^{(2)}(Y)$, i.e., the Markov Blanket of the Markov Blanket of $Y$. 
Although it helps revealing the skeleton information, the arrow heads information can still be missing. Essentially, for any positive integer $N$, there exists a causal graph whose arrow heads cannot be revealed by obtaining $\text{MB}^{(N)}(Y)$: as shown in Figure~\ref{fig:motivating_coatpag}(a), simply considering a chain structure $B_N \rightarrow \cdots B_1 \rightarrow Y \rightarrow A $ with a V-structure at the end away from $Y$: $C \rightarrow B_N \leftarrow D$. When only $\text{MB}^{(N)}(Y)$ are observed, as shown in Figure~\ref{fig:motivating_coatpag}(b), $C$ and $D$ are latent variables; and thus there is no V-structure in the PAG restricted on $\text{MB}^{(N)}(Y)$. Consequently, only the skeleton can be identified.

Motivated by the observation above, we propose a method called \oursPAG to adaptively decide on which node we should apply \oursMB to expand the current node set by its Markov Blanket.
As we stated in Algorithm~\ref{alg:coat-search}, the purpose of \oursPAG is to ensure both the skeleton and the arrow heads within the initial factor set can be maximally revealed in the PAG on the output factor set.
In the first step, it simply expands the factor set $\varset{U}$ to $\MB(\varset{U})$ to ensure the correctness of the skeleton. In the following steps, it conducts Broad-First Search to apply \oursMB on the nodes that are adjacent to an undetermined edge head ($\circ$).

\begin{algorithm}[t]
        \caption{\ourst for Partial Ancestral Graph (\oursPAG)}
        \label{alg:coat-search}
    \textbf{Input:} Dataset $\gD = \big\{ \, (\vx^{(j)}, \, \vu^{(j)}) \, \big\}_{j=1}^{n}$; LLM for factor proposal $\Psi$; Model for factor parsing $\Psi_s$; causal discovery algorithm $\gA$; Maximal rounds $t_{\text{Max}}$. 
    \begin{enumerate}
        \item Initialize $\varset{W}$ with $\MB(\varset{U}) \setminus \varset{U}$ by performing Algorithm~\ref{alg:coat-main} with $\Psi$, $\Psi_s$, $\gA$, and $t_{\text{Max}}$.
        \item Let $\gP_\varset{V}$ be the partial ancestral graph over $\varset{V}\triangleq\varset{U} \cup \varset{W}$, and initialize $\varset{S}_{\text{mark}} = \varset{U}$.
        \item Let $\varset{S}_{\text{target}}=\Big\{ \, \alpha \in \varset{V} \setminus \varset{S}_{\text{mark}} \mid \text{$\gP_\varset{V}$ contains edges like } \alpha \circ \! \! - \! \! * \beta \text{ for some $\beta \in \varset{V}$} \, \Big\}$
        \item Repeat with loop when $\varset{S}_{\text{target}}$ is not empty:
            \begin{enumerate}
                \item Take one arbitrary element $\alpha$ from $\varset{S}_{\text{target}}$.
                \item Extend $\varset{W}$ with $\text{MB}(\alpha) \setminus \varset{V}$ by performing Algorithm~\ref{alg:coat-main} on $\alpha$.
                \item Put $\alpha$ into $\varset{S}_{\text{mark}}$, then update $\gP_\varset{V}$ and $\varset{S}_{\text{target}}$.
            \end{enumerate}
    \end{enumerate}
    \textbf{Output:} Factor set $\varset{W}$, causal graph $\gP_\varset{V}$ on $\varset{V}\triangleq\varset{U} \cup \varset{W}$.
\end{algorithm}

\begin{figure}[t]
    \centering
    \subfigure[Ground-truth causal graph]{
        \begin{minipage}[c][2.5cm][c]{0.9\linewidth}
            \centering
            \begin{tikzpicture}[
                baseline=(current bounding box.center),
                node distance=0.6cm,
                every node/.style={circle, draw, minimum size=6mm, inner sep=1pt},
                edge/.style={->, >=Latex, semithick},
                boxnode/.style={rectangle, draw, fill=black!20, minimum size=6mm, inner sep=1pt}
            ]
            \node (E) {E};
            \node[above right=0.2cm and 0.6cm of E] (C) {C};
            \node[below right=0.2cm and 0.6cm of E] (D) {D};
            \node[above right=0.2cm and 0.6cm of D] (B) {B};
            \node[right=of B] (Y) {Y};
            \node[right=of Y] (A) {A};
            
            \draw[edge] (E) -- (C);
            \draw[edge] (E) -- (D);
            \draw[edge] (C) -- (B);
            \draw[edge] (D) -- (B);
            \draw[edge] (B) -- (Y);
            \draw[edge] (Y) -- (A);
            \end{tikzpicture}
        \end{minipage}
        \label{fig:coatpag_example-COATPAG_gt}
    }
    
    \subfigure[Loop 0: $\text{S}_\text{target} = \left\{A, B\right\}$]{
        \begin{minipage}[c][1.8cm][c]{0.4\linewidth}
            \centering
            \begin{tikzpicture}[
                baseline=(current bounding box.center),
                node distance=0.6cm,
                every node/.style={circle, draw, minimum size=6mm, inner sep=1pt},
                targetnode/.style={circle, draw, fill=blue!10, minimum size=6mm, inner sep=1pt},
                oedge/.style={line width=0.8pt, decoration={markings, 
                    mark=at position 0.1 with {\node[circle, draw, fill=white, inner sep=0pt, minimum size=4pt] {};}, 
                    mark=at position 0.9 with {\node[circle, draw, fill=white, inner sep=0pt, minimum size=4pt] {};}}, 
                    postaction=decorate}
            ]
            \node (B) [targetnode] {B};
            \node[right=of B] (Y)  {Y};
            \node[right=of Y] (A) [targetnode] {A};
            
            \draw[oedge] (B) -- (Y);
            \draw[oedge] (Y) -- (A);
            \end{tikzpicture}
        \end{minipage}
        \label{fig:coatpag_example-COATPAG_loop0}
    }
    \hfill
    \subfigure[Loop 1: $\text{S}_\text{target} = \left\{B\right\}$]{
        \begin{minipage}[c][1.8cm][c]{0.4\linewidth}
            \centering
            \begin{tikzpicture}[
                baseline=(current bounding box.center),
                node distance=0.6cm,
                every node/.style={circle, draw, minimum size=6mm, inner sep=1pt},
                targetnode/.style={circle, draw, fill=blue!10, minimum size=6mm, inner sep=1pt},
                oedge/.style={line width=0.8pt, decoration={markings, 
                    mark=at position 0.1 with {\node[circle, draw, fill=white, inner sep=0pt, minimum size=4pt] {};}, 
                    mark=at position 0.9 with {\node[circle, draw, fill=white, inner sep=0pt, minimum size=4pt] {};}}, 
                    postaction=decorate}
            ]
            \node (B) [targetnode] {B};
            \node[right=of B] (Y)  {Y};
            \node[right=of Y] (A)  {A};
            
            \draw[oedge] (B) -- (Y);
            \draw[oedge] (Y) -- (A);
            \end{tikzpicture}
        \end{minipage}
        \label{fig:coatpag_example-COATPAG_loop1}
    }
    
    \subfigure[Loop 2: $\text{S}_\text{target} = \left\{C, D\right\}$]{
        \begin{minipage}[c][2.5cm][c]{0.4\linewidth}
            \centering
            \begin{tikzpicture}[
                baseline=(current bounding box.center),
                node distance=0.6cm,
                every node/.style={circle, draw, minimum size=6mm, inner sep=1pt},
                targetnode/.style={circle, draw, fill=blue!10, minimum size=6mm, inner sep=1pt},
                oedge/.style={line width=0.8pt, decoration={markings, 
                    mark=at position 0.1 with {\node[circle, draw, fill=white, inner sep=0pt, minimum size=4pt] {};}, 
                    mark=at position 0.9 with {\node[circle, draw, fill=white, inner sep=0pt, minimum size=4pt] {};}}, 
                    postaction=decorate}
            ]
            \node (B) {B};
            \node[above left=0.2cm and 0.6cm of B] [targetnode] (C) {C};
            \node[below left=0.2cm and 0.6cm of B] [targetnode] (D) {D};
            \node[right=of B] (Y)  {Y};
            \node[right=of Y] (A)  {A};
            
            \draw[oedge] (C) -- (B);
            \draw[oedge] (D) -- (B);
            \draw[oedge] (C) -- (D);
            \draw[oedge] (B) -- (Y);
            \draw[oedge] (Y) -- (A);
            \end{tikzpicture}
        \end{minipage}
        \label{fig:coatpag_example-COATPAG_loop2}
    }
    \hfill
    \subfigure[Loop 3: $\text{S}_\text{target} = \left\{D, E\right\}$]{
        \begin{minipage}[c][2.5cm][c]{0.4\linewidth}
            \centering
            \begin{tikzpicture}[
                baseline=(current bounding box.center),
                node distance=0.6cm,
                every node/.style={circle, draw, minimum size=6mm, inner sep=1pt},
                targetnode/.style={circle, draw, fill=blue!10, minimum size=6mm, inner sep=1pt},
                oedge/.style={line width=0.8pt, decoration={markings, 
                    mark=at position 0.1 with {\node[circle, draw, fill=white, inner sep=0pt, minimum size=4pt] {};}, 
                    mark=at position 0.9 with {\node[circle, draw, fill=white, inner sep=0pt, minimum size=4pt] {};}}, 
                    postaction=decorate},
                circnormaledge/.style={line width=0.8pt, decoration={markings, 
                    mark=at position 0.1 with {\node[circle, draw, fill=white, inner sep=0pt, minimum size=4pt] {};}, 
                    mark=at position 0.99 with {\arrow{Latex}}}, 
                    postaction=decorate, shorten >=2pt},
                normaledge/.style={->, >=Latex, semithick}
            ]
            
            \node (E)  [targetnode] {E};
            \node[above right=0.2cm and 0.6cm of E] (C) {C};
            \node[below right=0.2cm and 0.6cm of E]  [targetnode] (D) {D};
            \node[above right=0.2cm and 0.6cm of D] (B) {B};
            \node[right=of B] (Y) {Y};
            \node[right=of Y] (A) {A};
            
            \draw[oedge] (E) -- (C);
            \draw[oedge] (E) -- (D);
            \draw[circnormaledge] (C) -- (B);
            \draw[circnormaledge] (D) -- (B);
            \draw[normaledge] (B) -- (Y);
            \draw[normaledge] (Y) -- (A);
            \end{tikzpicture}
        \end{minipage}
        \label{fig:coatpag_example-COATPAG_loop3}
    }
    
    \caption{An example where \oursPAG is conducted in several loops. $\text{S}_\text{target}$ are colored.}
    \label{fig:coatpag_example}
\end{figure}


The procedure of \oursPAG is illustrated via the example in Figure~\ref{fig:coatpag_example}. The skeleton is identified at the beginning; however, the arrow heads are present only after factor $E$ is included. The presence of $E$ will introduce a V-structure $C \rightarrow B \leftarrow D$ and thus several edges can be oriented. 

In Section~\ref{subsec:pag-identifiability}, we formally show that for any subset $\varset{U}'$ of the extended factor set $\varset{V}$ from Algorithm~\ref{alg:coat-search}, the skeleton and arrow heads will be maximally revealed if $\MB(\varset{U}') \subseteq \varset{V}$. For arrow tails, we show that they can be maximally revealed if it is in a discriminating path within $\varset{U}'$.

\subsection{\ours Extension on Adjustment Set}
\label{subsec:intro-coat-adj}

With the reliable causal representation elicitation frameworks established in the previous sections, we are able to further apply \ours approach to causal inference. Specifically, we consider two variables $T,Y \in \varset{U}$, where $T$ is the treatment variable, $Y$ is the effect variable, and $\mX$ is the associated unstructured data. The goal is to propose a set of factors from $\mX$ to serve as an adjustment set for the total causal effect estimation from $T$ to $Y$, when it is possible. We seek an algorithm to output a factor set $\varset{W}$ so that an adjustment set for $(T,Y)$ can be constructed by applying existing graphical criterion over the causal graph $\gP_\varset{V}$ on $\varset{V} \triangleq \varset{W} \cup \varset{U}$.

The construction of an adjustment set is not always feasible even when all factors are given. For example, according to the sound and complete graphical characterization by~\cite{perkovi2018complete}, if a PAG does not satisfy the \textit{amenability} condition, then no adjustment set can be found in this graph. Therefore, one inevitable assumption is the existence of such adjustment sets that meet the graphical criterion.

\oursGAS is given in Algorithm~\ref{alg:coat-t-y}: \oursGAS first includes all the important factors in the graph (step 1 to 3), and then further extends the node set to reveal the causal structure (step 4). The most important part is to maintain the $\varset{S}_{\text{path}}$ defined in step 2. The purpose of this set is to track and include the intermediate nodes on the path between $T$ and $Y$, as well as their possible descendants. As we shall show in Section~\ref{subsec:adjustment-set}, including the intermediate nodes can preserve the amenability, and including their possible descendants can help to exclude invalid nodes. In addition, Step 1 provides a necessary initialization for constructing $\varset{S}_{\text{path}}$; Step 3 is to exhausting $\varset{S}_{\text{path}}$ iteratively; and Step 4 reveals the causal structure among those factors.

After applying \oursGAS, the adjustment sets can be constructed by searching subsets that meet the generalized adjustment criterion~\citep{perkovi2018complete}. In Section~\ref{subsec:adjustment-set}, for the output factor set $\varset{W}$ (and also $\varset{V} \triangleq \varset{W} \cup \varset{U}$), we show that all the valid adjustment set relative to $(T,Y)$ in $\varset{U}' \subseteq \varset{V}$ will be preserved if $\MB(\varset{U}') \subseteq \varset{V}$.

\begin{algorithm}[t]
        \caption{\ourst for Adjustment Sets (\oursGAS)}
        \label{alg:coat-t-y}
    \textbf{Input:} Dataset $\gD = \big\{ \, (\vx^{(j)}, \, \vu^{(j)})\, \big\}_{j=1}^{n}$; specifications of $T, Y \in \varset{U}$; LLM $\Psi$ for factor proposal; a factor parsing model $\Psi_s$; a causal discovery algorithm $\gA$; and maximal rounds $t_{\text{Max}}$. 
    \begin{enumerate}
        \item Initialize $\varset{W}$ with $\MB(\varset{U}) \setminus \varset{U}$ by performing Algorithm~\ref{alg:coat-main} with $\Psi$, $\Psi_s$, $\gA$, and $t_{\text{Max}}$; and initialize $\varset{S}_{\text{mark}}$ by $\varset{U}$. $\gP_\varset{V}$ is the causal graph over $\varset{V} \triangleq \varset{U} \cup \varset{W}$ estimated by $\gA$.
        \item Let $\varset{S}_{\text{path}}=\Big\{ \, \alpha \in \text{PossDe}(\beta, \gP_\varset{V}) \setminus \varset{S}_{\text{mark}} \mid \text{ $\beta$ is on a path between $T$ and $Y$ in $\gP_\varset{V}$} \, \Big\}$
        \item Repeat with loop when $\varset{S}_{\text{path}}$ is not empty:
            \begin{enumerate}
                \item Take one arbitrary element $\alpha$ from $\varset{S}_{\text{path}}$.
                \item Extend $\varset{W}$ with $\text{MB}(\alpha) \setminus \varset{V}$ by performing Algorithm~\ref{alg:coat-main} on $\alpha$.
                \item Put $\alpha$ into $\varset{S}_{\text{mark}}$, then update $\gP_\varset{V}$ and $\varset{S}_{\text{path}}$.
            \end{enumerate}
        \item Update $\varset{W}$ and $\gP_\varset{V}$ by applying Algorithm~\ref{alg:coat-search} treating $\varset{V}$ as structured data, i.e., using the augmented dataset $\gD' \triangleq \big\{ \, (\vx^{(j)}, \, \vv^{(j)}) \, \big\}_{j=1}^{n}$.
    \end{enumerate}
    \textbf{Output:} Factor set $\varset{W}$, causal graph $\gP_\varset{V}$.
\end{algorithm}

\subsection{Practical Discussions} 
\label{sec:practice_discussion}
Supplementary to the aforementioned methodology discussion, in this section, we elaborate more on the practical considerations of \ours. 

\paragraph{Prompt template} LLMs instruction-following ability, and its context window, may affect the satisfaction to the constraints of the prompt template.
Including more data samples or background knowledge may improve the $p$ and $ C_\Psi$, but it is more challenging for the LLM.  

\paragraph{Modern causal discovery} We use FCI in this paper in order to illustrate the idea. 
To attain identifiability better than the Markov equivalent class, one can choose more advanced CD methods under different assumptions, see Section~\ref{sec:Preliminary-and-Related-Work} if interested.
We also discuss some cases where we need to handle them properly in practice with LLMs.

\paragraph{Factor filtering} LLMs may output several factors with similar semantics or exhibit multicollinearity in the annotated data, which will hinder the causal discovery process. To mitigate the issue, one could do factor filtering that adopts PCA or early conditional independence tests given the currently discovered variables in the Markov Blanket to detect and eliminate these variables.

\paragraph{Factor pool} LLMs may discover useful factors in early rounds while being discarded. For example, the underlying spouse variables of the target label $Y$ may be independent of $Y$ without conditioning on their common children variables. To resolve the issue, we could introduce a factor pool that stores the candidate variables proposed in the past, and replay the variables that have not been passed by conditional independence tests with existing variables in the Markov Blanket for a double check.

\paragraph{Cost analysis} The key component in the cost of \ours is the number of candidate factors that are proposed by LLMs from unstructured data. For each proposed candidate factor, the cost, $c_\text{candidate}$, is on evaluating the factor values on each of the $N$ samples, i.e., $c_\text{candidate}=N \cdot c_\text{sample}$. For example, if the text data in each sample has an average length of 1K tokens (including the instruction prompt), the complete annotation of $N=1000$ samples will cost 1M tokens (the number of output tokens can be ignored with a suitable prompt). This will cost about \$0.15 USD with GPT-4o-mini, or \$0.04 USD with Qwen-turbo.

The required number for candidate factors is determined by both LLMs' capacity and the nature of the underlying causal structure. Ideally, if all the proposed factors are valid, according to the first step in Algorithm~\ref{alg:coat-search}, the minimal factor number is $|\MB(\varset{U})|-|\varset{U}|$ for extending a PAG with original node set $\varset{U}$. These nodes in $\MB(\varset{U})$ are necessary for refining the skeleton. 

The upper bound of the number for candidate factors is influenced by the \textit{long directed chains} in causal graph where each node on it is only connected with its unique predecessor and unique successor. For example, in the special case demonstrated by Figure~\ref{fig:motivating_coatpag}, to reveal the arrow head from $Y$ to $A$, \ours will extend the node set by $B_1, \cdots, B_N$, and then $C$ and $D$. However, such a directed chain can be arbitrarily long, which will increase the cost of the algorithm. 

\paragraph{Extensions to improve efficiency} To reduce the required candidate factor number, and to also control the cost budget, we provide the following practical suggestions:
\begin{itemize}
    \item \textbf{Incorporate multi-view unstructured data.} This will enlarge the pool of all possible factors that can be proposed by LLMs, and can reduce the length of the \textit{long directed chains} mentioned above.
    \item \textbf{Utilize priori knowledge.} Sometimes it is possible to acquire reliable prior knowledge. For example,  if we know that the unstructured data is generated before some nodes in the original PAG are generated, then these factors cannot be causes of the proposed factors.
    \item \textbf{Cost budget.} The loop at step 4 in Algorithm~\ref{alg:coat-search} can be terminated when the cost budget (on resources or time) is exhausted. The early stop will leave some arrow heads not oriented in the output PAG.
\end{itemize}

\paragraph{Extensions to experiment design} In existing causal discovery literature, there are methods that leverage datasets from different domains or node sets~\citep{triantafillou2015constraint,mooij2020joint,huang2020causal}, as well as strategies for optimal experimental design~\citep{agrawal2019abcd,tigas2022interventions,toth2022active}. 

In the experimental-design methods, the causal structure can be learned by actively collecting samples from an intervened distribution. For example, in Figure~\ref{fig:coatpag_example-COATPAG_loop0}, one can apply an intervention on node $Y$ to distinguish the causal directions in the two undetermined edges. Instead of collecting new samples, \ours propose three new factors: $C$, $D$, and $E$ from the paired unstructured data in existing samples, and therefore the causal directions can be revealed in the extended causal graph in Figure~\ref{fig:coatpag_example-COATPAG_loop3}. The \ours framework can integrate with these approaches in new perspectives:
\begin{enumerate}
    \item By expanding the subgraph around an intervention target, \ours can resolve undetermined edges using unstructured data, reducing the need for redundant experiments.
    \item With the expanded node set and the refined causal structure, \ours delivers richer input to methods for experimental design. And ultimately yielding more valuable experimental datasets.
\end{enumerate}

\section{Identifiability Analysis }\label{subsec:theory}

In this section, we provide theoretical discussions and analyze the identifiability of different variants of \ours in the corresponding settings.



\subsection{On Markov Blanket}
\label{subsec:MB-identification}

Given a new factor $\vw_{k+1}$, with the current representation as \[h_{[k]}(\mX) = \Big(\,\vw_1\left(\mX\right), \, \vw_2\left(\mX\right), \, \cdots\, , \, \vw_k\left(\mX\right) \, \Big),\] and \ours tests:
        \begin{equation}\label{eq:mutual_info_down:notind}
            Y \notind \vw_{k+1}(\mX) \mid h_{[k]}(\mX).
        \end{equation} 
\ours also requires the following usual condition about distribution and causal graph:
\begin{assumption}[Faithful and Markov conditions, from~\cite{causation_1st}] 
\label{ass:Faithful_Markov}For any disjoint non-empty subsets $A, B, C \subset \mathcal{W}^{\leq t} \cup \{Y\} $, $A$ and $B$ are d-separated by $C$ on the causal graph iif $A \ind B \mid C$ on the factors' distribution. All conditional independencies are preserved after factor parsing.
\end{assumption}

The annotation from a poor model could introduce an additional ``measurement error'' on the true factor values, disturbing the true distribution and thus violating the above assumption. The measurement error injected through LLM-based parsing may be of independent interest to the literature of causal discovery~\citep{Glymour2019ReviewOC,causal_learn_survey}. 
Empirically, as one shall see in Section~\ref{subsec:applesetting1}, predominant LLMs can conduct effective factor annotation.
If Assumption~\ref{ass:Faithful_Markov} holds, the conditional mutual information between $Y$ and $\mX$ given the desired factors decreases:


\begin{proposition}\label{prop:mutual_info_down}
    Under assumption~\ref{ass:Faithful_Markov}, if condition~\ref{eq:mutual_info_down:notind} holds, then
    for Markov Blanket $\gS \subseteq [k+1]$ of $Y$, i.e., $Y \ind h_{[k+1] \setminus \gS}(\mX)  \mid  h_{\gS}(\mX)$,
    we have the following about conditional mutual information:
        \begin{equation}\label{eq:mutual_info_down:result}
            I\Big( Y; \mX \mid h_{\gS}(\mX) \Big) = I\Big( Y; \mX \mid h_{[k+1]}(\mX) \Big) < I\Big( Y; \mX \mid h_{[k]}(\mX) \Big).
        \end{equation}
\end{proposition}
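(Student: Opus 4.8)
The plan is to establish the two claims in Eq.~\eqref{eq:mutual_info_down:result} separately: first the equality $I(Y;X\mid h_{\gS}(X)) = I(Y;X\mid h_{[k+1]}(X))$, and then the strict inequality $I(Y;X\mid h_{[k+1]}(X)) < I(Y;X\mid h_{[k]}(X))$. Throughout I would work with conditional mutual information as a functional of the joint distribution over $(X, Y, h_{[k+1]}(X))$, noting that each $h_S(X)$ is a deterministic function of $X$, so conditioning statements among these variables are governed by the d-separation/faithfulness equivalence of Assumption~\ref{ass:Faithful_Markov}.

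For the equality, the key observation is that $\gS$ being a Markov Blanket of $Y$ means $Y \ind h_{[k+1]\setminus \gS}(X) \mid h_{\gS}(X)$. I would use the chain rule for mutual information to expand $I(Y; X, h_{[k+1]}(X) \mid h_{\gS}(X))$ two ways: since $h_{[k+1]}(X)$ is a function of $X$, this equals $I(Y; X \mid h_{\gS}(X))$; on the other hand, decomposing as $I(Y; h_{[k+1]}(X)\mid h_{\gS}(X)) + I(Y; X \mid h_{[k+1]}(X))$, the first term vanishes by the Markov Blanket property (the extra coordinates $h_{[k+1]\setminus\gS}$ carry no information about $Y$ given $h_{\gS}$, and $h_{\gS}$ itself is already included). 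Hence $I(Y;X\mid h_{\gS}(X)) = I(Y;X\mid h_{[k+1]}(X))$. The same argument with $\gS' = [k+1]$ trivially gives the identity, so really the content is that conditioning on the full set $h_{[k+1]}(X)$ and on the sub-Markov-Blanket $h_{\gS}(X)$ yield the same residual information about $Y$ in $X$.

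For the strict inequality, I would again use the chain rule: $I(Y; X \mid h_{[k]}(X)) = I(Y; \vw_{k+1}(X), X \mid h_{[k]}(X)) = I(Y; \vw_{k+1}(X)\mid h_{[k]}(X)) + I(Y; X \mid h_{[k]}(X), \vw_{k+1}(X))$, where the last term is exactly $I(Y;X\mid h_{[k+1]}(X))$. So the gap equals $I(Y;\vw_{k+1}(X)\mid h_{[k]}(X))$, and it remains to argue this is strictly positive. This is precisely where condition~\eqref{eq:mutual_info_down:notind}, $Y \notind \vw_{k+1}(X)\mid h_{[k]}(X)$, enters: by Assumption~\ref{ass:Faithful_Markov}, conditional dependence is equivalent to non-d-separation, hence to strictly positive conditional mutual information (mutual information is zero iff the conditional independence holds). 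This yields $I(Y;\vw_{k+1}(X)\mid h_{[k]}(X)) > 0$ and the strict inequality follows.

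The main obstacle — more of a bookkeeping subtlety than a deep difficulty — is handling the fact that $X$ is high-dimensional and the conditioning sets are deterministic functions of $X$, so one must be careful that quantities like $I(Y;X\mid h_{[k]}(X))$ are well-defined and that the chain-rule manipulations are valid (e.g. that $I(Y; h_{[k+1]}(X)\mid h_{\gS}(X))$ really does vanish rather than merely being finite). I would address this by invoking that all conditional independencies are preserved after factor parsing (last sentence of Assumption~\ref{ass:Faithful_Markov}) and that $h_{[k+1]}(X) \supseteq h_{\gS}(X)$ as a coordinate projection, so the Markov Blanket independence $Y\ind h_{[k+1]\setminus\gS}(X)\mid h_{\gS}(X)$ directly gives $I(Y; h_{[k+1]}(X)\mid h_{\gS}(X)) = I(Y; h_{[k+1]\setminus\gS}(X)\mid h_{\gS}(X)) = 0$. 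The rest is a routine application of the chain rule and the information-theoretic characterization of conditional independence.
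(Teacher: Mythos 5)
Your proof is correct and follows essentially the same route as the paper's: the paper phrases the argument through conditional entropies (establishing $H(Y\mid h_{\gS}(X)) = H(Y\mid h_{[k+1]}(X)) < H(Y\mid h_{[k]}(X))$ and then subtracting the common term $H(Y\mid X) = H(Y\mid h_\cdot(X), X)$), which is the same chain-rule decomposition you write directly at the level of conditional mutual information. Both proofs hinge on the same two facts — the Markov-blanket independence killing the residual coordinates, and the assumed conditional dependence forcing a strictly positive information gap — so this is a restatement rather than a genuinely different argument.
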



We provide an initial exploration under what conditions LLMs can identify target-related factors and how the ability of an LLM influences this procedure.
\begin{definition}[Ability of LLMs] \label{asp:LLM_good}
    Given a suitable prompt about current factors and data, the LLM $\Psi$ has non-zero probability $p_\Psi > 0$ to propose a new factor $\vw_{k+1}$ that satisfies condition~\ref{eq:mutual_info_down:notind} and 
        \begin{equation}
            \frac{I\big( Y; \mX \mid h_{[k+1]}(\mX) \big)}{I\big( Y; \mX \mid h_{[k]}(\mX) \big)} < 1-C_{\Psi},
        \end{equation}
    for some positive constant $C_{\Psi}$ whenever $I\left( Y; \mX \mid h_{[k]}(\mX) \right) > 0$. Note that $h_{[0]}(\mX)=\phi$, hence we also use $I\left( Y; \mX \mid h_{[0]}(\mX) \right)$ to refer $I\left( Y; \mX\right)$. We use $p$ instead of $p_\Psi$ when the context is clear.
\end{definition}
We further explain the intuition behind Def~\ref{asp:LLM_good}: the \emph{Perception Score} $p$ captures the LLM's responsiveness to the given prompts and the feedback; the \emph{Capacity Score} $C_\Psi$ captures the quality of the factors proposed by the LLM.
Empirically, the two scores are used to estimate the abilities of the predominant LLMs in Section~\ref{subsec:applesetting1}.
Theoretically, we use them to characterize the influence of prompt templates, the LLM responsiveness, and the quality of factors on the performance of \ours:
\begin{proposition}[Characterization for Factor Identification Process] \label{prop:cha_f_i}
   With assumption~\ref{ass:Faithful_Markov}, for any small number $\epsilon, \delta \in (0,\frac{1}{2})$,  perception score $p>0$, capacity score $C_\Psi>0$, with  $t$ \ours rounds that
        \begin{equation} \label{eq:cha_f_i:round}
            \sqrt{t} > \frac{|z_\delta|\sqrt{1-p}}{2 \sqrt{p}} \left( 1 +  \sqrt{1+\frac{4 \log{\epsilon} }{z_\delta^2 (1-p) \log{(1-C_\Psi)}}} \, \right),
        \end{equation}
    where $z_\delta$ is the $\delta$-quantile  of the standard normal distribution, we have 
        \begin{equation}
            \Pr\left(  \frac{I\big( Y; \mX \mid h_{\le t}(\mX) \big)}{I\left( Y; \mX  \right)} < \epsilon \right) \ge 1 - \delta.
        \end{equation}
\end{proposition}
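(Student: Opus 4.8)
The plan is to model the sequence of \ours rounds as a sequence of Bernoulli trials governed by the Perception Score $p$, and then track how many ``successful'' rounds suffice to drive the normalized conditional mutual information below $\epsilon$ via the multiplicative decrease guaranteed by the Capacity Score $C_\Psi$. Concretely, in each round the LLM $\Psi$ either proposes a qualifying new factor (an event of probability at least $p$, by Definition~\ref{asp:LLM_good}) or it does not. Let $S_t$ be the number of successful rounds among the first $t$. On each success, Definition~\ref{asp:LLM_good} gives $I(Y;X\mid h_{[k+1]}(X)) < (1-C_\Psi)\, I(Y;X\mid h_{[k]}(X))$; on each failure the ratio cannot increase because adding factors never increases conditional mutual information (this monotonicity is exactly the mechanism underlying Proposition~\ref{prop:mutual_info_down}, and I would invoke it as the step that makes ``unsuccessful rounds do no harm''). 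Chaining these inequalities over the first $t$ rounds yields the deterministic bound
\begin{equation}
    \frac{I\left( Y; X \mid h_{\le t}(X) \right)}{I\left( Y; X  \right)} \le (1-C_\Psi)^{S_t}.
\end{equation}
So the event $\{I(Y;X\mid h_{\le t}(X))/I(Y;X) < \epsilon\}$ is implied by $\{S_t \ge m\}$ where $m := \log\epsilon / \log(1-C_\Psi)$ is the number of successes needed (note $\log\epsilon<0$ and $\log(1-C_\Psi)<0$, so $m>0$).

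Next I would lower-bound $\Pr(S_t \ge m)$. Since $S_t$ stochastically dominates a $\mathrm{Binomial}(t,p)$ random variable, it suffices to control the binomial. The cleanest route matching the stated bound is a normal (CLT / Gaussian) approximation: $S_t$ is approximately $\mathcal{N}(tp,\, tp(1-p))$, so $\Pr(S_t \ge m) \ge 1-\delta$ holds once $m \le tp - |z_\delta|\sqrt{tp(1-p)}$, i.e. once $tp - |z_\delta|\sqrt{tp(1-p)} \ge m$. Substituting $m = \log\epsilon/\log(1-C_\Psi)$ and treating this as a quadratic inequality in the variable $\sqrt{t}$ — with coefficients built from $p$, $|z_\delta|$, $\log\epsilon$, and $\log(1-C_\Psi)$ — the quadratic formula gives precisely the threshold
\begin{equation}
    \sqrt{t} > \frac{|z_\delta|\sqrt{1-p}}{2 \sqrt{p}} \left( 1 +  \sqrt{1+\frac{4 \log{\epsilon} }{z_\delta^2 (1-p) \log{(1-C_\Psi)}}} \, \right).
\end{equation}
I would verify the sign of the discriminant (it is $\ge 1$ since $\log\epsilon/\log(1-C_\Psi)>0$, so the square root is real) and check that the larger root is the relevant one.

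The main obstacle is making the Gaussian step rigorous rather than merely asymptotic: a literal CLT statement only holds in the limit $t\to\infty$, whereas the proposition claims a finite-$t$ bound. I would address this either by (i) explicitly invoking a Berry–Esseen bound and absorbing the correction, or (ii) more honestly, stating the result under a Gaussian-approximation convention for the binomial tail (which appears to be the paper's intent, given that $z_\delta$ appears directly), or (iii) replacing the CLT with a genuine one-sided concentration inequality (Chernoff/Hoeffding) at the cost of changing the explicit constants. A secondary, minor obstacle is the bookkeeping when $I(Y;X)=0$ or when $I(Y;X\mid h_{[k]}(X))$ hits $0$ before round $t$: in that case the target inequality is already satisfied trivially and the remaining rounds are vacuous, so the bound only gets easier — I would dispatch this as a boundary case at the start. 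Finally I would note that $h_{[0]}(X)=\phi$ so that the chain starts correctly from $I(Y;X)$, consistent with the notational remark in Definition~\ref{asp:LLM_good}.
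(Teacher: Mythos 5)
Your proposal is correct and follows essentially the same route as the paper's proof: count the number of successful rounds $S_t$ (the paper calls it $n_s$), treat it as $\mathrm{Binomial}(t,p)$ with a Gaussian approximation, observe that the target event is implied by $S_t$ exceeding $\log\epsilon/\log(1-C_\Psi)$, impose $tp - |z_\delta|\sqrt{tp(1-p)} > \log\epsilon/\log(1-C_\Psi)$, and solve the resulting quadratic in $\sqrt{t}$. You are in fact slightly more careful than the paper in spelling out why unsuccessful rounds do no harm, in noting the stochastic-dominance nuance, and in flagging that the Gaussian step is an approximation rather than an exact finite-$t$ bound — all of which is consistent with what the paper implicitly assumes.
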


The proof is given in Appendix~\ref{appdx:proof}. 
Prop.~\ref{prop:cha_f_i} gives a guarantee on identifying a Markov Blanket. Intuitively, Prop.~\ref{prop:cha_f_i} also characterizes the influence of prompt templates, the LLM responsiveness, and the quality of factors on the performance of COAT via the two proposed measures: $p$ and $C_\Psi$. When both of them are positive, \ours will converge exponentially:
\begin{proposition}[Rate of Convergence]\label{prop:rate_of_convergence}
    With assumption~\ref{ass:Faithful_Markov}, for any small number $\epsilon, \delta \in (0,\frac{1}{2})$, perception score $p>0$, capacity score $C_\Psi>0$, after $t$ \ours rounds, the following inequality holds with probability at least $1-\delta$:
    \begin{equation}\label{eq:rate_of_convergence}
        \frac{I\big(Y;\mX\mid h_{\le t}(\mX)\big)}{I(Y;\mX)} \le \left(\frac{1}{1-C_\Psi}\right)^{-tp-z_{\delta}\sqrt{tp(1-p)}}
    \end{equation}
\end{proposition}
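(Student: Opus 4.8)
The plan is to track the random number of "successful" \ours rounds and use a concentration inequality for sums of Bernoulli random variables. By Definition~\ref{asp:LLM_good}, in each round the LLM has probability at least $p$ of proposing a factor $\vw_{k+1}$ that both satisfies condition~\ref{eq:mutual_info_down:notind} and achieves a multiplicative drop $I(Y;X\mid h_{[k+1]}(X))/I(Y;X\mid h_{[k]}(X)) < 1-C_\Psi$ (as long as the current conditional mutual information is still positive; once it hits zero the ratio bound in Eq.~\ref{eq:rate_of_convergence} holds trivially since the left side is $0$). First I would introduce i.i.d.\ (or stochastically dominating) indicators $B_1,\dots,B_t\in\{0,1\}$ with $\Pr(B_i=1)\ge p$, where $B_i=1$ records a successful round, and set $S_t=\sum_{i=1}^t B_i$. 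By Proposition~\ref{prop:mutual_info_down}, each successful round multiplies the ratio $I(Y;X\mid h_{\le t}(X))/I(Y;X)$ by a factor strictly less than $1-C_\Psi$, while unsuccessful rounds can only leave it unchanged or decrease it further (adding factors never increases conditional mutual information, since conditioning on more variables cannot raise $H(Y\mid\cdot)$ above... — more precisely $I(Y;X\mid h_{\le t+1}(X))\le I(Y;X\mid h_{\le t}(X))$ because $h_{\le t}(X)$ is a function of $h_{\le t+1}(X)$, so we are in the monotone regime). Hence deterministically
\[
\frac{I(Y;X\mid h_{\le t}(X))}{I(Y;X)} \le (1-C_\Psi)^{S_t} = \left(\tfrac{1}{1-C_\Psi}\right)^{-S_t}.
\]

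Next I would lower-bound $S_t$ with high probability. Since $S_t$ stochastically dominates a $\mathrm{Binomial}(t,p)$ variable, it suffices to bound the latter. The target bound in Eq.~\ref{eq:rate_of_convergence} has the shape $\left(\tfrac{1}{1-C_\Psi}\right)^{-tp - z_\delta\sqrt{tp(1-p)}}$, and since $\delta\in(0,\tfrac12)$ we have $z_\delta<0$, so $-z_\delta\sqrt{tp(1-p)}>0$ — i.e.\ this is a genuine lower tail statement: we need $S_t \ge tp + z_\delta\sqrt{tp(1-p)}$ with probability at least $1-\delta$. This is exactly the normal (CLT / Berry–Esseen–type, or more honestly just a normal-approximation) quantile statement: $\Pr\big(S_t < tp + z_\delta\sqrt{tp(1-p)}\big)\approx \Phi(z_\delta) = \delta$. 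So with probability at least $1-\delta$, $S_t \ge tp + z_\delta\sqrt{tp(1-p)}$, and combining with the deterministic bound above gives Eq.~\ref{eq:rate_of_convergence}. Because $1/(1-C_\Psi)>1$, a larger $S_t$ makes the right-hand side smaller, so the inequality direction is consistent.

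The main obstacle — and the place where the proof needs care — is the concentration step: strictly speaking $\Pr(S_t < tp + z_\delta\sqrt{tp(1-p)}) = \delta$ holds only asymptotically under the CLT, not exactly for finite $t$, so one should either (i) invoke a Berry–Esseen correction and absorb it, (ii) phrase the result as using the normal approximation to the binomial (which appears to be the paper's intent, given that the sibling Proposition~\ref{prop:cha_f_i} is also stated with $z_\delta$), or (iii) replace it with an exact one-sided Chernoff/Hoeffding bound at the cost of a slightly weaker constant. I would follow route (ii) to stay consistent with Proposition~\ref{prop:cha_f_i}, stating explicitly that $S_t\sim$ (dominates) $\mathrm{Binomial}(t,p)$ and applying the normal approximation $S_t \approx \mathcal{N}(tp, tp(1-p))$ to extract the $\delta$-quantile. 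A secondary subtlety is handling the boundary case $I(Y;X)=0$ or the mutual information reaching $0$ mid-process (both make the claim trivial, but should be mentioned), and confirming the monotonicity claim $I(Y;X\mid h_{\le t+1}(X))\le I(Y;X\mid h_{\le t}(X))$ together with the strict multiplicative decrease from Proposition~\ref{prop:mutual_info_down} on successful rounds, which is what licenses the clean product bound.
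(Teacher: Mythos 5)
Your proposal is correct and takes essentially the same route the paper does: the appendix proof of the sibling Proposition~\ref{prop:cha_f_i} introduces $n_s$ (your $S_t$), the count of successful rounds, treats it as $\mathrm{Binom}(t,p)$, uses the same deterministic bound $I(Y;X\mid h_{\le t}(X))/I(Y;X)\le(1-C_\Psi)^{n_s}$, and applies the Gaussian approximation $\mathcal{N}(tp,tp(1-p))$ to extract the $z_\delta$ quantile, exactly as you do. Your added remarks on the monotonicity of $I(Y;X\mid h_{\le t}(X))$ in $t$ (which holds here because the conditioning variables are deterministic functions of $X$) and on the finite-sample caveat of the normal approximation are both accurate refinements that the paper leaves implicit.
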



\para{Causal structure identification} It is clear that LLMs are not involved in the causal discovery process, which is mainly executed by causal discovery methods such as FCI. Therefore, the CD guarantees the identifiability of the final causal graph over the LLM-proposed factors. 
The concrete assumptions required for identifiability depend on the specific CD used in COAT. For instance, the FCI algorithm requires faithfulness of the data distribution with respect to the true causal graph~\citep{fci}.
In our experiments, we also verify that the structured data annotated by LLMs has a high accuracy and little noise, which is friendly to the CD assumption.
In general, one could switch to another CD in \ours, while using different CD may require different assumptions. For example, LiNGAM~\citep{shimizu2006linear} requires the relations among variables to be linear and non-Gaussian models.  Empirically, we find that \ours with LiNGAM works very well (Appendix~\ref{appdx:ligam_coat}). 




\subsection{On Partial Ancestral Graphs}
\label{subsec:pag-identifiability}

In this section, we analyze how \oursPAG improves causal structure identification by leveraging unstructured data $\mX$. As introduced in Section~\ref{subsec:introduce-coat-pag}, \oursPAG extends the original structured variables $\varset{U}$ by defining a new set of variables $\varset{W}$ from $\mX$. 
The augmented node set enables more conditional independence test involving nodes in $\varset{U}$, and therefore yields a more informative partial ancestral graph (PAG).
We will demonstrate: in the PAG $\gP_\varset{V}$ over the extended node set $\varset{V} \triangleq \varset{W} \cup \varset{U}$, the information in $\mX$ will maximally refine the causal structure among variables from $\varset{U}$. 

We start with the following necessary formulation. Let $\varset{O}$ be the set of all possible factors that can be defined based on $\mX$, and we have $\varset{W} \subseteq \varset{O}$. 
Then, the maximally observable node set is $\varset{E} \triangleq \varset{O} \cup \varset{U}$. We expect that the causal structure among $\varset{U}$ in $\gP_\varset{E}$ will be preserved in $\gP_\varset{V}$. We formalize this idea in the following definition:


\begin{definition}
    For three vertex sets $\varset{U} \subseteq \varset{V} \subseteq \varset{E}$, and let $\gP_\varset{V}$ and $\gP_\varset{E}$ be the maximally informative PAG for $\varset{V}$ and $\varset{E}$ respectively. We say arrow heads (or arrow tails) in $\gP_\varset{E}$ are preserved by $\gP_\varset{V}$ within $\varset{U}$, if for $\alpha, \beta \in \varset{U}$:
    \begin{enumerate}
        \item $\alpha$ and $\beta$ are adjacent in $\gP_\varset{V}$ if and only if they are adjacent in $\gP_\varset{E}$; and
        \item $\alpha \leftarrow \!\! * \ \beta$ (or $\alpha - \!\! * \ \beta$ ) occurs in  $\gP_\varset{V}$ if and only if it occurs in $\gP_\varset{E}$.
    \end{enumerate}
\end{definition}



In the next proposition, we state under what condition, one can have arrow head preservation in a node subset $\varset{U}$ in the PAG augmented by Algorithm~\ref{alg:coat-search}. 

\begin{proposition} \label{prop:arrowhead-preserve}
    Let $\varset{W}$ be the node set returned by applying Algorithm~\ref{alg:coat-search} on $(\mX, \varset{U})$. If a set $\varset{U}'$ satisfies $\MB(\varset{U}') \subseteq \varset{V}$, then arrow heads in $\gP_\varset{E}$ are preserved by $\gP_\varset{V}$ within $\varset{U}'$.
\end{proposition}

As a special case, by the first step in Algorithm~\ref{alg:coat-search}, one can see that the original structured variable set $\varset{U}$ satisfies $\MB(\varset{U}) \subseteq \varset{V}$, therefore, by Proposition~\ref{prop:arrowhead-preserve}, arrow heads in $\gP_\varset{E}$ are preserved by $\gP_\varset{V}$ within $\varset{U}$.
The following result will be used in later. It says that the edges in discriminating path in desired $\mU$ can be clearly oriented.

\begin{corollary} \label{cor:arrow-tails-in-discriminating-path}
Let $\varset{W}$ be the node set returned by applying Algorithm~\ref{alg:coat-search} on $(\mX, \varset{U})$, and $\MB(\varset{U}') \subseteq \varset{V}$.
Let $u$ be a discriminating path between $\theta$ and $\gamma$ in $\gP_\varset{E}$ within $\varset{U}'$, then for any node $\alpha$ between $\theta$ and $\gamma$ that is adjacent to $\gamma$, the edge is either $\alpha \rightarrow \gamma$ or  $\alpha \leftrightarrow \gamma$ in $\gP_\varset{V}$.
\end{corollary}

\begin{proof}
    According to Proposition~\ref{prop:arrowhead-preserve}, the arrow heads in $\gP_\varset{E}$ are preserved by $\gP_\varset{V}$ within $\varset{U}'$. If the corollary doesn't hold, we have $\alpha \circrightarrow \gamma$ in $\gP_\varset{V}$, then either $\gR 1$ or $\gR 4$ can be applied to orient $\alpha \circrightarrow \gamma$. However, this contradicts with that $\gP_\varset{V}$ is maximally informative.
\end{proof}

\subsection{On Adjustment Sets}
\label{subsec:adjustment-set}

We consider the Generalized adjustment criterion~\citep{perkovi2018complete} defined as follow:

\begin{definition}[Generalized adjustment criterion~\citep{perkovi2018complete}] \label{def:Generalized-adjustment-criterion}
    Let $\varset{Z} \subseteq \varset{V} \setminus \{T,Y\}$ be a node set in $\gP_\varset{V}$. $\varset{Z}$ satisfies the generalized adjustment criterion if the following three conditions hold:
    \begin{itemize} 
        \item[~] (\textbf{Amenability}) $\gP_\varset{V}$ is amenable relative to $(T,Y)$; and
        \item[~] (\textbf{Forbidden Set}) $\varset{Z} \cap \text{Forb}(T,Y,\gP_\varset{V})=\emptyset$; and
        \item[] (\textbf{Blocking}) all proper definite status non-causal paths from $T$ to $Y$ are blocked by $\varset{Z}$.
    \end{itemize}
    Note that $\text{Forb}(T,Y,\gP_\varset{V}):= \cup_{\alpha\in \text{S}(T,Y, \gP_\varset{V})} \text{PossDe}(\alpha,\gP_\varset{V})$, and $\text{S}(T,Y, \gP_\varset{V})$ is the set of all nodes in a proper directed path from $T$ to $Y$ in $\gP_\varset{V}$ (exclude $T$ itself).
\end{definition}

Its first key condition is the Amenability, defined as follow:

\begin{definition}[Amenability~\citep{perkovi2018complete}]
    For $T,Y \in \varset{V}$, $\gP_\varset{V}$ is \textbf{amenable} relative to $(T, Y)$ if every proper possibly directed path from $T$ to $Y$ in $\gP_\varset{V}$ starts with a visible edge out of $T$.
\end{definition}

First, we show the Algorithm~\ref{alg:coat-t-y} can produce a PAG $\gP_\varset{V}$  that is amenable to $(T,Y)$ when $\gP_\varset{E}$ is amenable to $(T,Y)$. If $\gP_\varset{E}$ is not amenable, that means we cannot find an adjustment set even if all possible factors from given unstructured data are given.

\begin{proposition} \label{prop:amenable-preservation}
    Let $\varset{W}$ be the node set returned by Algorithm~\ref{alg:coat-t-y} on $T$ and $Y$. If $\gP_\varset{E}$ is amenable relative to $(T, Y)$, then $\gP_\varset{V}$ is amenable relative to $(T, Y)$.
\end{proposition}

The next proposition will be used in later proof. It says that we can identify some special arrow tails in the PAG $\gP_\varset{V}$ produced by  Algorithm~\ref{alg:coat-t-y}.

\begin{proposition} \label{prop:arrow-tails-in-path-T-Y}
    Let $\varset{W}$ be the node set returned by Algorithm~\ref{alg:coat-t-y} on $T$ and $Y$. If $\alpha$ and $\gamma$ are two adjacent nodes in a possibly directed path from $T$ to $Y$ in $\gP_\varset{V}$, then the arrow tail in  $\gP_\varset{E}$ are preserved in $\gP_\varset{V}$ within $\{\alpha, \gamma\}$.
\end{proposition}

With the proposition~\ref{prop:arrow-tails-in-path-T-Y}, we can identify definite status paths between $T$ and $Y$. A definite status path is consists of colliders and definite non-colliders, where a definite non-collider means there is at least one edge out of this node. Therefore, we need to identify arrow tails by proposition~\ref{prop:arrow-tails-in-path-T-Y}.

\begin{corollary} \label{cor:definite-status-path}
    Let $\varset{W}$ be the node set returned by Algorithm~\ref{alg:coat-t-y} on $T$ and $Y$. Let $\vu$ be a path from $T$ to $Y$ in $\gP_\varset{E}$. $\vu$ is a definite status in $\gP_\varset{V}$ if and only if it is definite status in $\gP_\varset{E}$.
\end{corollary}

\begin{proof}
    We have $\vu \subseteq \MB^{(2)}(\vu) \subseteq  \varset{V}$ by the Loop at step 3 in Algorithm~\ref{alg:coat-t-y}. By Proposition~\ref{prop:arrowhead-preserve}~and~\ref{prop:arrow-tails-in-path-T-Y}, its arrow tails and heads are preserved.
\end{proof}

With above preparation, in the following proposition, we show that if a node set $\varset{U}'$ have its Markov Blanket in the PAG $\gP_\varset{V}$ produced by  Algorithm~\ref{alg:coat-t-y}, an adjustment set can be found if it is within the node set $\varset{U}'$.  As a special case, by the first step in Algorithm~\ref{alg:coat-t-y}, one can see that the original structured variable set $\varset{U}$ satisfies $\MB(\varset{U}) \subseteq \varset{V}$, therefore, it must satisfy the condition described in Proposition~\ref{prop:Adjustment-Sets-Preservation}. In practice, one can keep applying Algorithm~\ref{alg:coat-t-y} multiple times to expand $\varset{U}$ for searching possible adjustment sets.

\begin{proposition}[Adjustment Sets Preservation]
    \label{prop:Adjustment-Sets-Preservation}
    Let $\varset{W}$ be the node set returned by Algorithm~\ref{alg:coat-t-y} on $(T, Y)$. If $\varset{Z} \subseteq \varset{U}'$ s.t. $\MB(\varset{U}') \subseteq \varset{V}$, then $\varset{Z}$ is an adjustment set relative to $(T, Y)$ in $\gP_\varset{V}$, 
    if and only if, it is an adjustment set relative to $(T, Y)$ in $\gP_\varset{E}$.
\end{proposition}

\section{Empirical Analysis of \ours}
\label{sec:apple_bench}
We evaluate the proposed \ours framework in multiple settings with different datasets. 
\begin{itemize}
    \item For the identifiability assumptions related to LLM capability (Definition~\ref{asp:LLM_good}), we use synthetic data (Section~\ref{subsec:applesetting1}) to examine the capabilities of $10$ predominant LLMs including \gptfo~\citep{openai2024gpt4o}, Claude-3-Opus~\citep{anthropic2024claude3}, \llamathree-70b~\citep{meta2024llama3}, and \mistral-Large~\citep{mistral}. The results are shown in Figure~\ref{fig:quant_ca}.
    \item For \oursMB method (Algorithm~\ref{alg:coat-main}), we conduct experiments on one synthetic benchmark (Section~\ref{subsec:applesetting1}) to evaluate the quality of the proposed Markov blanket factors from textual reviews; furthermore, we use one realistic clinical dataset (Section~\ref{sec:pain_bench}) to draw Neuropathic factors using external tools and symptom-level diagnosis.
    \item For \oursPAG method (Algorithm~\ref{alg:coat-search}), we conduct experiments on one synthetic benchmark (Section~\ref{subsec:applesetting2}) to determine the causal direction between two correlated variables. The direction is not identifiable within the partial ancestral graph on the input node set unless suitable factors are proposed.
    \item For \oursGAS method (Algorithm~\ref{alg:coat-t-y}), we conduct experiments on two realistic datasets about marketing treatment with customers' social media posts (Section~\ref{subsec:CI-ATE}). The quality of adjustment sets are evaluated by the adjusted causal effect.
    \item We evaluate the annotation error in Section~\ref{subsec:applesetting1} (Figure~\ref{fig:apple_attribute_acc} and~\ref{fig:apple_match_acc}); and the hyperparameter sensitivity in Section~\ref{subsec:ablation-study}.
\end{itemize}



\subsection{Eliciting Markov Blanket }
\label{subsec:applesetting1}

We consider the scenario where a rating score $Y$ will be assigned to each apple by gastronomes. 
Each apple has its own attributes, including size, aroma, and taste. Each gastronomy pays unique attention to a subset of the above three attributes. They will write a review according to their preference and give the rating score.
As shown in Figure~\ref{fig:causal_graph_apple-Ground-truth}, we prepare different high-level factors: 3 parents of $Y$ (size, aroma, and taste), one child of $Y$ (market potential), and one spouse of $Y$ (nutrition value). These factors form a Markov blanket of $Y$. In addition, we also prepared one disturbing factor, juiciness, that is related to $Y$ but not a part of this blanket. 
A good method is expected to propose the five high-level factors (up to semantical meanings) and exclude the disturbing factor.

\begin{figure}[t]
    \centering
    \subfigure[Ground truth]{
        \includegraphics[width=0.22\textwidth, trim=40 40 40 35, clip]{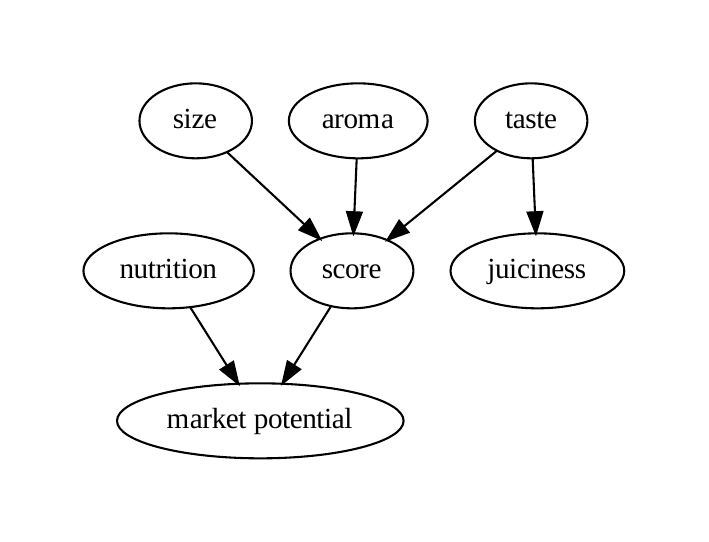}
        \label{fig:causal_graph_apple-Ground-truth}
    }
    \subfigure[GPT-4 META]{
        \includegraphics[width=0.16\textwidth, trim=40 40 40 35, clip]{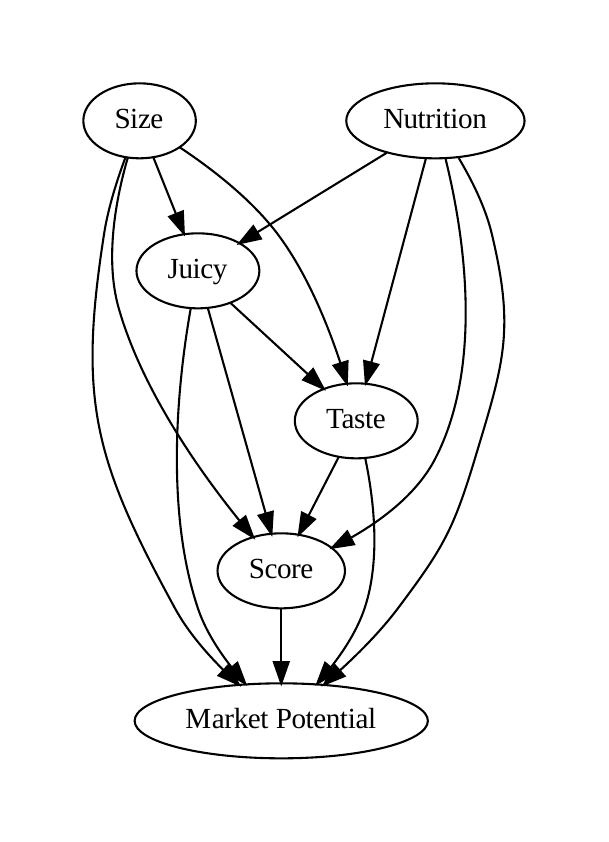}
    }
    \subfigure[\chatgpt \ours]{
        \includegraphics[width=0.22\textwidth, trim=40 40 40 35, clip]{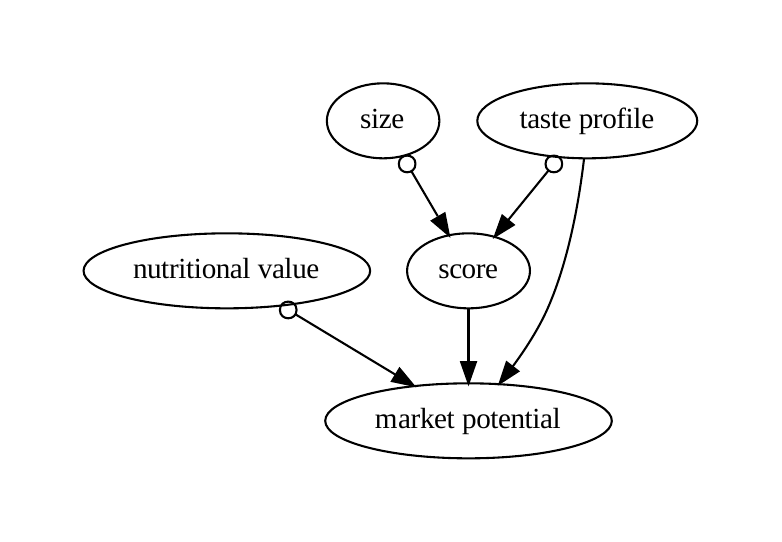}
    }
    \subfigure[Claude-3-Opus \ours]{
        \includegraphics[width=0.22\textwidth, width=0.22\textwidth, trim=40 40 40 35, clip]{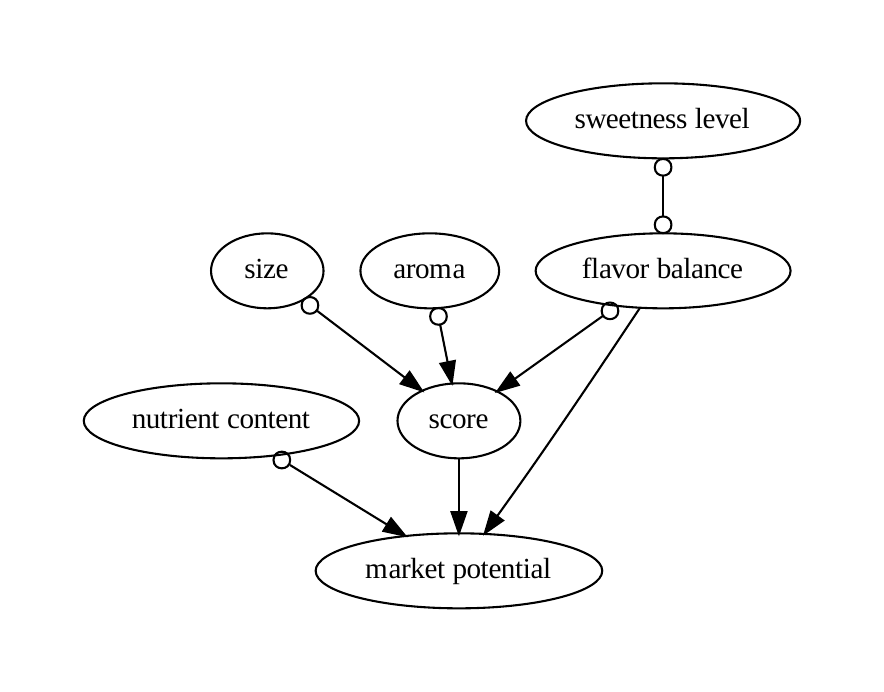}
    }
    \vspace{-0.1in}
    \caption{The discovered causal graphs in \apple. Compared to the ground truth results, directly adopting LLMs to reason the causal relations can easily elicit many false positive edges. In contrast, the relations recovered by \ours have a high precision and recall. The directed edge between ``taste'' and ``juiciness'' can not be recovered by \ours because of the limitations of FCI.}
    \label{fig:causal_graph_apple}
\end{figure}

\paragraph{Setup and Benchmark Construction}
For each sample, we first generate the numerical values of each factor according to Figure~\ref{fig:causal_graph_apple-Ground-truth}. These are ground-truth values and will not be accessible to the algorithm. Then, we generate the according review using \gptf by feeding the predefined interpretation of each factor value. For example, $\text{taste}=1$ means the apple is sweet, $\text{taste}=-1$ means the apple is sour, and  $\text{taste}=0$ means the gastronome doesn't care the taste.  We generated 200 samples for LLMs' analysis and annotation. In appendix, we show the concrete prompt template for this procedure in Figure~\ref{fig:apple_gen_prompt}, and several review examples in Figure~\ref{fig:apple_example_appdx}.

\paragraph{Baselines and Evaluation} In this setting, we mainly employ two different uses of LLMs as the baselines: \textbf{META} is the zero-shot factor proposal given only the context to LLMs; and \textbf{DATA} additionally gives some samples of raw observations, which is an ablation of \ours without the \emph{feedback} module, i.e., only one \ours round.  For causal relation recovery, we follow~\citet{causal_llm_frontier} that prompt LLMs to reason for the causal direction of each pair of the discovered variables by \textbf{DATA}.

We evaluate the ability on factor proposal based on three metrics: \emph{MB}, \emph{NMB}, and \emph{OT}. \emph{MB} means the desired factor forming the Markov Blanket of Y. \emph{NMB} means the undesired factor relevant to data but not in \emph{MB}. \emph{OT} means the unexpected factors irrelevant to data. We also present the corresponding recall, precision, and F1 with respect to $\MB(Y)$.

\paragraph{Finding 1: \ours reliably elicit Markov Blanket} 
Empirically, LLMs with CoT can be aware of high-level factors behind data (lower \emph{OT} than \emph{META}) but still struggle to distinguish the desired factors in Markov Blanket (higher \emph{NMB} than \ours).
\ours is more resistant to the ``disturbing'' factor, which is supported by the lower \emph{NMB} column. \ours filters out irrelevant factors from LLMs' prior knowledge that are not reflected by the data, which is supported by the lower \emph{OT} column. COAT robustly encourages LLM to find more expected factors through the feedback, which is supported by the higher \emph{MB} column.

\begin{table}[t]
    \caption{Full Results on the Apple Gastronome Benchmark. }
    \label{tab:full_results_on_apple}
    \centering
    \small
    \resizebox{\textwidth}{!}{%
\begin{tabular}{llcccccc}
\toprule 
LLMs           & Method & MB          & NMB         & OT          & Recall      & Precision   & F1          \\ \midrule
GPT-4o         & META   & 3.00\std{0.82} & 1.00\std{0.00} & 4.00\std{0.82} & 0.60\std{0.16} & 0.37\std{0.09} & 0.46\std{0.12} \\
               & DATA   & 3.00\std{0.00} & 0.67\std{0.47} & 0.00\std{0.00} & 0.60\std{0.00} & 0.83\std{0.12} & 0.69\std{0.04} \\
               & DATA+CoT  & 4.67\std{0.58}  & 1.00\std{0.00}  & 0.33\std{0.58}  & 0.93\std{0.12}  & 0.78\std{0.10}  & 0.85\std{0.10} \\
               & COAT   & 4.00\std{0.82} & 0.00\std{0.00} & 0.00\std{0.00} & 0.80\std{0.16} & 1.00\std{0.00} & 0.88\std{0.10} \\ \midrule
GPT-4          & META   & 2.67\std{0.94} & 0.67\std{0.47} & 2.33\std{0.47} & 0.53\std{0.19} & 0.46\std{0.08} & 0.49\std{0.13} \\
               & DATA   & 3.00\std{0.00} & 0.33\std{0.47} & 0.00\std{0.00} & 0.60\std{0.00} & 0.92\std{0.12} & 0.72\std{0.04} \\
               & DATA+CoT   & 4.33\std{0.58} & 0.83\std{0.29} & 0.17\std{0.29} & 0.87\std{0.12} & 0.81\std{0.02} & 0.84\std{0.06} \\
               & COAT   & 4.00\std{0.82} & 0.33\std{0.47} & 0.00\std{0.00} & 0.80\std{0.16} & 0.93\std{0.09} & 0.85\std{0.11} \\ \midrule
GPT-3.5        & META   & 3.33\std{1.25} & 0.33\std{0.47} & 4.33\std{1.25} & 0.67\std{0.25} & 0.42\std{0.12} & 0.51\std{0.15} \\
               & DATA   & 2.67\std{0.47} & 0.67\std{0.47} & 0.00\std{0.00} & 0.53\std{0.09} & 0.81\std{0.14} & 0.64\std{0.10} \\
               & DATA+CoT  & 5.00\std{0.00} & 1.00\std{0.00} & 1.33\std{0.58} & 1.00\std{0.00} & 0.68\std{0.05} & 0.81\std{0.04} \\
               & COAT   & 3.67\std{0.47} & 0.00\std{0.00} & 0.00\std{0.00} & 0.73\std{0.09} & 1.00\std{0.00} & 0.84\std{0.07} \\ \midrule
Mistral-Large  & META   & 2.00\std{0.82} & 0.67\std{0.47} & 2.67\std{0.94} & 0.40\std{0.16} & 0.37\std{0.09} & 0.38\std{0.12} \\
               & DATA   & 3.00\std{0.00} & 0.33\std{0.47} & 0.00\std{0.00} & 0.60\std{0.00} & 0.92\std{0.12} & 0.72\std{0.04} \\
               & DATA+CoT  & 4.33\std{0.58}  & 1.00\std{0.00}  & 0.67\std{0.58}  & 0.87\std{0.12}  & 0.73\std{0.07}  & 0.79\std{0.05} \\
               & COAT   & 4.33\std{0.47} & 0.00\std{0.00} & 0.00\std{0.00} & 0.87\std{0.09} & 1.00\std{0.00} & 0.93\std{0.05} \\ \midrule
Mistral-Medium & META   & 3.00\std{0.00} & 0.67\std{0.47} & 1.67\std{1.25} & 0.60\std{0.00} & 0.59\std{0.13} & 0.59\std{0.07} \\
               & DATA   & 3.00\std{0.00} & 0.67\std{0.47} & 0.00\std{0.00} & 0.60\std{0.00} & 0.83\std{0.12} & 0.69\std{0.04} \\
               & DATA+CoT   & 4.33\std{0.58} & 1.00\std{0.00} & 0.67\std{0.58} & 0.87\std{0.12} & 0.73\std{0.07} & 0.79\std{0.05} \\
               & COAT   & 4.67\std{0.47} & 0.00\std{0.00} & 0.00\std{0.00} & 0.93\std{0.09} & 1.00\std{0.00} & 0.96\std{0.05} \\ \midrule
\llama-3-70b    & META   & 2.67\std{0.47} & 0.33\std{0.47} & 4.67\std{0.47} & 0.53\std{0.09} & 0.35\std{0.06} & 0.42\std{0.07} \\
               & DATA   & 2.67\std{1.25} & 0.33\std{0.47} & 0.00\std{0.00} & 0.53\std{0.25} & 0.93\std{0.09} & 0.63\std{0.21} \\
               & DATA+CoT  & 2.67\std{0.58}  & 0.67\std{0.58}  & 1.00\std{0.00}  & 0.53\std{0.12}  & 0.62\std{0.13}  & 0.57\std{0.11} \\
               & COAT   & 3.67\std{0.47} & 0.33\std{0.47} & 0.00\std{0.00} & 0.73\std{0.09} & 0.93\std{0.09} & 0.81\std{0.06} \\ \midrule
\llama-2-70b    & META   & 2.33\std{0.47} & 0.67\std{0.47} & 4.67\std{1.25} & 0.47\std{0.09} & 0.32\std{0.07} & 0.37\std{0.06} \\
               & DATA   & 2.33\std{0.94} & 0.67\std{0.47} & 0.00\std{0.00} & 0.47\std{0.19} & 0.75\std{0.20} & 0.57\std{0.20} \\
               & DATA+CoT  & 3.00\std{1.73}  & 0.67\std{0.58}  & 0.33\std{0.58}  & 0.60\std{0.35}  & 0.71\std{0.34}  & 0.65\std{0.35} \\
               & COAT   & 3.00\std{0.00} & 0.67\std{0.47} & 0.00\std{0.00} & 0.60\std{0.00} & 0.83\std{0.12} & 0.69\std{0.04} \\ \midrule
Qwen-1.5-110B  & META   & 2.00\std{0.00} & 1.00\std{0.00} & 4.00\std{0.00} & 0.40\std{0.00} & 0.29\std{0.00} & 0.33\std{0.00} \\
               & DATA   & 3.00\std{0.82} & 1.00\std{0.00} & 0.00\std{0.00} & 0.60\std{0.16} & 0.74\std{0.05} & 0.66\std{0.12} \\
               & DATA+CoT  & 3.67\std{1.53}  & 0.67\std{0.58}  & 0.00\std{0.00}  & 0.73\std{0.31}  & 0.83\std{0.17}  & 0.77\std{0.23} \\
               & COAT   & 4.00\std{0.82} & 0.33\std{0.47} & 0.00\std{0.00} & 0.80\std{0.16} & 0.93\std{0.09} & 0.85\std{0.11} \\ \midrule
DeepSeek-V2    & META   & 2.33\std{0.47} & 1.00\std{0.00} & 3.00\std{0.82} & 0.47\std{0.09} & 0.37\std{0.03} & 0.41\std{0.04} \\
               & DATA   & 3.33\std{0.47} & 0.67\std{0.47} & 0.00\std{0.00} & 0.67\std{0.09} & 0.85\std{0.11} & 0.74\std{0.05} \\
               & DATA+CoT$^1$  & 5.00\std{0.00}  & 1.00\std{0.00}  & 0.67\std{0.58}  & 1.00\std{0.00}  & 0.75\std{0.07}  & 0.86\std{0.04} \\
               & COAT   & 3.67\std{0.47} & 0.67\std{0.47} & 0.00\std{0.00} & 0.73\std{0.09} & 0.87\std{0.09} & 0.78\std{0.02} \\ \midrule
Claude-3-Opus  & META   & 2.00\std{0.00} & 1.00\std{0.00} & 3.00\std{1.41} & 0.40\std{0.00} & 0.35\std{0.07} & 0.37\std{0.04} \\
               & DATA   & 3.33\std{0.47} & 0.33\std{0.47} & 0.00\std{0.00} & 0.67\std{0.09} & 0.93\std{0.09} & 0.77\std{0.02} \\
               & DATA+CoT  & 2.67\std{0.58}  & 0.33\std{0.58}  & 1.00\std{1.00}  & 0.53\std{0.12}  & 0.67\std{0.14}  & 0.59\std{0.13} \\
               & COAT   & 5.00\std{0.00} & 0.33\std{0.47} & 0.00\std{0.00} & 1.00\std{0.00} & 0.94\std{0.08} & 0.97\std{0.04} \\
\bottomrule
\end{tabular}}
{\footnotesize \textit{ $^1$DeepSeek-V2 was no longer available when we added this baseline, so we used DeepSeek-V2.5 instead.}}
\end{table}

\paragraph{Finding 2: \ours reliably recover the causal relationships}
We present quantitative and qualitative results in Table~\ref{table:apple_rel} and Fig.~\ref{fig:causal_graph_apple} respectively. Compared to directly adopting LLMs to reason the causal relations, \ours significantly boosts the causal relation recovery. 
Meanwhile, \ours maintains high performances based on various LLMs, which further demonstrates the effectiveness of the causal feedback in \ours to improve the robustness of this system. 
In fact, the causal feedback focuses on making maximal use of the rich knowledge of LLMs, and reducing the reliance on the reasoning capabilities of different LLMs, to assist with causal discovery. 

\begin{table}[t]
    \caption{Causal relation extraction results in \apple.}
    \label{table:apple_rel}
    \small\centering
    \resizebox{\textwidth}{!}{
        \begin{tabular}{ll|ccccc}
        \toprule
        LLM            &  Method  & SHD         & SID         & Recall      & Precision   & F1          \\ \midrule
        GPT-4o         & pairwise & 5.33\std{2.62} & 0.00\std{0.00} & 1.00\std{0.00} & 0.45\std{0.07} & 0.62\std{0.07} \\
                       & COAT     & 1.00\std{0.00} & 0.00\std{0.00} & 1.00\std{0.00} & 0.79\std{0.03} & 0.89\std{0.02} \\ \midrule
        GPT-4          & pairwise & 8.67\std{2.05} & 1.33\std{1.89} & 0.93\std{0.09} & 0.35\std{0.05} & 0.51\std{0.06} \\
                       & COAT     & 2.33\std{0.47} & 1.00\std{1.41} & 0.93\std{0.09} & 0.70\std{0.09} & 0.79\std{0.04} \\ \midrule
        GPT-3.5        & pairwise & 5.67\std{4.03} & 3.00\std{2.94} & 0.67\std{0.24} & 0.49\std{0.36} & 0.55\std{0.32} \\
                       & COAT     & 2.00\std{0.82} & 2.67\std{2.05} & 0.75\std{0.20} & 0.63\std{0.21} & 0.68\std{0.21} \\ \midrule
        Mistral-Large  & pairwise & 7.33\std{1.89} & 0.00\std{0.00} & 1.00\std{0.00} & 0.38\std{0.03} & 0.55\std{0.03} \\
                       & COAT     & 2.00\std{0.00} & 1.33\std{1.25} & 0.85\std{0.11} & 0.74\std{0.05} & 0.78\std{0.02} \\ \midrule
        Mistral-Medium & pairwise & 6.00\std{1.63} & 2.00\std{0.00} & 0.51\std{0.13} & 0.37\std{0.04} & 0.42\std{0.07} \\
                       & COAT     & 0.33\std{0.47} & 0.00\std{0.00} & 1.00\std{0.00} & 0.94\std{0.08} & 0.97\std{0.04} \\ \midrule
        \llama-3-70b    & pairwise & 4.00\std{1.41} & 0.33\std{0.47} & 0.92\std{0.12} & 0.51\std{0.09} & 0.65\std{0.11} \\
                       & COAT     & 2.33\std{0.47} & 3.67\std{1.25} & 0.75\std{0.00} & 0.70\std{0.07} & 0.72\std{0.04} \\ \midrule
        \llama-2-70b    & pairwise & 3.33\std{1.25} & 1.00\std{1.41} & 0.89\std{0.16} & 0.48\std{0.11} & 0.62\std{0.13} \\
                       & COAT     & 1.00\std{0.82} & 0.33\std{0.47} & 0.89\std{0.16} & 0.81\std{0.14} & 0.84\std{0.14} \\ \midrule
        Qwen-1.5-110B  & pairwise & 6.67\std{2.62} & 2.33\std{1.25} & 0.62\std{0.03} & 0.38\std{0.09} & 0.47\std{0.07} \\
                       & COAT     & 4.00\std{1.63} & 7.00\std{4.32} & 0.47\std{0.34} & 0.47\std{0.34} & 0.70\std{0.10} \\ \midrule
        DeepSeek-V2    & pairwise & 5.00\std{1.41} & 1.33\std{1.89} & 0.93\std{0.09} & 0.47\std{0.05} & 0.62\std{0.06} \\
                       & COAT     & 1.33\std{1.25} & 2.33\std{1.70} & 0.85\std{0.11} & 0.87\std{0.19} & 0.85\std{0.14} \\ \midrule
        Claude-3-Opus  & pairwise & 7.33\std{1.25} & 0.33\std{0.47} & 0.93\std{0.09} & 0.40\std{0.05} & 0.56\std{0.07} \\
                       & COAT     & 1.33\std{0.47} & 0.00\std{0.00} & 1.00\std{0.00} & 0.79\std{0.06} & 0.88\std{0.04} \\ \bottomrule
        \end{tabular}}
    \vspace{0.2in}
\end{table}

\paragraph{Finding 3: LLMs are competent in proposing potential high-level factors}
As discussed in Sec.~\ref{subsec:theory}, there are two crucial abilities for LLMs in identifying potential high-level factors. The first one is to be aware of the existence of potential factors, and the second is to synthesize and describe these factors. Inspired by this observation, we propose two novel metrics to quantify LLMs' causal ability:
a) Perception that quantifies the ratio of \textit{valid} factors (satisfying Prop.~\ref{prop:mutual_info_down}) proposed by LLMs in each round; b) Capacity that measures the effective mutual information drop in Assumption~\ref{asp:LLM_good}. As shown in Fig.~\ref{fig:quant_ca}, LLMs differ largely on the perception score while comparably on the capacity score.

\begin{figure}[t]
    \subfigtopskip=2pt
    \centering
    \subfigure[Apple attributes Acc.]{
    \vspace{-0.8in}
        \includegraphics[width=0.3\textwidth]{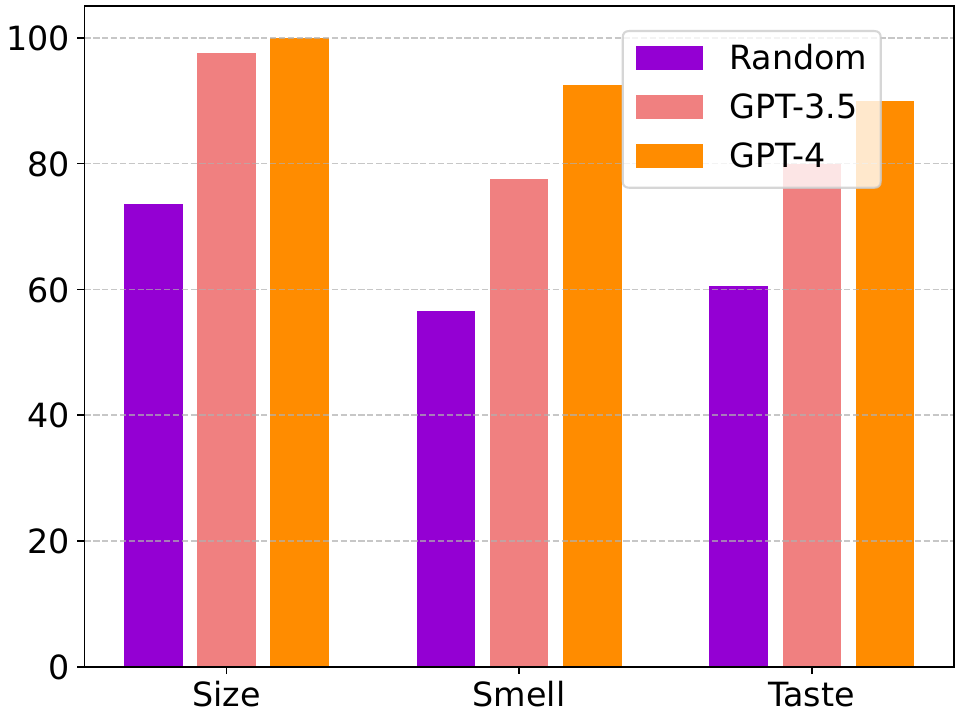}
        \label{fig:apple_attribute_acc}
    }
    \subfigure[Preference matchness Acc.]{
        \includegraphics[width=0.3\textwidth]{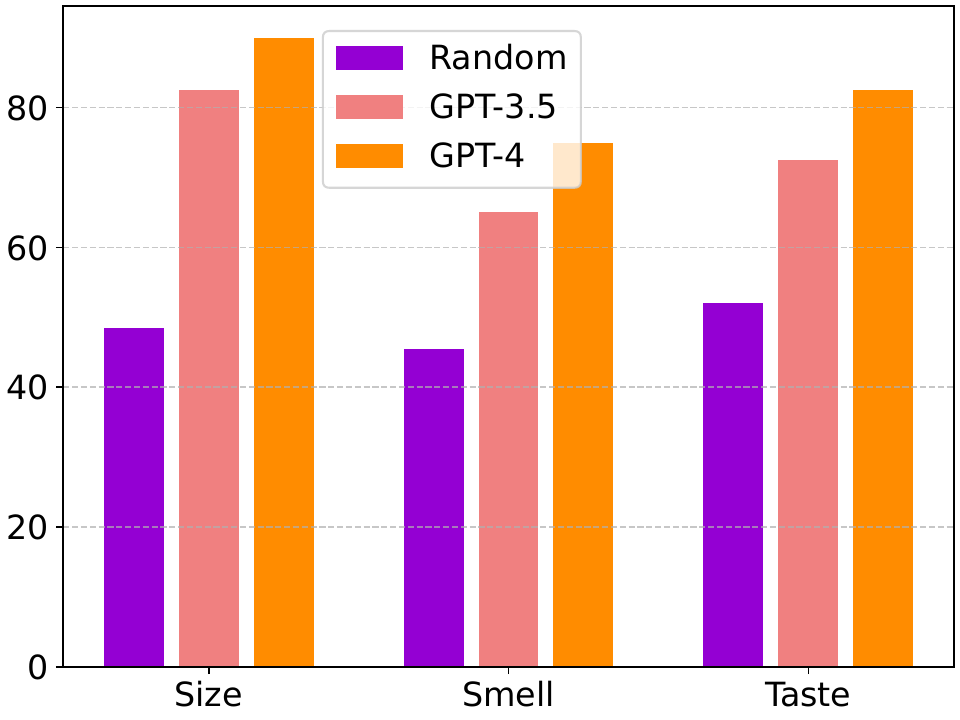}
        \label{fig:apple_match_acc}
     }
     \subfigure[LLM Ability Scatters ]{
        \includegraphics[width=0.32\textwidth]{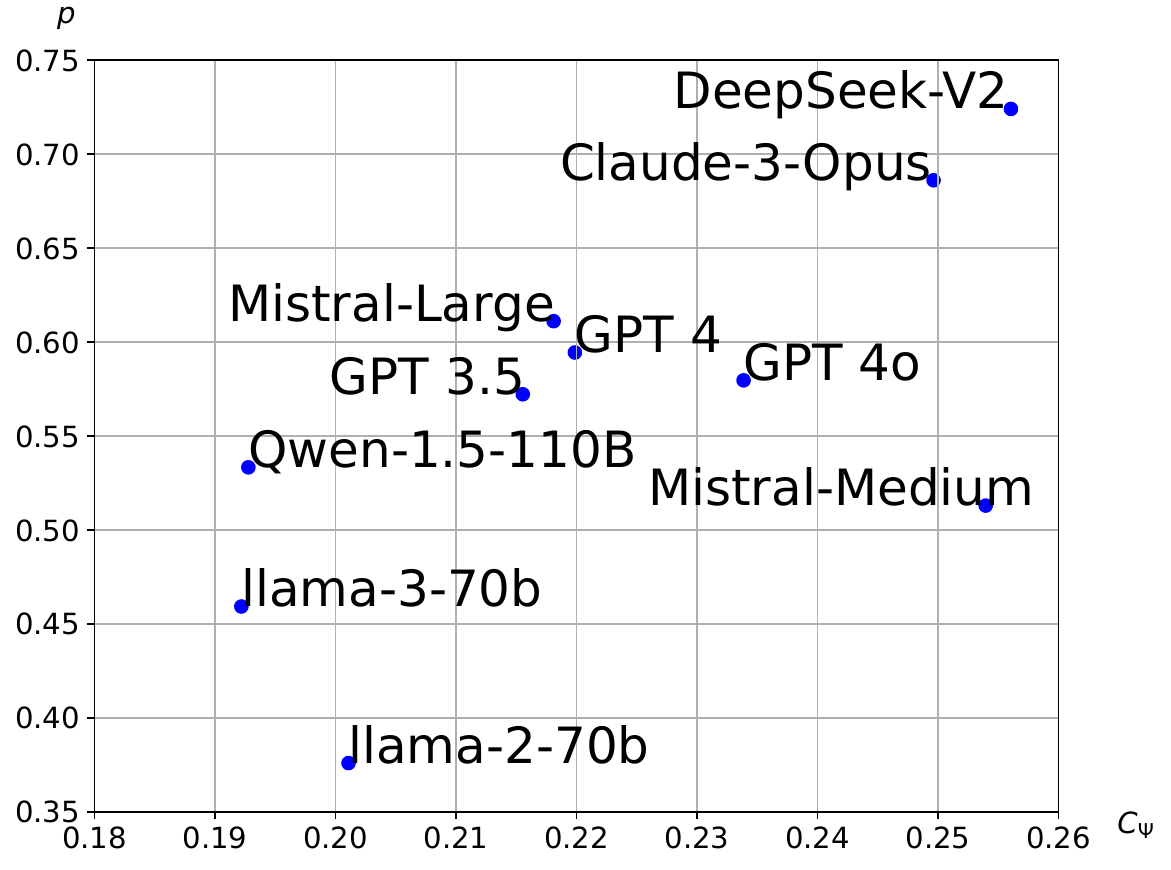}
        \label{fig:quant_ca}
    }
    \vspace{-0.1in}
    \caption{Quantitative evaluation of the causal capabilities of LLMs in \ours.}
    \label{fig:apple_llm_cap}
    %
\end{figure}

\paragraph{Finding 4: LLMs can conduct effective factor annotation}
As shown in Fig.~\ref{fig:apple_llm_cap}, both \chatgpt and \gptf annotate subjective attributes well. Regarding objective human preferences, the performances are still relatively high. Empirically, LLMs will not introduce new confounders. Table~\ref{table:apple_ind} presents the independence testing results between the annotation noises and the annotated features, and the noises themselves. It can be found that, the introduced noises are independent of the attributes, therefore, will not introduce much additional interference to the causal discovery procedure.

\begin{table}[t]
    \caption{Independence tests of the annotation noises with annotated features and other noises  \apple.}
    \label{tab:ind_test_annotation}
    \small\centering
        \begin{tabular}{llcccccc}
            \toprule
            LLM                       & Test Object & T       & P-value \\
            \midrule
            \multirow{2}{*}{GPT-4}    & Feature     & 0.2828  & 0.9997  \\
                                      & Noise     & 0.0327  & 0.9962  \\
            \midrule
            \multirow{2}{*}{GPT-3.5} & Feature       & 0.4803  & 0.0325  \\
                                      & Noise       & 0.0446 & 0.9962  \\
            \bottomrule
            \label{table:apple_ind}
        \end{tabular}
\end{table}

\subsection{Eliciting Partial Ancestral Graph}
\label{subsec:applesetting2}

Similar to the previous setting, we consider the scenario where the rating score $Y$ and one additional factor $T$ (like taste) are observed. The PAG in such two-node case is always $T \circarrowcirc Y$ if they are correlated, because we can only conduct one CI test given only two variables. And thus, we have little information about edge marks. The goal of this setting is to extend the node set from unstructured data, so that the skeleton and orientation in the original graph can be further refined.
For example, in Figure~\ref{fig:coatpag_example-COATPAG_loop2} and Figure~\ref{fig:coatpag_example-COATPAG_loop3}, by extending node $E$, the skeleton between node $C$ and node $D$ is refined, and the orientation on nodes $B$, $Y$, and $A$ are identified.

\paragraph{Setup and Benchmark Construction}
As shown in Figure~\ref{fig:setting2_graph_5_graphs}, we design five different causal graphs with respect to different FCI orientation rules. Each causal graph is consists of 4 nodes: 2 numerical variables $V_1$ and $V_2$ that will be in the original PAG, and two factors $X_1$ and $X_2$ that are expected to be elicited from unstructured data. We consider two type of causal relationships between $V_1$ and $V_2$: (1) direct cause, i.e., $V1 \rightarrow V2$; and (2) latent confounding, i.e., $V1 \leftrightarrow V2$. Given $X_1$ and $X_2$, the edge marks can be exactly identified in all graphs except for graph 5 which is $V1 \circrightarrow V2$ in the extended PAG. 

We choose $V_2$ to be score across all graphs, and randomly select 3 factors from Figure~\ref{fig:causal_graph_apple-Ground-truth} to be the remaining nodes. Other factors will be generated as independent variables. The concrete specification for each graph is listed in Table~\ref{Tab:apple-pag-specification}. In this setting, we generate 2000 samples using GLM-4-Flash, an light-weight LLM with free API. The review generating prompt is constructed in the same way as the setting in Section~\ref{subsec:applesetting1}.

\begin{table}[ht]
\small\centering
\caption{The variable specification for different PAGs} \label{Tab:apple-pag-specification}
\resizebox{\textwidth}{!}{%
\begin{tabular}{@{}llllll@{}}
\toprule
        & V1               & V2    & X1               & X2               & FCI Orientation Rules \\  \midrule
graph 1 & Taste            & Score & Nutrition        & Market Potential & $\gR$0,  $\gR$1         \\
graph 2 & Taste            & Score & Juiciness        & Size             & $\gR$0                 \\
graph 3 & Smell            & Score & Market Potential & Nutrition        & $\gR$0, $\gR$4          \\
graph 4 & Juiciness        & Score & Nutrition        & Smell            & $\gR$0, $\gR$2, $\gR$4   \\
graph 5 & Market Potential & Score & Taste            & Juiciness        & $\gR$3                 \\   \bottomrule
\end{tabular}%
}
\end{table}

\begin{figure}[t]
    \subfigtopskip=2pt
    \centering
    \subfigure[Causal graph 1]{
        \begin{minipage}[c][2.5cm][c]{0.28\linewidth}
            \centering
            \begin{tikzpicture}[
                baseline=(current bounding box.center),
                node distance=0.6cm,
                every node/.style={circle, draw, minimum size=6mm, inner sep=1pt},
                edge/.style={->, >=Latex, semithick},
                observed/.style={circle, draw, fill=black!10, minimum size=6mm, inner sep=1pt},
            ]
            \node (v1) [observed] {$V_1$};
            \node[above left=0.2cm and 0.6cm of v1] (x2) {$X_2$};
            \node[below left=0.2cm and 0.6cm of v1] (x1) {$X_1$};
            \node[below right=0.2cm and 0.6cm of v1] (v2) [observed]  {$V_2$};
            
            \draw[edge] (x1) -- (v2);
            \draw[edge] (x1) -- (v1);
            \draw[edge] (x2) -- (v1);
            \draw[edge] (v1) -- (v2);
            \end{tikzpicture}
        \end{minipage}
        \label{fig:setting2_graph_1_r01}
    }
    \subfigure[Causal graph 2]{
        \begin{minipage}[c][2.5cm][c]{0.28\linewidth}
            \centering
            \begin{tikzpicture}[
                baseline=(current bounding box.center),
                node distance=0.6cm,
                every node/.style={circle, draw, minimum size=6mm, inner sep=1pt},
                edge/.style={->, >=Latex, semithick},
                observed/.style={circle, draw, fill=black!10, minimum size=6mm, inner sep=1pt},
                biedge/.style={<->, >=Latex, semithick},
            ]
            \node (v1) [observed] {$V_1$};
            \node[above right=0.2cm and 0.6cm of v1] (v2) [observed]  {$V_2$};
            \node[left=0.6cm of v1] (x1) {$X_1$};
            \node[above left=0.2cm and 0.6cm of v2] (x2) {$X_2$};
            
            \draw[edge] (x2) -- (v2);
            \draw[edge] (x1) -- (v1);
            \draw[biedge] (v1) -- (v2);
            \end{tikzpicture}
        \end{minipage}
        \label{fig:setting2_graph_2_r0}
    }\\
    \subfigure[Causal graph 3]{
        \begin{minipage}[c][2.5cm][c]{0.28\linewidth}
            \centering
            \begin{tikzpicture}[
                baseline=(current bounding box.center),
                node distance=0.6cm,
                every node/.style={circle, draw, minimum size=6mm, inner sep=1pt},
                edge/.style={->, >=Latex, semithick},
                observed/.style={circle, draw, fill=black!10, minimum size=6mm, inner sep=1pt},
                biedge/.style={<->, >=Latex, semithick},
            ]
            \node (x2) {$X_2$};
            \node [right=0.6cm of x2] (x1) {$X_1$};
            \node [above right=0.2cm and 0.6cm of x1] (v1) [observed] {$V_1$};
            \node [below right=0.2cm and 0.6cm of v1] (v2) [observed] {$V_2$};
            
            \draw[biedge] (v1) -- (v2);
            \draw[biedge] (v1) -- (x1);
            \draw[edge] (x2) -- (x1);
            \draw[edge] (x1) -- (v2);
            \end{tikzpicture}
        \end{minipage}
        \label{fig:setting2_graph_3_r04}
    }
    \subfigure[Causal graph 4]{
        \begin{minipage}[c][2.5cm][c]{0.28\linewidth}
            \centering
            \begin{tikzpicture}[
                baseline=(current bounding box.center),
                node distance=0.6cm,
                every node/.style={circle, draw, minimum size=6mm, inner sep=1pt},
                edge/.style={->, >=Latex, semithick},
                observed/.style={circle, draw, fill=black!10, minimum size=6mm, inner sep=1pt},
                biedge/.style={<->, >=Latex, semithick},
            ]
            \node (x2) {$X_2$};
            \node [right=0.6cm of x2] (x1) {$X_1$};
            \node [above right=0.2cm and 0.6cm of x1] (v1) [observed] {$V_1$};
            \node [below right=0.2cm and 0.6cm of v1] (v2) [observed] {$V_2$};
            
            \draw[edge] (v1) -- (v2);
            \draw[biedge] (v1) -- (x1);
            \draw[edge] (x2) -- (x1);
            \draw[edge] (x1) -- (v2);
            \end{tikzpicture}
        \end{minipage}
        \label{fig:setting2_graph_4_r024}
    }
    \subfigure[Causal graph 5]{
        \begin{minipage}[c][2.5cm][c]{0.28\linewidth}
            \centering
            \begin{tikzpicture}[
                baseline=(current bounding box.center),
                node distance=0.6cm,
                every node/.style={circle, draw, minimum size=6mm, inner sep=1pt},
                edge/.style={->, >=Latex, semithick},
                observed/.style={circle, draw, fill=black!10, minimum size=6mm, inner sep=1pt},
                biedge/.style={<->, >=Latex, semithick},
            ]
            \node (v1) [observed]  {$V_1$};
            \node [right=0.6cm of v1] (v2) [observed]  {$V_2$};
            \node [above left=0.2cm and 0.6cm of v1] (x1) {$X_1$};
            \node [below left=0.2cm and 0.6cm of v1] (x2) {$X_2$};

            \draw[edge] (v1) -- (v2);
            \draw[edge] (v1) -- (x1);
            \draw[edge]  (x2) to [bend right] (v2); 
            \draw[biedge] (x2) -- (v1);
            \draw[biedge]  (x1) to [bend left] (v2); 
            
            \end{tikzpicture}
        \end{minipage}
        \label{fig:setting2_graph_5_r3}
    }
    \caption{The five causal graphs used in the setting on extending PAG. The filled nodes are presented in the original PAG.}
    \label{fig:setting2_graph_5_graphs}
    \vspace{0.1in}
\end{figure}

\paragraph{Baselines and Evaluation} We consider two baselines in this setting. The first considered one is \textbf{DATA+CoT} that prompting LLMs to decide the edge marks by analyzing a portion of samples from each value combination of $V_1$ and $V_2$. The second one is \textbf{DATA+FCI}, the ablation version of single round COAT without verification or iteration procedure.

For evaluation, we report the precision, recall, and F1 score for arrow heads identification between $V_1$ and $V_2$, and the purpose is to confirm our previous identifiability results. We also report the exact edge mark accuracy, to see whether each method can predict arrow head, arrow tail and circle mark in the edge between $V_1$ and $V_2$.

\paragraph{Finding 5: \ours reliably improves edge orientation by extending useful nodes} We present the averaged result over all 5 graphs in the beginning of Table~\ref{tab:apple-PAG}. In the arrow-head metrics, \ours shows best performance and lowest standard deviation compared with the CoT baseline and its own ablation version. Compared with \emph{DATA+COT}, \emph{\ours} has significant improvement, indicating the effectiveness of \ours iteration.

\paragraph{Finding 6: LLMs' reasoning fails when reality deviates from its prior} Instead of extending the node set in PAG, \emph{DATA+CoT} improves orientation by LLMs' own reasoning ability with unstructured samples from each $(V1, V2)$ value combination. It can achieve competitive results when the causal graph aligns with commonsense, e.g., in graph 1, taste is the cause of the rating score. However, its performance can drop severely when the structure is different. For example, in graph 2, there is a latent confounder between taste and rating score; in graph 5, the market potential is the cause of the rating score. In the latter example, LLMs tend to believe the rating score determines the market potential. Therefore, it might be important to incorporate statistical methods to avoid subtle bias from prior.   


\begin{longtable}[t]{@{}ccccccc@{}}
\caption{Results on extending Partial Ancestral Graph}\label{tab:apple-PAG}\\
\toprule
\multicolumn{1}{l}{}                 & \multicolumn{1}{l}{}          & \multicolumn{1}{l}{} & \multicolumn{1}{l}{\textbf{Edge Mark}} & \multicolumn{3}{c}{\textbf{Arrow Head}}          \\ \cmidrule(lr){4-4} \cmidrule(l){5-7}
\endfirsthead
\toprule
\multicolumn{3}{c}{} & \textbf{Edge Mark} & \multicolumn{3}{c}{\textbf{Arrow Head}} \\
\cmidrule(lr){4-4} \cmidrule(l){5-7}
 & & & \textbf{Accuracy} & \textbf{Precision} & \textbf{Recall} & \textbf{F1} \\
\midrule 
\multicolumn{7}{r}{{(Continued from previous page)}} \\
\endhead
\multicolumn{7}{r}{{(Continued on next page)}} \\
\endfoot
\bottomrule
\endlastfoot
\multicolumn{1}{l}{}                 & \multicolumn{1}{l}{}          & \multicolumn{1}{l}{} & \textbf{accuracy}                      & \textbf{precision}   & \textbf{recall}      & \textbf{F1}          \\* \midrule
\multirow{9}{*}{\makecell{Average \\over \\graphs}} & \multirow{3}{*}{DeepSeek-V3}     & DATA+CoT             & 48 \std{40}                   & 52 \std{36} & 54 \std{34} & 53 \std{35} \\
                                     &                               & DATA+FCI             & 66 \std{19}                   & 88 \std{11} & 76 \std{22} & 80 \std{17} \\
                                     &                               & COAT                 & 88 \std{8}                    & 96 \std{9}  & 88 \std{11} & 91 \std{9}  \\* \cmidrule(l){2-7} 
                                     & \multirow{3}{*}{GLM-4-Plus}   & DATA+CoT             & 60 \std{16}                   & 74 \std{17} & 70 \std{22} & 69 \std{16} \\
                                     &                               & DATA+FCI             & 54 \std{30}                   & 68 \std{27} & 58 \std{25} & 61 \std{24} \\
                                     &                               & COAT                 & 68 \std{19}                   & 76 \std{17} & 74 \std{19} & 75 \std{19} \\* \cmidrule(l){2-7} 
                                     & \multirow{3}{*}{GPT-4.1-mini} & DATA+CoT             & 43 \std{24}                   & 80 \std{45} & 65 \std{42} & 70 \std{41} \\
                                     &                               & DATA+FCI             & 63 \std{35}                   & 73 \std{28} & 66 \std{34} & 66 \std{34} \\
                                     &                               & COAT                 & 74 \std{24}                   & 93 \std{11} & 77 \std{26} & 76 \std{25} \\* \midrule
\multirow{9}{*}{Graph 1}             & \multirow{3}{*}{DeepSeek-V3}     & DATA+CoT             & 80 \std{45}                   & 80 \std{45} & 80 \std{45} & 80 \std{45} \\
                                     &                               & DATA+FCI             & 80 \std{45}                   & 80 \std{45} & 80 \std{45} & 80 \std{45} \\
                                     &                               & COAT                 & 100 \std{0}                   & 100 \std{0} & 100 \std{0} & 100 \std{0} \\* \cmidrule(l){2-7} 
                                     & \multirow{3}{*}{GLM-4-Plus}   & DATA+CoT             & 50 \std{0}                    & 90 \std{22} & 100 \std{0} & 93 \std{15} \\
                                     &                               & DATA+FCI             & 80 \std{45}                   & 80 \std{45} & 80 \std{45} & 80 \std{45} \\
                                     &                               & COAT                 & 60 \std{55}                   & 60 \std{55} & 60 \std{55} & 60 \std{55} \\* \cmidrule(l){2-7} 
                                     & \multirow{3}{*}{GPT-4.1-mini} & DATA+CoT             & 50 \std{0}                    & 100 \std{0} & 100 \std{0} & 100 \std{0} \\
                                     &                               & DATA+FCI             & 100 \std{0}                   & 100 \std{0} & 100 \std{0} & 100 \std{0} \\
                                     &                               & COAT                 & 100 \std{0}                   & 100 \std{0} & 100 \std{0} & 100 \std{0} \\* \midrule
\multirow{9}{*}{Graph 2}             & \multirow{3}{*}{DeepSeek-V3}     & DATA+CoT             & 30 \std{45}                   & 30 \std{45} & 30 \std{45} & 30 \std{45} \\
                                     &                               & DATA+FCI             & 80 \std{27}                   & 100 \std{0} & 80 \std{27} & 87 \std{18} \\
                                     &                               & COAT                 & 80 \std{27}                   & 100 \std{0} & 80 \std{27} & 87 \std{18} \\* \cmidrule(l){2-7} 
                                     & \multirow{3}{*}{GLM-4-Plus}   & DATA+CoT             & 40 \std{42}                   & 60 \std{55} & 40 \std{42} & 47 \std{45} \\
                                     &                               & DATA+FCI             & 90 \std{22}                   & 100 \std{0} & 90 \std{22} & 93 \std{15} \\
                                     &                               & COAT                 & 100 \std{0}                   & 100 \std{0} & 100 \std{0} & 100 \std{0} \\* \cmidrule(l){2-7} 
                                     & \multirow{3}{*}{GPT-4.1-mini} & DATA+CoT             & 50 \std{0}                    & 100 \std{0} & 50 \std{0}  & 67 \std{0}  \\
                                     &                               & DATA+FCI             & 100 \std{0}                   & 100 \std{0} & 100 \std{0} & 100 \std{0} \\
                                     &                               & COAT                 & 100 \std{0}                   & 100 \std{0} & 100 \std{0} & 100 \std{0} \\* \midrule
\multirow{9}{*}{Graph 3}             & \multirow{3}{*}{DeepSeek-V3}     & DATA+CoT             & 20 \std{45}                   & 20 \std{45} & 20 \std{45} & 20 \std{45} \\
                                     &                               & DATA+FCI             & 40 \std{22}                   & 80 \std{45} & 40 \std{22} & 53 \std{30} \\
                                     &                               & COAT                 & 80 \std{27}                   & 100 \std{0} & 80 \std{27} & 87 \std{18} \\* \cmidrule(l){2-7} 
                                     & \multirow{3}{*}{GLM-4-Plus}   & DATA+CoT             & 80 \std{27}                   & 100 \std{0} & 80 \std{27} & 87 \std{18} \\
                                     &                               & DATA+FCI             & 40 \std{22}                   & 80 \std{45} & 40 \std{22} & 53 \std{30} \\
                                     &                               & COAT                 & 50 \std{50}                   & 60 \std{55} & 50 \std{50} & 53 \std{51} \\* \cmidrule(l){2-7} 
                                     & \multirow{3}{*}{GPT-4.1-mini} & DATA+CoT             & 63 \std{25}                   & 100 \std{0} & 75 \std{29} & 83 \std{19} \\
                                     &                               & DATA+FCI             & 50 \std{41}                   & 75 \std{50} & 63 \std{48} & 67 \std{47} \\
                                     &                               & COAT                 & 50 \std{41}                   & 75 \std{50} & 63 \std{48} & 67 \std{47} \\* \midrule
\multirow{9}{*}{Graph 4}             & \multirow{3}{*}{DeepSeek-V3}     & DATA+CoT             & 100 \std{0}                   & 100 \std{0} & 100 \std{0} & 100 \std{0} \\
                                     &                               & DATA+FCI             & 80 \std{27}                   & 100 \std{0} & 100 \std{0} & 100 \std{0} \\
                                     &                               & COAT                 & 90 \std{22}                   & 100 \std{0} & 100 \std{0} & 100 \std{0} \\* \cmidrule(l){2-7} 
                                     & \multirow{3}{*}{GLM-4-Plus}   & DATA+CoT             & 70 \std{45}                   & 80 \std{45} & 70 \std{45} & 73 \std{43} \\
                                     &                               & DATA+FCI             & 20 \std{27}                   & 40 \std{55} & 40 \std{55} & 40 \std{55} \\
                                     &                               & COAT                 & 60 \std{42}                   & 80 \std{45} & 80 \std{45} & 80 \std{45} \\* \cmidrule(l){2-7} 
                                     & \multirow{3}{*}{GPT-4.1-mini} & DATA+CoT             & 50 \std{0}                    & 100 \std{0} & 100 \std{0} & 100 \std{0} \\
                                     &                               & DATA+FCI             & 40 \std{55}                   & 40 \std{55} & 40 \std{55} & 40 \std{55} \\
                                     &                               & COAT                 & 60 \std{22}                   & 90 \std{22} & 80 \std{45} & 73 \std{43} \\* \midrule
\multirow{9}{*}{Graph 5}             & \multirow{3}{*}{DeepSeek-V3}     & DATA+CoT             & 10 \std{22}                   & 30 \std{45} & 40 \std{55} & 33 \std{47} \\
                                     &                               & DATA+FCI             & 50 \std{0}                    & 80 \std{45} & 80 \std{45} & 80 \std{45} \\
                                     &                               & COAT                 & 90 \std{22}                   & 80 \std{45} & 80 \std{45} & 80 \std{45} \\* \cmidrule(l){2-7} 
                                     & \multirow{3}{*}{GLM-4-Plus}   & DATA+CoT             & 60 \std{55}                   & 60 \std{55} & 60 \std{55} & 60 \std{55} \\
                                     &                               & DATA+FCI             & 40 \std{55}                   & 40 \std{55} & 40 \std{55} & 40 \std{55} \\
                                     &                               & COAT                 & 70 \std{45}                   & 80 \std{45} & 80 \std{45} & 80 \std{45} \\* \cmidrule(l){2-7} 
                                     & \multirow{3}{*}{GPT-4.1-mini} & DATA+CoT             & 0 \std{0}                     & 0 \std{0}   & 0 \std{0}   & 0 \std{0}   \\
                                     &                               & DATA+FCI             & 25 \std{29}                   & 50 \std{58} & 25 \std{50} & 25 \std{50} \\
                                     &                               & COAT                 & 60 \std{22}                   & 100 \std{0} & 40 \std{55} & 40 \std{55} \\
\end{longtable}

\subsection{Evaluation on Realistic Benchmarks}
\label{sec:exp}

\begin{figure*}[t]
    \centering
    \subfigure[Faithful ground truth]{
        \includegraphics[width=0.3\textwidth, trim=40 40 35 35, clip]{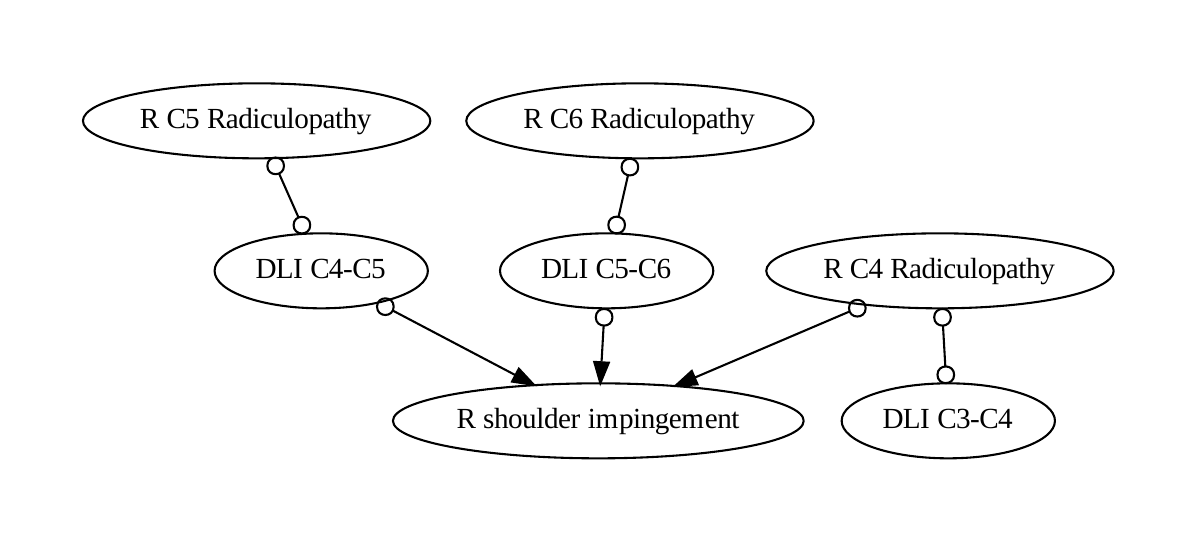}
        \label{fig:faithful_ground_truth}
    }
    \subfigure[GPT-4 \ours]{
        \includegraphics[width=0.35\textwidth, trim=40 40 35 35, clip]{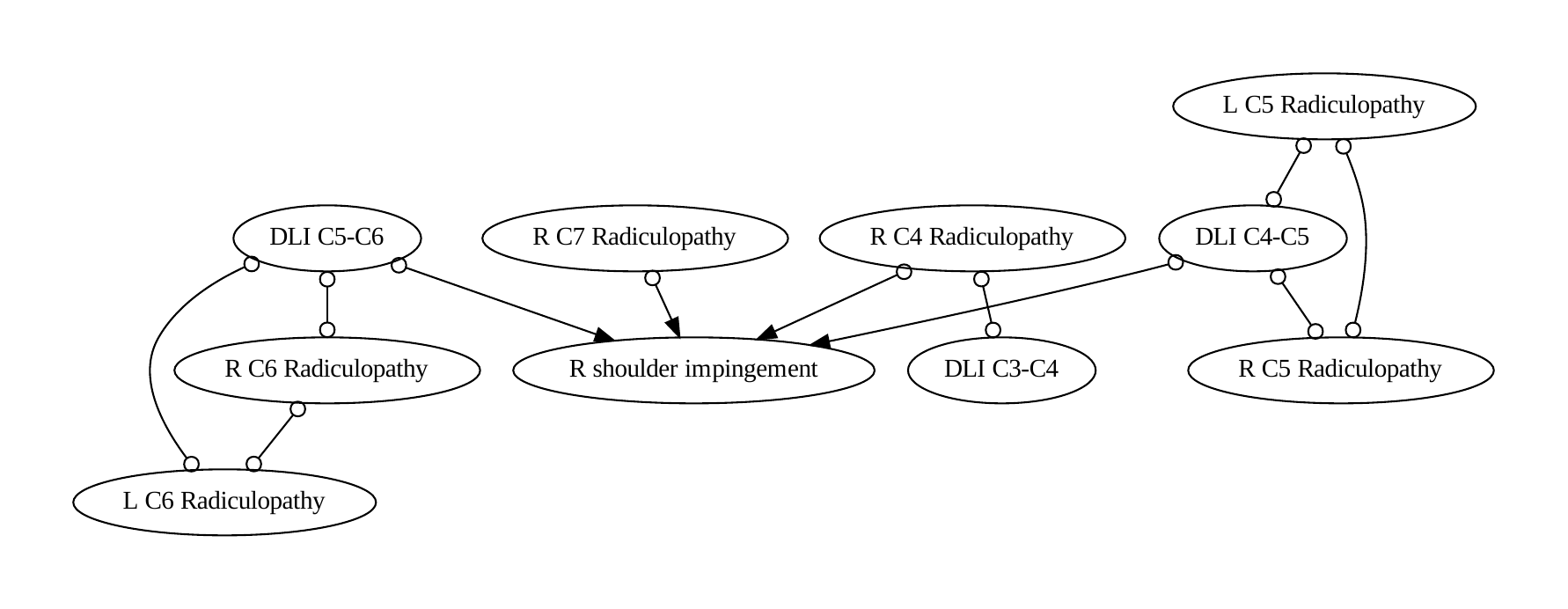}
    }
    \subfigure[GPT-4 reasoning]{
        \includegraphics[width=0.25\textwidth, trim=40 40 35 35, clip]{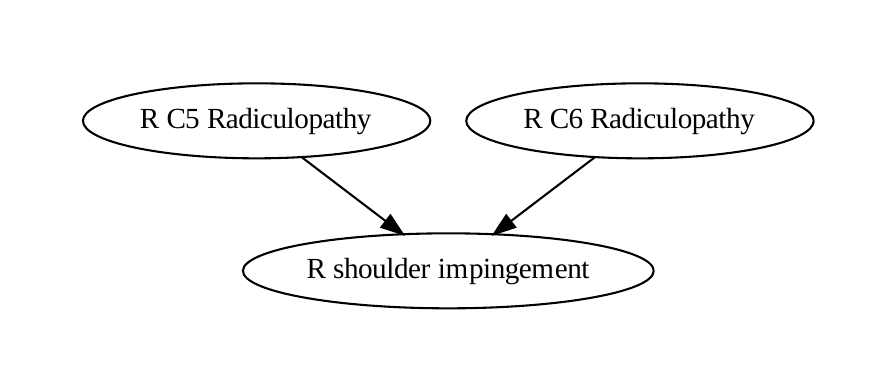}
    }
    \caption{The discovered causal graphs in \pain. (c) shows the result based on directly prompting LLM to reason for the causal relations among all factors. Disconnected ones are dropped. }
    \label{fig:causal_graph_neuro}
\end{figure*}

\label{sec:pain_results}
\label{sec:pain_bench}
\paragraph{Benchmark construction} In the \pain benchmark, we convert the dataset into a clinical diagnosis task. In the original dataset, there are three levels of causal variables, including the symptom level, radiculopathy level, and the pathophysiology level.
In the experiments, we mainly consider the target variable of right shoulder impingement.
When generating the clinical diagnosis notes as $\vx$ using \gptf, we avoid any mention of variables other than symptoms. We generated 100 samples for LLMs' analysis; since the number of possible factors is finite, we generate 1000 tabular data for CI tests.

As we intend to leverage the \pain benchmark to simulate the real-world diagnosis, after the factor proposal stage, we directly incorporate external tools to measure the values of the candidate factors.
More details about the construction of the \pain are given in Appendix~\ref{appdx:pain_construction}.

\paragraph{Evaluation and baselines} In \pain, we adopt a similar evaluation protocol and the baselines as in \apple. Nevertheless, due to the faithfulness issue of the original dataset~\citep{neuropanic}, for the evaluation of causal relation discovery, we mainly conduct a qualitative comparison between the ground truth that is faithful to the data, against the baselines and \ours.

\begin{table}[t]
    \caption{Factor proposal in \pain. }
    \small\centering
    \resizebox{0.7\textwidth}{!}{
        \begin{tabular}{ll|ccccc}
            \toprule
            \multirow{2}{*}{LLM}          & \multirow{2}{*}{Method} & \multicolumn{5}{c}{Factor Proposal}                         \\
                                          &                         & PA                                  & AN & OT & Acc  & F1   \\
            \midrule
            \multirow{3}{*}{GPT-4}        & Meta                    & 3                                   & 5  & 6  & 0.91 & 0.59 \\
                                          & DATA                    & 2                                   & 2  & 0  & 0.95 & 0.50 \\
                                          & DATA+CoT  & 3 & 4 & 13 & 0.81 & 0.35\\
                                          & \ours                   & 3                                   & 6  & 3  & 0.96 & 0.80 \\
            \midrule
            \multirow{3}{*}{GPT-3.5}     & Meta                    & 3                                   & 5  & 6  & 0.91 & 0.59 \\
                                          & DATA                    & 3                                   & 5  & 4  & 0.94 & 0.67 \\
                                          & DATA+CoT  & 2 & 2 & 3 & 0.91 & 0.36\\
                                          & \ours                   & 3                                   & 5  & 2  & 0.96 & 0.77 \\
            \midrule
            \multirow{3}{*}{\makecell{\llama-70b}}   & Meta                    & 2                                   & 4  & 5  & 0.91 & 0.53 \\
                                          & DATA                    & 3                                   & 3  & 1  & 0.95 & 0.60 \\
                                          & DATA+CoT  & 2 & 4 & 7 & 0.88 & 0.47\\
                                          & \ours                   & 3                                   & 6  & 2  & 0.97 & 0.86 \\
            \midrule
            \multirow{3}{*}{\makecell{\llama-13b}}   & Meta                    & 1                                   & 3  & 6  & 0.88 & 0.40 \\
                                          & DATA                    & 3                                   & 6  & 4  & 0.95 & 0.75 \\
                                          & DATA+CoT  & 0 & 1 & 10 & 0.81 & 0.12\\
                                          & \ours                   & 3                                   & 6  & 2  & 0.97 & 0.86 \\
            \midrule
            \multirow{3}{*}{\makecell{\llama-7b}}    & Meta                    & 1                                   & 1  & 17 & 0.72 & 0.08 \\
                                          & DATA                    & 3                                   & 6  & 3  & 0.96 & 0.80 \\
                                          & DATA+CoT  & 0 & 0 & 10 & 0.79 & $-$\\
                                          & \ours                   & 3                                   & 6  & 2  & 0.97 & 0.86 \\
            \midrule
            \multirow{3}{*}{\makecell{\mistral-Medium}} & Meta                    & 3                                   & 6  & 3  & 0.96 & 0.80 \\
                                          & DATA                    & 3                                   & 3  & 2  & 0.94 & 0.66 \\
                                          & DATA+CoT  & 3 & 5 & 8 & 0.88 & 0.53\\
                                          & \ours                   & 3                                   & 6  & 2  & 0.97 & 0.86 \\
            \bottomrule
            \label{table:neuropathic}
        \end{tabular}}
\end{table}

\paragraph{Factor proposal} The quantitative results on \pain benchmark are given in Table~\ref{table:neuropathic}. PA, AN, and OT refer to the parents, ancestors, and others, respectively. Accuracy and F1 measure the recovery of the causal ancestors. Similarly, we can find that \ours consistently outperforms all of the baselines regardless of which LLMs are incorporated. In particular, even with the weakest backbone model, i.e.,\llama-7b, \ours can still effectively leverage the intrinsic rich knowledge and beat the baselines with more powerful LLMs.


\paragraph{Causal relation recovery} Fig.~\ref{fig:faithful_ground_truth} shows the causal graph obtained by FCI running on the original data, where we can find that several causal relations cannot hold on the data. As shown in Fig.~\ref{fig:causal_graph_neuro}, when using LLMs to perform the reasoning, LLMs cannot identify the faithfulness issues. In contrast, \ours can imply faithful causal insights.



\subsection{Eliciting Adjustment Set for Causal Inference}
\label{subsec:CI-ATE}

\paragraph{Benchmark and baselines} We use two realistic datasets about marketing for estimating Average Treatment Effect. The original ones are randomized tabular datasets: Hillstrom~(\citeyear{radcliffe2008hillstrom}): the effect of mailing on customers' visiting to a website; and Retail Hero~(\citeyear{x5_retail_hero}): the effect of Short Message on customers' purchasing a product. We follow the evaluation setting introduced by~\cite{dhawan2024end}, where $\mX$ is a synthesized social-media post, and $T$ and $Y$ are confounded. In Hillstrom, the confounder is the \texttt{new customer indicator}, and in Retail Hero it is the \texttt{customer's age}. 

We consider two baselines: (1) \textbf{DATA+IPW}: LLMs will conduct reasoning to find the possible factors that will be utilized to estimate the ATE. (2) \textbf{DATA+IPW}: the proposed factors will be further verified by FCI algorithm based on the identified causal graphs.

In addition, we also include the human expert results as a reference. For the baseline of human expert results, a set of human-crafted factor descriptions is provided to predict the factor values by bag-of-word sentence encoders, or byLLMs~\citep{dhawan2024end}. For \ours method, we not include this prior knowledge. The input is only $(\mX,T,Y)$. The factor descriptions will be identified by Algo.~\ref{alg:coat-t-y} with access to an LLM.

\paragraph{Causal graphs interpretation} The PAGs produced by \ours with GLM-4-Plus are displayed in Figure~\ref{fig:causal_graph_ate}. In Figure~\ref{fig:causal_graph_ate_hill}, the direction between mailing and visit are confirmed by the elicited factors, while in Figure~\ref{fig:causal_graph_ate_retail} the skeleton is missing, indicating a faithfulness issue in the dataset. Interestingly, in Figure~\ref{fig:causal_graph_ate_hill}, \ours doesn't directly include the {new customer indicator} factors, but alternatively propose some related ones like {shopping frequency} or {spending amount}, providing an unexpected point of view. In Figure~\ref{fig:causal_graph_ate_hill}, the {customer's age} factor is included in the graph, and is directly connected with the SMS variable, while its connection to the purchase variable is missing, indicating their correlation may not very strong.

\begin{figure*}[t]
    \centering
    \subfigure[hillstrom]{
        \includegraphics[width=0.4\textwidth, trim=40 40 35 35, clip]{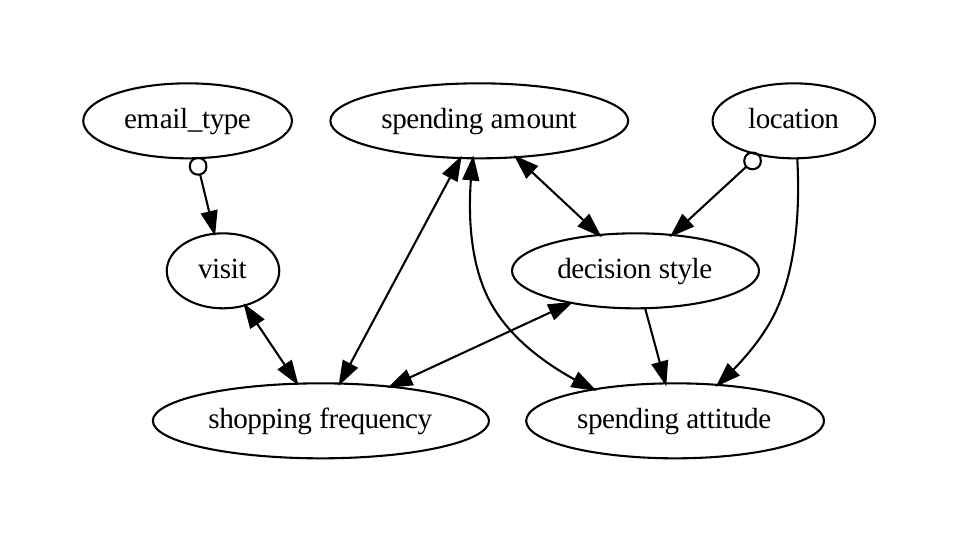}
        \label{fig:causal_graph_ate_hill}
    }
    \subfigure[retailhero]{
        \includegraphics[width=0.55\textwidth, trim=40 40 35 35, clip]{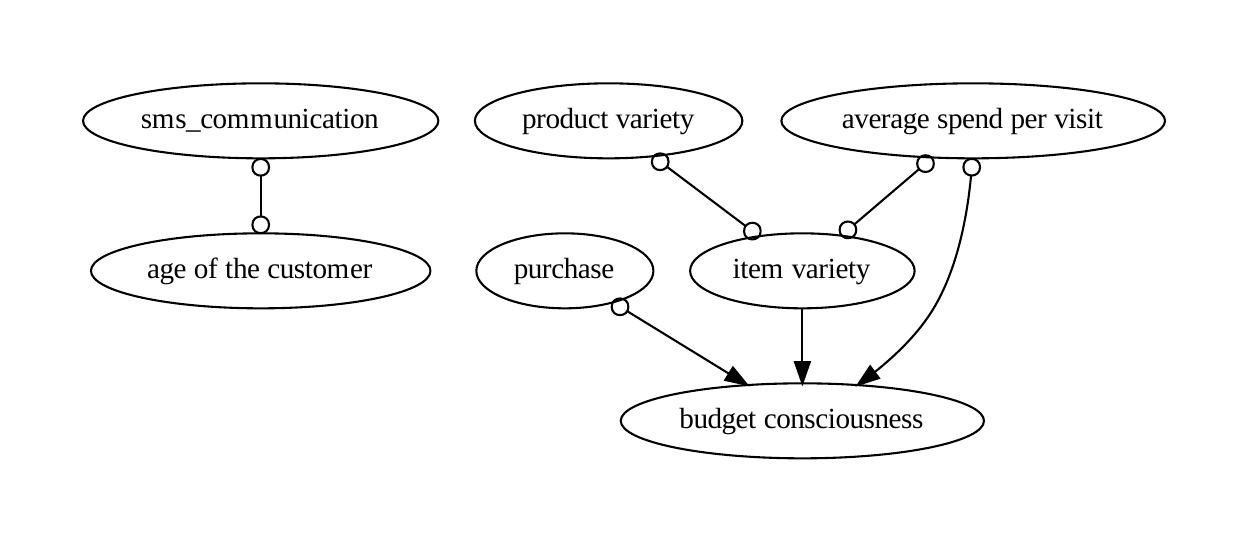}
        \label{fig:causal_graph_ate_retail}
    }
    \caption{The Causal Graph Identified by Algo.~\ref{alg:coat-t-y} on two marketing datasets.}
    \label{fig:causal_graph_ate}
\end{figure*}

\paragraph{Treatment effect estimation} The ATE estimation results are presented in Table~\ref{table:CI-ATE}. For each experiment, we choose the largest valid adjustment set from the generated PAG for evaluation, if there are multiple such sets, we calculate their averaged ATE. Experiments are repeated 3 times. Compared with baselines, \ours has lowest RMSE loss. Compared with human-expert results, we find the factor quality of \ours are also competitive.

\begin{table}[t]
    \caption{ATE Estimation Results}
    \vspace{0.05in}
    \label{table:CI-ATE}
    \small\centering
    \resizebox{\textwidth}{!}{
        \begin{tabular}{cc|cccc} \toprule
            \multicolumn{1}{l}{}           & \multicolumn{1}{l}{} & \multicolumn{2}{c}{Hillstrom} & \multicolumn{2}{c}{Retail Hero} \\ 
            Factor Provider                & Method               & ATE (\%)          & RMSE      & ATE (\%)           & RMSE       \\  \midrule
            \multirow{1}{*}{-}           & Ground Truth         & 6.09              & -         & 3.32               & -          \\ \midrule
            \multirow{3}{*}{\makecell{Human Expert\\~\citep{dhawan2024end}}}                             & \makecell{Bag-of-Words}         & 7.57\std{1.37}       & 2.23      & 2.61\std{2.08}        & 2.42       \\ 
             & N-MC IPW             & 4.81\std{0.80}       & 1.51      & 1.85\std{2.01}        & 2.49       \\
                                           & N-IPW                & 5.23\std{1.00}       & 1.32      & 3.83\std{1.29}        & 1.39       \\ \midrule
            \multirow{3}{*}{DeepSeek-V3}                    & DATA+IPW              & 4.31\std{0.12}       & 1.78      & -1.68\std{1.72}      & 5.29       \\ 
                                                            & DATA+FCI+IPW          & 4.73\std{0.68}       & 1.52      & 2.29\std{0.39}      & 1.10       \\ 
                                                            & COAT(IPW)             & 5.70\std{0.26}       & \textbf{0.47}      & 3.16\std{0.47}      & \textbf{0.50}       \\  \midrule
            \multirow{3}{*}{GLM-4-Plus}                     & DATA+IPW              & 0.18\std{0.88}       & 5.98      & 73.21\std{15.93}    & 71.68      \\ 
                                                            & DATA+FCI+IPW          & 5.62\std{0.31}       & 0.57      & 2.38\std{0.51}      & 1.07       \\
                                                            & COAT(IPW)             & 5.92\std{0.17}       &\textbf{ 0.23}      & 2.39\std{0.25}      & \textbf{0.96 }      \\ \midrule
            \multirow{3}{*}{GPT-4.1-mini}                   & DATA+IPW              & 3.70\std{1.93}       & 3.07      & 1.04\std{1.42}      & 2.68       \\ 
                                                            & DATA+FCI+IPW          & 4.11\std{0.37}       & 2.01      & 2.40\std{0.62}      & 1.11       \\
                                                            & COAT(IPW)             & 4.79\std{1.38}       & \textbf{1.90}      & 2.54\std{0.06}      &\textbf{ 0.78}       \\ 
            \bottomrule
            \end{tabular}}
    \vspace{0.2in}
\end{table}

\subsection{Ablation Studies}
\label{subsec:ablation-study}
\paragraph{Group Size in Prompt} In the COAT prompt, several samples are given and grouped by the values of the target variable. The samples in each group are randomly selected to a fixed number (like 3 samples per group). Empirically, we keep it to be 3 throughout all experiments (sometimes smaller than 3 if samples are not enough). In practice, it is mainly constrained by the LLM's context length.

\paragraph{The Number of Clusters} When constructing feedback, we first use clustering to separate the dataset and then find the cluster where the target variable is not explained well by current factors (This is a heuristic for the problem in line 191). Empirically, we set the number of clusters to be one plus the number of current factors.

We conduct ablation studies of COAT with GPT-4 using different hyperparameters. As shown in Table~\ref{table:ablation_hyperpara}, one can observe that COAT is not sensitive to these hyperparameters and performs robustly well than the baselines under different hyperparameter setups.

\begin{table}[h]

\caption{Results about Ablation Study on Hyperparameters.} \label{table:ablation_hyperpara}
\resizebox{\textwidth}{!}{
\begin{tabular}{lcccccccc}
\toprule
Method  & Cluster Size              & Group Size    & MB                & NMB               & OT                & Recall            & Precision         & F1                \\ \midrule
META    & -                         & -             & 2.67\std{0.94}    & 0.67\std{0.47}    & 2.33\std{0.47}    & 0.53\std{0.19}    & 0.46\std{0.08}    & 0.49\std{0.13}    \\
DATA    & -                         & 3             & 3.00\std{0.00}    & 0.33\std{0.47}    & 0.00\std{0.00}    & 0.60\std{0.00}    & 0.92\std{0.12}    & 0.72\std{0.04}    \\
\ours   & \texttt{len(factor)+1}    & 3             & 4.00\std{0.82}    & 0.33\std{0.47}    & 0.00\std{0.00}    & 0.80\std{0.16}    & 0.93\std{0.09}    & 0.85\std{0.11}    \\
\ours   & \texttt{len(factor)+1}    & 1             & 4.67\std{0.58}    & 0.00\std{0.00}    & 0.00\std{0.00}    & 0.93\std{0.12}    & 1.00\std{0.00}    & 0.96\std{0.06}    \\
\ours   & 2                         & 3             & 3.67\std{1.53}    & 0.00\std{0.00}    & 0.00\std{0.00}    & 0.73\std{0.31}    & 1.00\std{0.00}    & 0.82\std{0.22}    \\
\bottomrule
\end{tabular}}
\end{table}

\section{Real-world Case Studies}
\label{sec:realwold-casestudy}

In this section, we use  \ours to analyze unstructured data from three real-world scenarios: (1) brain tumor detection with MRI images~\citep{bhuvaji2024brain}, where \ours elicits visual factors; (2) climatic reanalysis data with fine-grained time and space coverage~\citep{compo2011twentieth}, where \ours writes code to interact with a database. (3) three-year news summary about one stock from the New York Times~\citep{BidecInnovations2023}, where \ours is applied to analyze the sequential data.

\subsection{Eliciting Tumor-related Factors from Brain MRI Images} 
\label{casestudy:brain_tumor}
In this section, we utilize the \ours to explore the image dataset. Magnetic Resonance Imaging (MRI) is an important technique for detecting tumors in the human brain. The images are from an open Kaggle dataset (\href{https://www.kaggle.com/datasets/sartajbhuvaji/brain-tumor-classification-mri/data}{kaggle/brain-tumor-classification-mri}) with an open-sourced project.~\citep{bhuvaji2024brain}. Each sample is a scanning MRI of a human brain. In this case study, the interesting variable is the tumor type. We consider three types of MRI images: glioma, meningioma, and no tumor. We include 20 images for each category and the total sample size is 60.

\begin{figure}[t]
    \centering
    \includegraphics[width=0.6\textwidth]{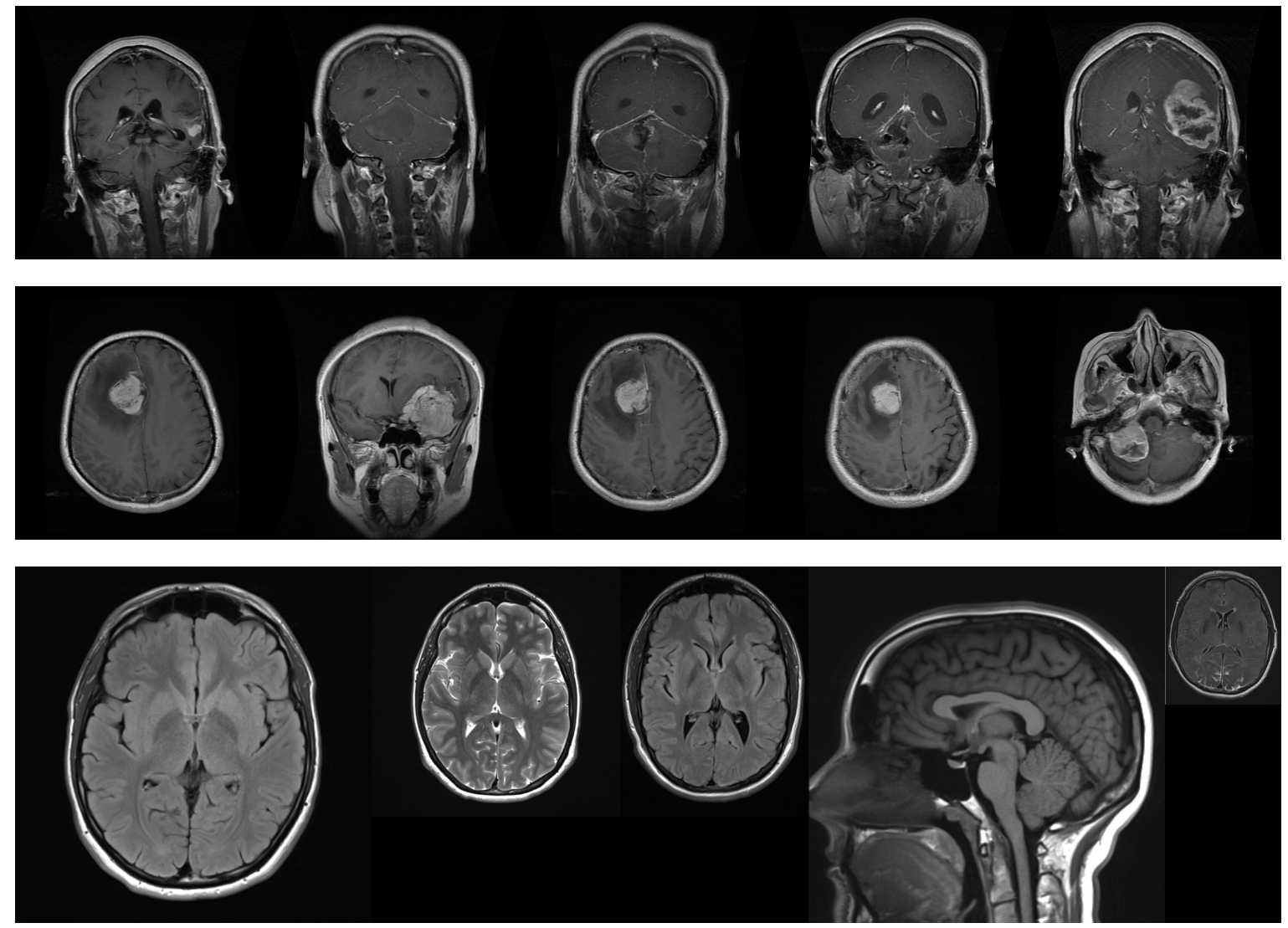}
    \caption{An example input from Brain Tumor data. Each row contains random samples from same category (top-down: glioma, meningioma, and no tumor).}
    \label{fig:realworld_casestudy_tumer_iter1}
\end{figure}

\paragraph{Data Processing}  We use gpt-4-vision-preview to handle image samples. As the current gpt4 cannot process multiple images simultaneously, we concatenate samples from different categories into one picture, as shown in Fig.~\ref{fig:realworld_casestudy_tumer_iter1}, and provide additional instructions in the prompt to explain the format. For each proposed factor, the LLMs will go through all 60 sample images individually to evaluate the factor value. 


\begin{figure}[h]
    \centering
    \includegraphics[width=\columnwidth]{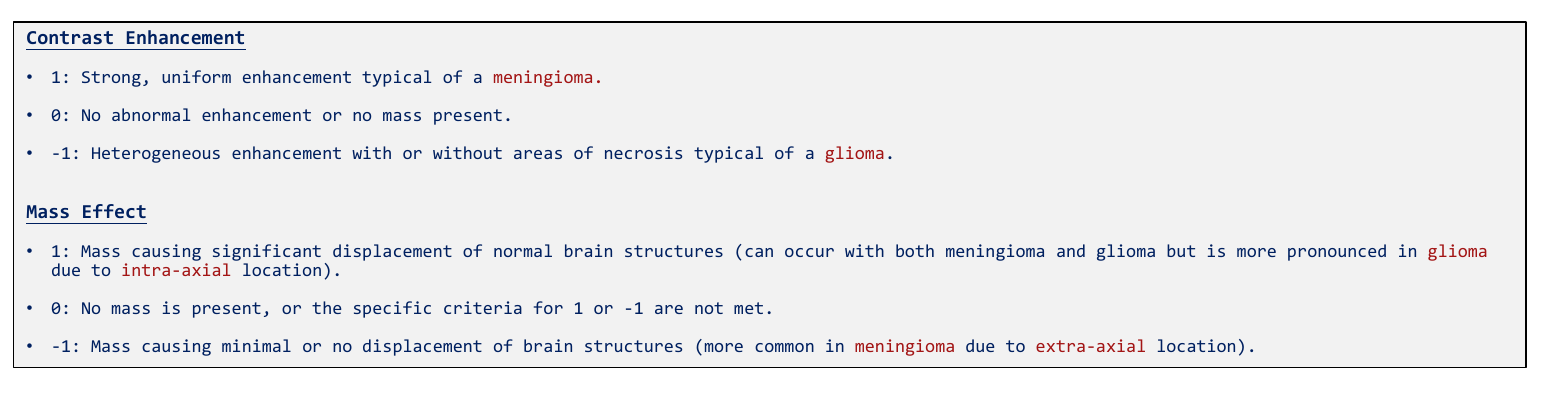}
    \caption{Detailed descriptions of these two final factors in the Brain Tumor case study.}
    \label{fig:factor_details_braintumor}
\end{figure}

\begin{figure}[h]
    \centering
    \includegraphics[width=0.5\columnwidth]{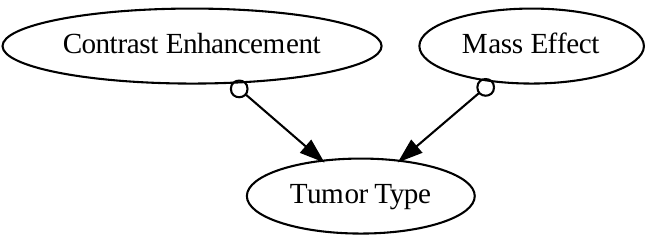}
    \caption{Final causal graph by \ours in the Brain Tumor case study.}
    \label{fig:braintumor_causal_graph}
\end{figure}

\paragraph{Result} As shown in Fig.~\ref{fig:braintumor_causal_graph}, there are two visual factors that appeared in the final causal graph: \emph{contrast enhancement} and \emph{mass effect}. One may refer to Fig.~\ref{fig:factor_details_braintumor} to see detailed descriptions of these two factors.
We verify the proposed factors by searching medical literature according to keywords in factor descriptions. And we found both two factors are aligned with existing research
~\citep{lyndon2019dural,haydar2022role,tuncc2021modeling,watts2014magnetic}. Therefore, we believe \ours framework can provide a reasonable starting point for domain experts.

\subsection{Eliciting Climate Factors with LLMs Using Tools}
\label{casestudy:enso}
The NOAA 20th Century Reanalysis V3 dataset~\citep{compo2011twentieth} contains contains high-dimensional information about Earth's atmosphere with a fine-grained time-and-spatial coverage. We use \ours to explore El Niño-Southern Oscillation (ENSO) phenomenon: irregular fluctuations in sea surface temperatures\citep{mcphaden1999genesis} in the Pacific Ocean.
The focus of this study is the future change in monthly SST in the Nino3 region, which could be an important indicator of ENSO events.

\paragraph{LLMs using tools} The dataset is in the NetCDF format (network Common Data Format), so it is not convenient to be directly fed to LLMs. Therefore, LLMs is required to write code to define specific factors. An example is provided in Lst.~\ref{lst:enso_factor_code}. 
The \emph{measurement} specifies the climate variables like precipitation rate or temperature; the \emph{level} specifies the vertical location, like surface or specific pressure level; the region is a rectangle about the area concerned. This function will output a single time series about the measurement on the specified level averaged within the specified region.

\begin{figure}[t]
   \begin{lstlisting}[caption={LLM can be prompted to propose factors using complex tools}, label={lst:enso_factor_code}]
    def get_air_temperature_at_cloud_layer_top_nino3():
        '''
        Air Temperature at Cloud Layer Top (Nino3 Region)
        '''
        # Declare observation
        observation_name = Observation(
            measurement="Air Temperature",
            level="Isentropic Levels",
            region={'latitude_min': -5, 'latitude_max': 5, 'longitude_min': 210, 'longitude_max': 270},
            detailed_level=300
        )
    
        # Return the observation values directly
        return observation_name
    
    factor_dict['Air Temperature at Cloud Layer Top (Nino3 Region)'] = get_air_temperature_at_cloud_layer_top_nino3()

\end{lstlisting}
\end{figure}

\paragraph{Feedback construction} 
As shown in Fig.~\ref{fig:enso_feedback_example}, two types of information are included in the feedback: \emph{intermediate causal graph} and \emph{OLS regression result} of factors in the current estimation of a Markov blanket on the target variable. These two types of information can be drawn naturally from the \ours's intermediate results.

\begin{figure}[h]
    \centering
    \includegraphics[width=\columnwidth, trim=20 20 20 20, clip]{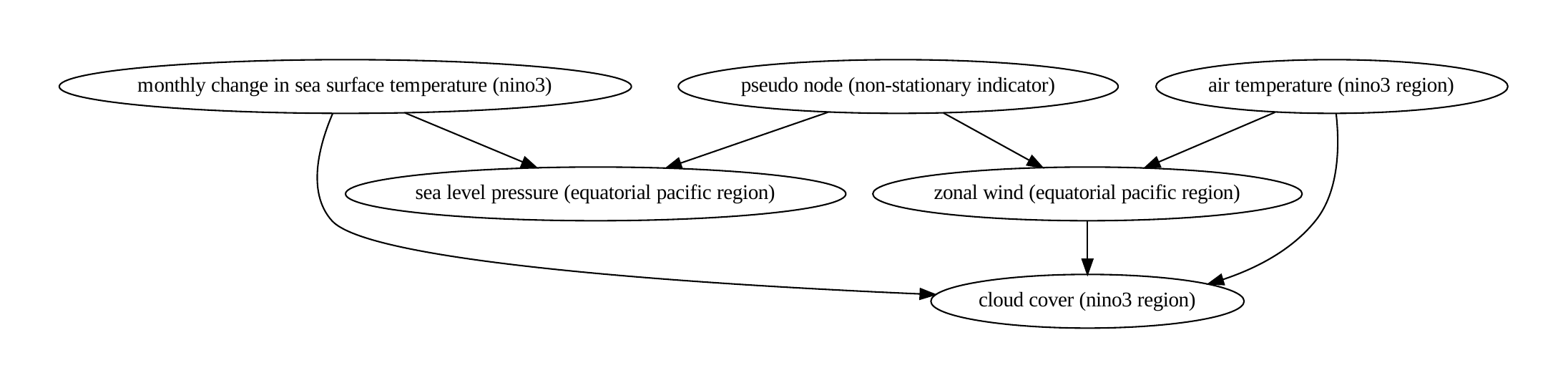} \\
    \includegraphics[width=\columnwidth]{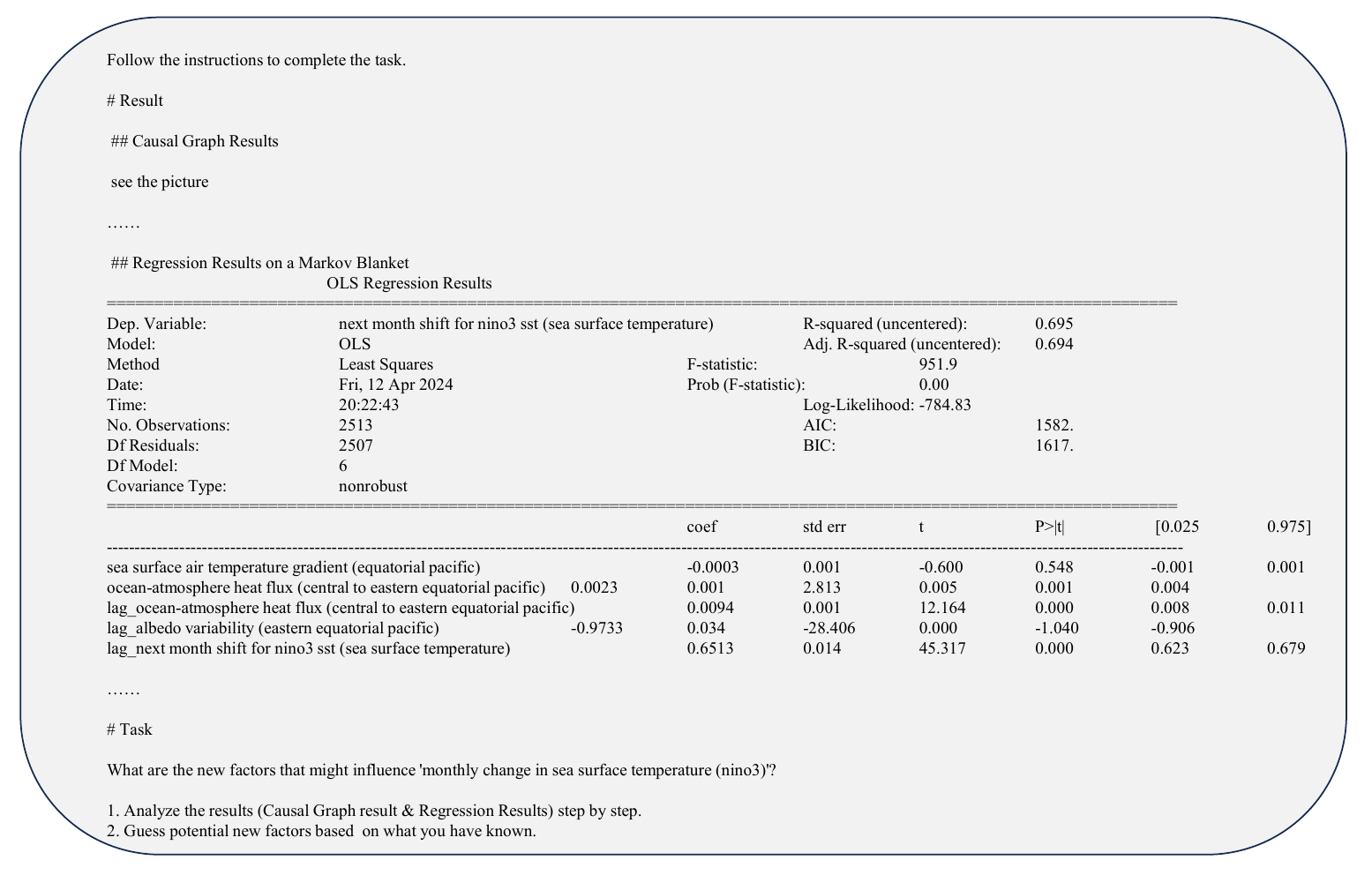}
    \caption{The prompt with feedback based on previous loop in the ENSO case study}
    \label{fig:enso_feedback_example}
\end{figure}

\paragraph{Result interpretation} 
In this case study, we are treating non-stationary climate data, while assuming the causal structure is invariant. Under this assumption, the non-stationarity is actually helpful for causal structure learning \citep{huang2020causal,mooij2020joint}. To this end, we utilize the CD-NOD algorithm \citep{huang2020causal} to fully utilize the changing causal modules for better identifiability. CD-NOD will first identify the non-stationary nodes, whose conditional distribution given the causal parents is changing with time, and then use them for better causal structure recovery. Four factors are identified to be non-stationary, as shown in Fig.~\ref{fig:enso_causal_graph}. 

We have adjusted the nodes' names, shapes, and colors for better visualization. There are 14 nodes in total, with 13 factors identified by \ours. Each factor is a time series about a certain climate measurement above a specific level averaged over a specific region. For simplicity, we only considered instantaneous causal relations among those time series. 

There are three regions identified to be relevant to the ENSO phenomenon: 
\begin{itemize}
    \item \textbf{Equatorial Pacific Region {\color{ensoorange}(Orange Nodes)}.} This region (5N-5S, 120W-280W) is one of the most active places about ENSO. It becomes significantly warm during the El Niño phase and becomes significantly cold during the La Niña phase \citep{ZHANG20171}.
    \item \textbf{Nino3 Region {\color{ensoblue}(Blue Nodes)}.} This region (5N-5S, 150W-90W) is one of the most classical regions to monitor the El Niño events by scientists. It is also used by humans to construct the target variables \citep{mcphaden1999genesis}. 
    \item \textbf{South American Coastal Region {\color{ensogreen}(Green Nodes)}.}  This region (0N-20S, 80W-60W) includes the Peruvian coastal up-welling system, and is noticed to be relevant to the ENSO cycle \citep{Tarazona2001}.
\end{itemize}

Some insights are delivered by paths in the output causal graph:
\begin{itemize}
    \item \textbf{{\color{ensoorange}Sea level Pressure.}} This factor is about the sea-level pressure on the equatorial Pacific region. The pressure gradient will influence the movement of warm water, and thus influence the sea surface temperature (SST) change \citep{bjerknes1969atmospheric}. In addition, pressure can influence the water evaporation and thus regulate through the water circulation. This also matches another indirect path $\text{\color{ensoorange}Sea level Pressure} \rightarrow \text{\color{ensoorange}Sensible Heat Net Flux} \rightarrow \text{\color{ensogreen}Volumetric Soil Moisture} \rightarrow \text{\color{ensoblue}Cloud Cover} \rightarrow \text{\color{ensoblue}SST Change}$.
    \item \textbf{{\color{ensoorange}Momentum Flux, V-compoent.}} This factor is about the vertical movement of air. It is crucial in driving atmospheric convection \citep{bjerknes1969atmospheric}, and it is related to the Walker Circulation, which is an important component in the ENSO system\citep{wang2004enso}. Also, it could influence the change in sea level pressure and indirectly influence the SST change. 
    \item \textbf{{\color{ensoblue}Cloud Cover.}} The factor is the fraction of the sky covered by clouds in the NINO3 region. It could influence the SST Change through solar radiation as well as water circulation. It is confirmed to have a significant correlation \citep{liu2016variation} with ENSO events and plays an important role in the atmospheric circulation and hydrological cycle \cite{mishra2019investigating}.
    \item \textbf{{\color{ensogreen}Soil Temperature.}} This might be a novel hypothesis proposed by \ours, since we found no sufficient research to confirm this point to the best of our knowledge. This causal graph has also suggested two possible indirect mechanisms: (1) through \emph{Volumetric Soil Moisture} and  \emph{Cloud Cover}; and (2) through \emph{Convective Precipitation Rate} and \emph{Sea Level Pressure}. Therefore, this finding encourages more serious investigations of these hypotheses. 
\end{itemize}

\begin{figure}[h]
    \centering
    \includegraphics[width=\columnwidth, trim=35 110 35 40, clip]{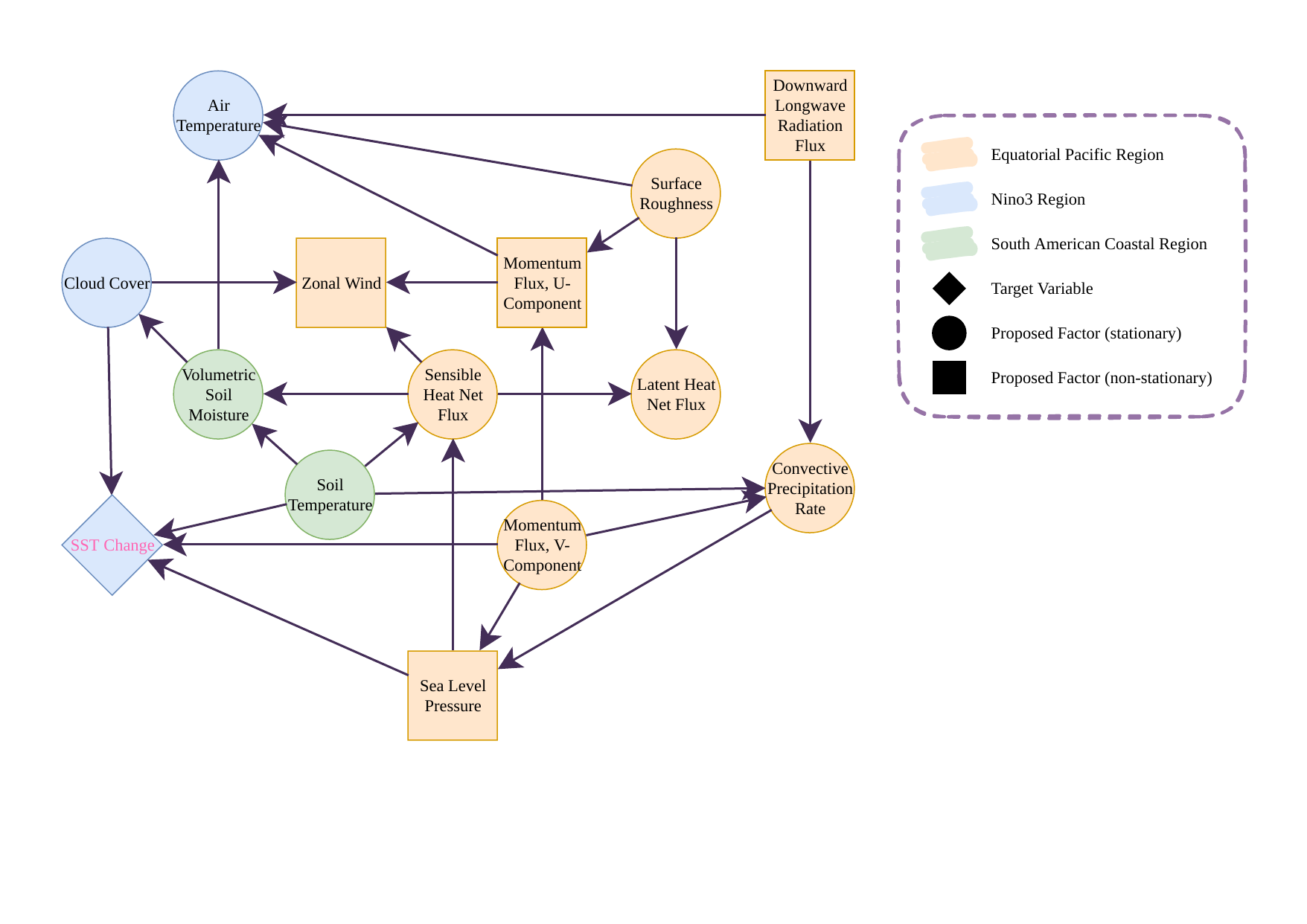}
    \caption{The final causal graph found by \ours in the ENSO case study}
    \label{fig:enso_causal_graph}
\end{figure}






            
            
            

\subsection{Eliciting Financial Factors with Sequential News Data}
\label{casestudy:stocknews}
In this section, we utilize \ours to explore time series data with text.
The dataset consists of the stock value of Microsoft (MSFT) from 2006 to 2009 and its news summary (only include news in \emph{the New York Times}). This is a subset of an open Kaggle dataset (\href{https://www.kaggle.com/datasets/BidecInnovations/stock-price-and-news-realted-to-it/data}{kaggle/stock-price-and-news-realted-to-it}).
Each sample is one trading day with the company's close stock price and news. 
The target value is the future return rate, and we are curious about factors in the related news.
We fed data during the first 200 trading days to \ours, and we used the following 400 trading days for evaluation.

The target value is a binary variable: $1$ for positive return rate in next week, 
$0$ otherwise.
News fed to \ours are grouped according to the value of $Y_t$. To keep the whole prompt within the context limit, only 3 samples will be randomly included in each group.

\paragraph{Factor processing} Given one proposed factor, denote its annotated value at the $t$-th trading day as $a_t \in \{-1, 0, 1\}$. Each factor is rolling averaged over the past $M$ days:
\begin{equation}
    S_t = \frac{1}{M} \sum_{t-M \ge i \ge t } a_i.
\end{equation}

Each variable, including return rate, is normalized by the rolling standard deviation over the past $M$ days:
\begin{equation}\label{equ:stocknews_def_Ft}
    F_t = \frac{1}{\text{Sd}\left(\left\{S_i\right\}_{t-M \ge i \ge t }\right)+1} S_t.
\end{equation}

An example of a processed factor can be seen in Fig.~\ref{fig:trading_signal_examples}.

\begin{figure}[h]
    \centering
    \includegraphics[width=0.6\columnwidth]{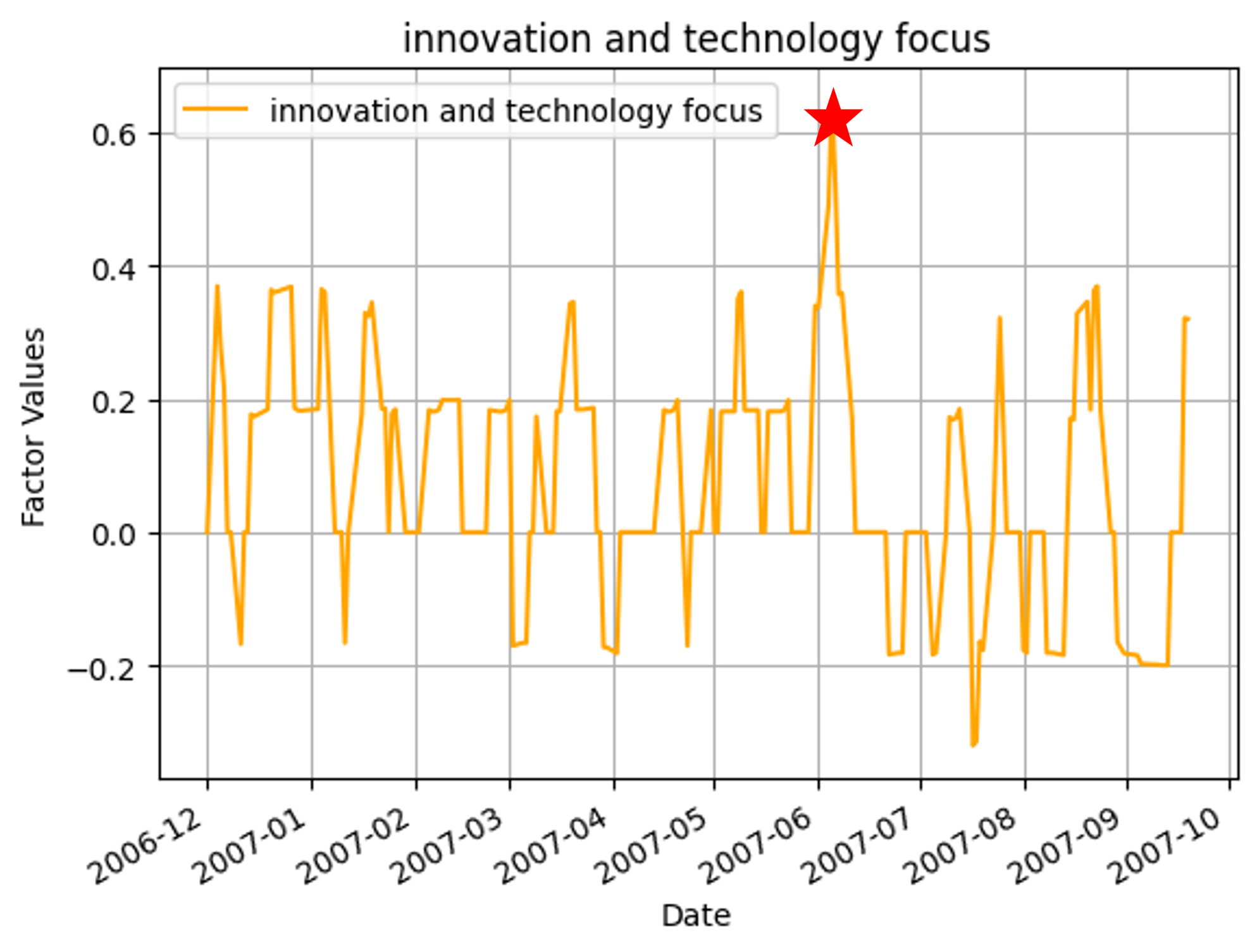}
    \caption{Example of one processed factors \emph{Innovation and Technology Focus} during the first 200 trading days. The red star marks the highest value. In June 2007, there was a discussion about future competition in desktop operating systems and the trend towards web-based services~\cite{Markoff2007}.}
    \label{fig:trading_signal_examples}
\end{figure}

\paragraph{Resulting causal graph} As shown in Fig.~\ref{fig:stocknews_causal_graph}, four factors are identified in the final causal graph to form a Markov blanket of return rate:
\emph{Product Focus}, 
\emph{Legal and Regulatory Issues}, 
\emph{Market Strategy}, and 
\emph{Innovation and Technology Focus}.
One can refer to Fig.~\ref{fig:stocknews_factor_des} for more detailed descriptions of these factors.
One factor (\emph{Innovation and technology focus}) is identified as a possible cause of the return rate; this matches the nature of the company's type and reflects people's expectation of the company to keep creating innovative computer software. It is also interesting to see that \ours captures the structure between factors and \emph{market strategy}, where \emph{product focus} and \emph{legal and regulatory issues} are identified as potential causes. It also implies the existence of a latent confounder between \emph{market strategy} and \emph{return rate} that may not be significantly reflected in news text. 

\begin{figure}[h]
    \centering
    \includegraphics[width=0.6\columnwidth, trim=40 40 35 35, clip]{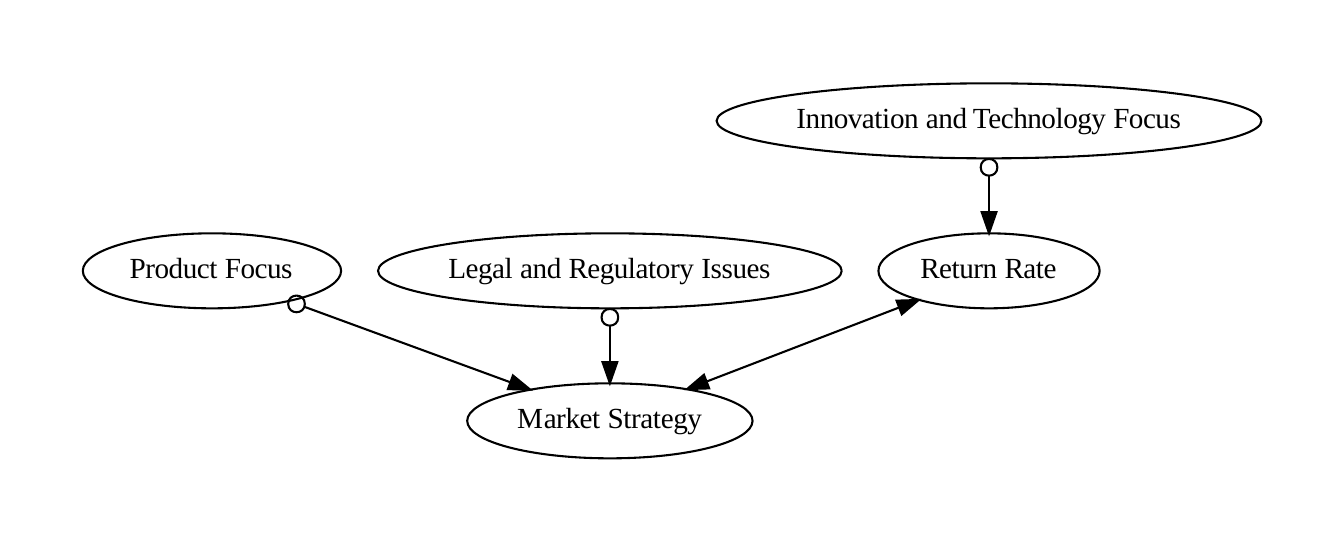}
    \caption{The final causal graph by \ours in the Stock News case study}
    \label{fig:stocknews_causal_graph}
\end{figure}

\begin{figure}[h]
    \centering
    \includegraphics[width=\columnwidth]{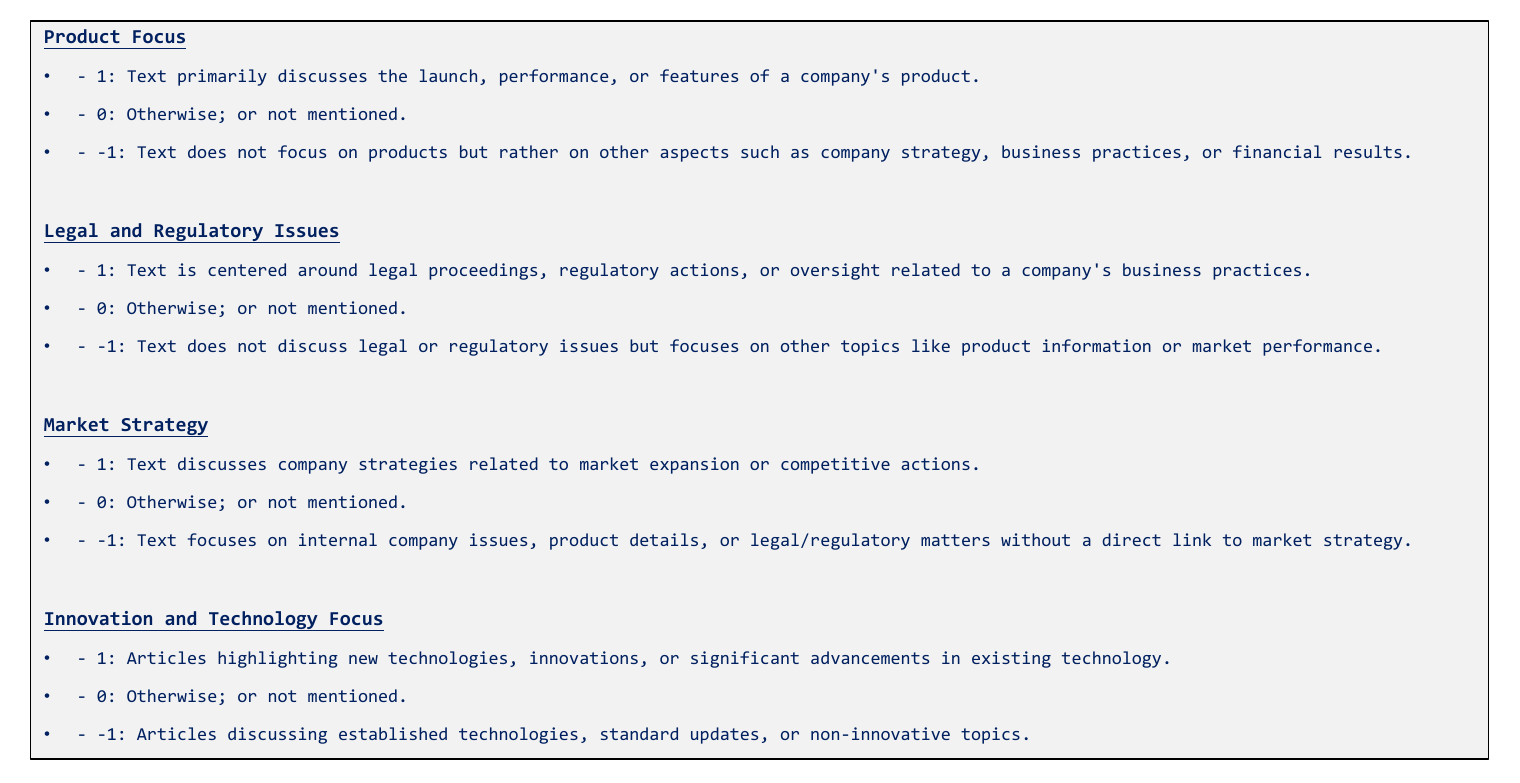}
    \caption{The descriptions of the proposed factors in the Stock News data}
    \label{fig:stocknews_factor_des}
\end{figure}

\paragraph{Evaluation by trading strategies} 
For each factor $\{F_t\}$ processed in Eq.~\ref{equ:stocknews_def_Ft}, we establish a trading strategy on it to see the performance in the out-of-sample trading days. At each trading day $t$, we fit the model $r_t = \alpha + \beta F_t$, 
using samples $\{r_i, F_i\}_{t-200 \le i<t}$.  
After the estimation, we make decision based on $\hat{r}_t := \hat{\alpha} + \hat{\beta} F_t $. 
If  $\hat{r}_t > 0$, we \emph{go long} with 1 unit capital, i.e., purchase that amount of stock, the gain will be gain will be $r \times 1$ units capital. If  $\hat{r}_t < 0$, we \emph{go short} with 1 unit capital,  i.e., borrow and sell that amount of stocks immediately and buy them back next time, the gain will be $-r \times 1$ units capital. 
We introduce an additional \emph{Buy and Hold} baseline to always go long one unit capital. The trading evaluation is after the $200$-th day and thus are not accessible by \ours. 

The cumulative return plot is shown in Fig.~\ref{fig:stocknews_factor_performance}, and the metrics commonly used in economic literature \citep{eugene1970efficient,wachter2013can,bybee2023business} are shown in Table~\ref{table:stocknews_performance}. One important metric to see a trading strategy's effectiveness is the sharp ratio: a measure of risk-adjusted return, showing the excess return (over the risk-free rate, we set it to be $2\%$) per unit of standard deviation. A higher Sharpe ratio indicates better performance per unit of risk. We see that the \emph{Innovation and Technology Focus} factor yields the highest sharp ratio and outperforms other non-causal factors.

\begin{figure}[h]
    \centering
    \includegraphics[width=\columnwidth]{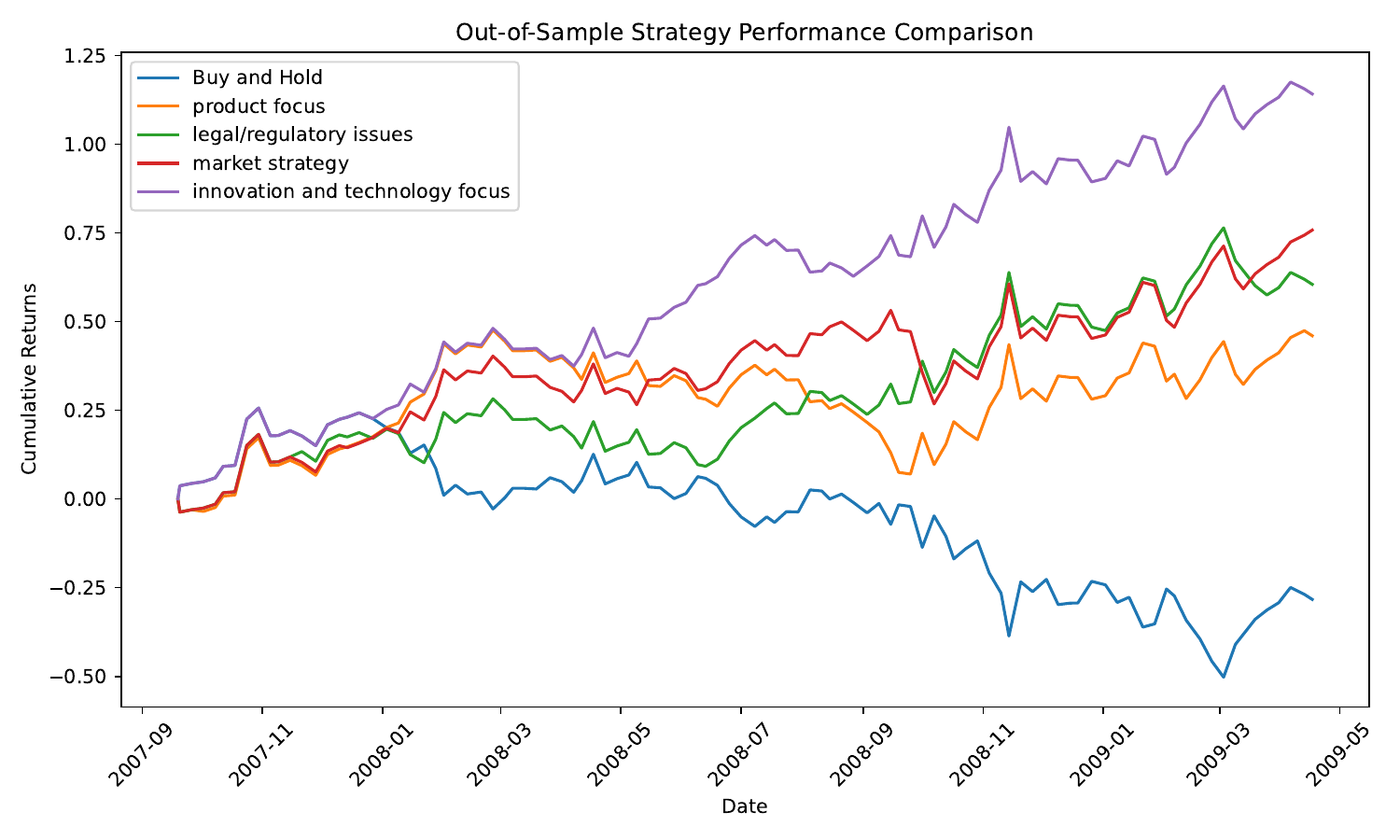}
    \vspace{-0.2in}
    \caption{The out-of-sample cumulative return of trading strategies based on different factors in the Stock News case study}
    \label{fig:stocknews_factor_performance}
    \vspace{-0.2in}
\end{figure}

\begin{table}[H]
    \caption{Performance about trading strategy according to each factors}
    \label{table:stocknews_performance}
    \resizebox{\textwidth}{!}{
        \small  \centering
            \begin{tabular}{lccccccc}
                \toprule
                                    & Buy and Hold  & product focus & legal/regulatory issues   & market strategy   & innovation and technology focus \\
                \midrule
                Expected Return     & -3.50         & 0.29          & 0.38                      & 0.48              & 0.72 \\
                Sharp Ratio         & -2.38         & 0.70          & 0.94                      & 1.20              & {1.86}  \\
                t-Stat              & -3.00         & 0.89          & 1.19                      & 1.51              & 2.35 \\
                Information Ratio   & -             & 2.45          & 2.50                      & 2.56              & 2.73 \\
                $\alpha$            & -             & 3.79          & 3.88                      & 3.98              & 4.22 \\
                $\alpha$ t-Stat     & -             & 3.09          & 3.15                      & 3.23              & 3.44 \\
                Max Loss            & -0.50         & -0.15         & -0.15                     & -0.15             & -0.15 \\
                Skew                & -0.43         & -0.14         & -0.15                     & -0.41             & -0.38 \\
                \bottomrule
            \end{tabular}
    }    
\end{table}


\section{Discussion and Conclusions} 
\label{sec:discussion-conclusion}

In this work, we presented a new framework called \ours that aims to leverage the rich knowledge learned by LLMs to elicit causal representation from unstructured data, like images or text.
The key module of \ours is a propose-and-verify procedure where the advantages of LLMs are employed on understanding unstructured data and actively proposing possible candidate high-level factors, and these factors will be further verified by causal discovery methods. 
First, we propose an iterative method that can elicit Markov Blanket for a given target variable from its paired unstructured data. Based on this method, we further present an approach to extending a given Partial Ancestral Graph to maximally refine its skeleton and arrowhead orientation. Furthermore, we extend our framework to find adjustment sets for cause-effect estimation.

For evaluation, we construct \apple benchmark in the two critical settings: (1) Eliciting Markov Blanket; and (2) Extending Partial Ancestral Graph. By comparing with LLMs' reasoning approach, \ours presents robust performance when the reality deviates from LLMs' priori, and is an reliable framework to integrate the advantages of LLMs into causality tasks. 
We define two metrics: perception score and capacity score, to measure the LLMs' causal ability under the \ours framework, the empirical results over ten predominant LLMs support their competence. 
We show \ours is useful and reliably in semi-realistic benchmarks and real-world case studies. In \pain benchmark, \ours can elicit factors in radiculopathy and pathophysiology with only symptom descriptions, where LLMs utilize their intrinsic domain knowledge with verification from causal discovery methods. In ATE estimation datasets, \ours present lowest RMSE loss compared with baselines, and are competitive with results from human experts.
In real-world case studies, \ours elicit two visual factors related to tumor type by analyzing tumor images data; \ours can also propose factors by writing codes to interact with climate database. These results are consistent with domain literature. In the financial case study, the proposed factors present high sharp ratios in out-of-distribution trading simulations.

The intensive empirical results, as well as the identifiability results of our proposed methods, demonstrate the insight that LLMs and causal discovery methods are mutually beneficial. On the one hand, LLMs that learn rich world knowledge about the world can assist with eliciting high-level hidden variables from low-level observational data. On the other hand, causal discovery methods can provide verification and feedback to LLMs, improving the robustness when LLMs' priori doesn't fit the specific data.

There are several open questions that can be further investigated in the future. First, in this paper we consider the observational data, it can be extended to the setting where LLMs are able to choose and conduct interventions on certain target. This will bring more information on the causal mechanism. Second, in this paper we mainly focus on the constrain-based methods in causal discovery, one can explore how to fully utilize other methods like ICA-based approach, and also causal representation learning methods. Third, the \ours framework can also integrate methods in experiment design in new perspectives: (1) by expanding the subgraph around an intervention target, \ours can resolve undetermined edges using unstructured data, reducing the need for redundant experiments; and (2) with the expanded node set and the refined causal structure, \ours delivers richer input to methods for experimental design. And ultimately yielding more valuable experimental datasets.

\clearpage
\newpage

\acks{This material is based upon work supported by NSF Award No. 2229881, AI Institute for Societal Decision Making (AI-SDM), the National Institutes of Health (NIH) under Contract R01HL159805, and grants from Salesforce, Apple Inc., Quris AI, and Florin Court Capital.
CXL and BH were supported by NSFC General Program No. 62376235, Guangdong Basic and Applied Basic Research Foundation Nos. 2022A1515011652 and 2024A1515012399, HKBU Faculty Niche Research Areas No. RC-FNRA-IG/22-23/SCI/04, and HKBU CSD Departmental Incentive Scheme. MMG was partially supported by the following Australian Research Council projects: DE210101624 and DP240102088.
JC was supported by CUHK direct grant 4055146.}


\clearpage
\newpage

\appendix

\begin{center}
    \LARGE \bf {Appendix of \ourst}
\end{center}
\etocdepthtag.toc{mtappendix}
\etocsettagdepth{mtchapter}{none}
\etocsettagdepth{mtappendix}{subsection}
\tableofcontents

\clearpage
\newpage

\section{Table of Notations}
\label{appdx:notation}

\begin{table}[h!]
\renewcommand{\arraystretch}{1.5} 
\small \centering
\resizebox{\textwidth}{!}{
\begin{tabular}{c|l}
\toprule
\textbf{Notation} & \textbf{Description} \\ \hline
$\Psi$ & The LLM for factor proposal.  \\ \hline
$\vp$ ($\vp^t$) & Prompt used (at the $t$-th \ours round) for factor proposal.  \\ \hline
$\Psi_p$ & The LLM for factor parsing.  \\ \hline
$\vp_p$ & Prompt used for factor parsing.  \\ \hline
$\gA$ & Algorithm for causal discovery. In this paper we mainly use FCI. \\ \hline
$\mX$ & Unstructured data; treated as a high-dimensional random vector. \\ \hline
$\varset{U}$ & The set of given structured random variables; and $\varset{U}=\{U_1, U_2, \cdots, U_m\}$ \\ \hline
$\mU$ & The random vector consists of variables in $\varset{U}$, i.e.,  $\mU=[U_1, U_2, \cdots, U_m]^\top$ \\ \hline
$Y$ & The target variable from $\varset{U}$; or the effect variable in the causal inference setting. \\ \hline
$\gD$ & Full sample set $\gD=\{\vx^{(j)}, \vu^{{j}}\}_{j=1}^n$. $(\vx^{(j)},\vu^{(j)})$ is a realization of $(\mX,\mU)$.\\ \hline
$\vw_i$ & The $i$-th factor proposed by LLMs, like \emph{sweetness}, \emph{size}, or \emph{scent}   \\ \hline
$\varset{W}$ & The set of proposed factors from $\mX$; and $\varset{W}=\{W_1, W_2, \cdots, W_k\}$, where $W_i\triangleq \vw_i(\mX)$ \\ \hline
$\mW$ & The random vector consists of variables in $\varset{W}$, i.e.,  $\mW=[W_1, W_2, \cdots, W_k]^\top$ \\ \hline
$h_{\gS}(\cdot)$ & LLM-induced representation over index $\gS$, e.g., $h_{\{1,3\}}(\cdot) = \left( \vw_1(\cdot), \vw_3(\cdot) \right)$. $h(\cdot) \triangleq h_{[k]}(\cdot)$\\ \hline
$\gC$ & The predefined value space for LLM-proposed factors, e.g., set $\gC$ as $\{-1,0,1\}$ \\ \hline
$\varset{V}$  & The augmented structured variable set by \ours, i.e.,  $\varset{V} \triangleq \varset{W} \cup \varset{U}$. \\ \hline
$\mV$  & The random vector consists of variables in $\varset{V}$, i.e.,  $\mV=[U_1, U_2, \cdots, U_m, W_1, W_2, \cdots, W_k]^\top$\\ \hline
$\vv^{(j)}$  & The realization of random vector $\varset{V}$ in the $j$-th sample. \\ \hline
$\varset{O}$ & The set of all possible factors that can be defined based on $\mX$. \\ \hline
$\varset{E}$ & The maximal observable variable set $\varset{E} \triangleq \varset{O} \cup \varset{U}$. \\ \hline
$T$ & The treatment variable from $\varset{U}$ in the causal inference setting. \\ \hline
$\gP_\varset{V}$  & The partial ancestral graph over node set $\varset{V}$; similar for $\gP_\varset{E}$ \\ \hline
$\widehat{\gD}^t$ & The small subset of sample used at the $t$-th \ours round. \\ \hline
$\gW^t$ & The set of factors proposed by LLMs at the $t$-th round. $\gW^{\le t} \triangleq \gW^{1} \cup \cdots \cup \gW^{t}$\\ \hline
$C_\Psi$ & Capacity Score of LLM $\Psi$ \\ \hline
$p_\Psi$ or $p$ & Perception Score of LLM $\Psi$ \\ 
\bottomrule
\end{tabular}}
\label{tab:notation}
\end{table}

\clearpage

\section{Proofs for Theoretical Results}
\label{appdx:proof}


\begin{proof}[Proof for Prop.~\ref{prop:mutual_info_down}]
From Eq.~\ref{eq:mutual_info_down:notind}, we have 
    \begin{equation}
        H(Y \mid h_{[k]}(\mX); \vw_{k+1}(\mX)) < H(Y \mid h_{[k]}(\mX) );
    \end{equation}
Since $ Y \ind h_{[k+1] \setminus \gS}(X)  \mid  h_{\gS}(X)$, we have 
    \begin{equation}
        H(Y \mid h_{\gS}(\mX);  h_{[k+1] \setminus \gS}(\mX) ) = H(Y \mid h_{\gS}(\mX) ).
    \end{equation}
Therefore:
    \begin{equation}
        \begin{aligned}
            H(Y \mid h_{\gS}(\mX) ) & = H(Y \mid h_{\gS}(\mX),  h_{[k+1] \setminus \gS}(\mX) ) \\
            & = H(Y \mid   h_{[k+1]}(\mX) ) \\ 
            & = H(Y \mid h_{[k]}(\mX), \vw_{k+1}(\mX)) \\
            & < H(Y \mid h_{[k]}(\mX) )
        \end{aligned}
    \end{equation}
Also note that 
    \begin{equation}
        H (Y \mid h_{\gS}(\mX), \mX) = H (Y \mid \mX)=H (Y \mid h_{[k]}(\mX), \mX),
    \end{equation}
therefore:
    \begin{equation}
            H (Y \mid h_{\gS}(\mX)) - H (Y \mid h_{\gS}(\mX), \mX) < H (Y \mid h_{[k]}(\mX)) - H (Y \mid h_{[k]}(\mX), \mX), 
        \end{equation}
which is Eq.~\ref{eq:mutual_info_down:result}.
\end{proof}


\begin{proof}[Proof for Prop.~\ref{prop:cha_f_i}]
Let $n_s$ be the number of rounds that LLM proposed at least one usable factor satisfying Assumption \ref{asp:LLM_good}. Since $t$ is large, its Binomial distribution $\text{Binom}(t,p)$ can be approximated by Gaussian distribution $\gN\left(tp,tp(1-p)\right)$.
To enforce 
    \begin{equation}
        \begin{aligned}
            \Pr\left(  \frac{I\left( Y; \mX \mid h_{\le t}(\mX) \right)}{I\left( Y; \mX  \right)} < \epsilon \right) 
        & \ge \Pr\Big(  \left(1-C_\Psi\right)^{n_s} < \epsilon \Big) \\
        & = \Pr\Big(  n_s > \frac{\log{\epsilon}}{\log{(1-C_\Psi)}}\Big) \\
        & = \Pr\Big(  \frac{n_s-tp}{\sqrt{tp(1-p)}} > 
        \frac{1}{\sqrt{tp(1-p)}} \left( \frac{\log{\epsilon}}{\log{(1-C_\Psi)}} -tp  \right)
        \Big)\\
        &\ge 1-\delta,
        \end{aligned}
    \end{equation}
we have 
    \begin{equation}
        \frac{1}{\sqrt{tp(1-p)}} \left( \frac{\log{\epsilon}}{\log{(1-C_\Psi)}} -tp  \right) < z_\delta = - |z_\delta|
    \end{equation}
with Gaussian distribution approximation.

Isolating $\sqrt{t}>0$ from the above inequality, we would have the desired result.
\end{proof}

\begin{proof}[Proof for Prop.~\ref{prop:arrowhead-preserve}] \label{proof:arrowhead-preserve}
First we would show that the adjacency within $\varset{U}'$ are preserved given $\MB(\varset{U}')$. Then, we show the arrow heads would be preserved by analyzing the orientation rules. 

    \textbf{Adjacency.} Suppose $\alpha, \beta \in \varset{U}'$ are adjacent in $\gP_\varset{E}$, then they cannot be m-separated by any subset of $\varset{E} \setminus \{\alpha, \beta\}$. Since $\varset{V} \setminus \{\alpha, \beta\} \subseteq \varset{E} \setminus \{\alpha, \beta\}$, they are adjacent in $\gP_\varset{V}$.
    
    Suppose $\alpha, \beta \in \varset{U}'$ are not adjacent in $\gP_\varset{E}$. We can construct a set $\varset{C} \subseteq \varset{E}$ as follows: Begin with $\varset{C}=\emptyset$, if $\alpha \ind \beta \mid \varset{C}$, stop the construction. For each path m-connecting $\alpha$ and $\beta$ conditional on $\varset{C}$, if it consists of colliders, then removing any of these nodes from $\varset{C}$ leads to m-connection by previous processed path, thus $\alpha$ and $\beta$ must be adjacent, this is a contradiction. Let $\theta$ be the non-collider closest to $\alpha$, and put it into $\varset{C}$. We show $\theta \in \MB(\alpha)$ by induction: (1) if it is adjacent to $\alpha$, then we are done; (2) if not, it would be connected with colliders from $\MB(\alpha)$, thus it has to be included in $\MB(\alpha)$. By this construction, we have $\alpha \ind \beta \mid \varset{C}$ and $\varset{C} \subseteq \MB(\alpha) \subseteq \varset{V}$, which means $\alpha, \beta \in \varset{U}'$ are not adjacent in $\gP_\varset{V}$.


    \textbf{Arrow heads.} 
    We construct $\widehat{\gP}_\varset{E}$ from $\gP_\varset{V}$ as follows: (1) we extend node set from $\varset{V}$ to $\varset{E}$; (2) Let the edges between $\varset{U}'$ and $\varset{E} \setminus \varset{U}'$ in $\widehat{\gP}_\varset{E}$ be the same as $\gP_\varset{E}$. By this construction, the skeleton in $\widehat{\gP}_\varset{E}$ is same as $\gP_\varset{E}$. If for some nodes $\alpha, \beta \in \varset{U}'$, the edge $\alpha \leftarrow \!\! * \ \beta$ occurs in  $\gP_\varset{E}$ but not in $\gP_\varset{V}$, some orientation rules in $\gR 0 - \gR 4$ should be applicable in $\widehat{\gP}_\varset{E}$ for some edges among $\varset{U}'$, and we will show this is impossible. If for some nodes $\alpha, \beta \in \varset{U}'$, the edge $\alpha \leftarrow \!\! * \ \beta$ occurs in  $\gP_\varset{V}$, some rules in $\gR 0 - \gR 4$ are applied.

    If $\gR 0$ is applicable to $\widehat{\gP}_\varset{E}$, then there exist an unshielded colliders $<\alpha, \gamma, \beta>$ with $\gamma, \beta\in \varset{U}'$, and $\gamma$ should have been processed by the loop in Algorithm~\ref{alg:coat-search}. Therefore, since $\alpha \in \MB(\gamma) \subseteq \MB(\varset{U}')$, the triple is also unshielded in $\gP_\varset{V}$, which means  $\gR 0$ is applicable in $\gP_\varset{V}$, contradicting with that $\gP_\varset{V}$ is maximally informative. 
    The similar argument can be applied on $\gR 1 - \gR 4$. 

    If $\gR 1$ is applicable to $\widehat{\gP}_\varset{E}$, the antecedent implies an unshielded triple $<\alpha, \beta, \gamma>$, and $\beta, \gamma \in \varset{U}'$. Since there is an uncertain mark $\circ$ on $\beta$ (i.e., $\beta \circarrowstar \gamma$), $\beta$ should have been processed by the loop in Algorithm~\ref{alg:coat-search}. Therefore, we have $\alpha \in \MB(\beta) \subseteq \MB(\varset{U}')$,  $\alpha$ should be included in $\varset{V}$ and the triple should also be unshielded in $\gP_\varset{V}$. Thus, $\gR 1$ is also applicable to $\gP_\varset{V}$, which is a contradiction. 
    
    If $\gR 2$ is applicable to $\widehat{\gP}_\varset{E}$, then there exist $\alpha \rightarrow \beta \starrightarrow \gamma$ or $\alpha \starrightarrow  \beta \rightarrow \gamma$, and $\alpha \stararrowcirc \gamma$, and $\alpha, \gamma \in \varset{U}'$. Then $\gamma$ should have been processed by the loop in Algorithm~\ref{alg:coat-search} due to its uncertain mark $\circ$. Therefore, $\alpha, \beta \in \MB(\gamma) \subseteq \MB(\varset{U}')$. Thus, $\gR 2$ is also applicable to $\gP_\varset{V}$, which is a contradiction. 

    If $\gR 3$ is applicable to $\widehat{\gP}_\varset{E}$, then there exist $\alpha \starrightarrow \beta \starleftarrow \beta$, $\alpha \stararrowcirc\theta \circarrowstar\gamma$, $\theta \stararrowcirc \beta$ and $\theta, \beta \in \varset{U}'$. Therefore, $\beta$ should have been processed by the loop in Algorithm~\ref{alg:coat-search}, which means $\alpha, \gamma, \theta \in \MB(\beta) \subseteq \MB(\varset{U}')$ and thus $\gR 3$ is also applicable to $\gP_\varset{V}$, i.e.,  contradiction. 
    
    If $\gR 4$ is applicable to $\widehat{\gP}_\varset{E}$, its antecedent implies a discriminating path $p=<\theta, \cdots, \alpha, \beta, \gamma>$ where $\beta, \gamma \in \varset{U}'$ and $\beta \circ \! \! - \! \! * \gamma $ would be oriented as $\beta \rightarrow \gamma $ or $\beta \leftrightarrow \gamma $, $\theta$ is not adjacent to $\gamma$, and each nodes between $\theta$ and $\beta$ is a collider and a parent of $\gamma$. Note that $\gamma \in \MB(\beta)$, thus all nodes between $\theta$ and $\beta$ are included in $\MB(\beta)$ due to a path like $\alpha \rightarrow \gamma \leftarrow \! \! * \beta$. Then the path $\theta * \! \! \rightarrow \cdots \leftrightarrow \gamma \leftarrow \! \! * \beta$, we have $\theta \in \MB(\beta)$. Therefore $p \subseteq \MB(\beta) \subseteq\MB(\varset{U}')$, and hence $\gP_\varset{V}$ is not maximally informative, contradiction.

\end{proof}

\begin{definition}[Higher order of Markov Blanket] For a integer $p \in \N _+$ and a set of random variables $\varset{Z}$, $\MB(X)$ is the Markov Blanket of $X$ relative to $\varset{Z}$. If $p=1$, then $\MB^{(p)}(X):=\MB(X)$ as usual; if $p>1$, then $\MB^{(p)}(X):=\bigcup_{X' \in \MB^{(p-1)}(X)} \MB(X')$.
\end{definition}

\begin{proof}[Proof for Prop.~\ref{prop:amenable-preservation}]
Let $\vu$ be a path from $T$ to $Y$ in $\gP_\varset{E}$. We have $\vu \subseteq \MB^{(2)}(\vu) \subseteq  \varset{V}$ by the Loop at step 3 in Algorithm~\ref{alg:coat-t-y}. Therefore, its adjacency and arrow heads are preserved in $\gP_\varset{V}$ according to Proposition~\ref{prop:arrowhead-preserve}. Thus $\vu$ is a possibly directed path from $T$ to $Y$ in $\gP_\varset{V}$ if and only if it is possibly directed in $\gP_\varset{E}$.

Since $\gP_\varset{E}$ is amenable relative to $(T, Y)$, each possibly directed path $\vu$ from $T$ to $Y$ starts with a visible directed edge $T \rightarrow V_1$. By the definition of visible edge, we need to consider two cases. In the first case, there is a unshielded triple $V_0 \starrightarrow T \rightarrow V_1$. We have $V_0 \in \MB(T)$, thus its adjacency and arrow heads would preserved by Proposition~\ref{prop:arrowhead-preserve}, and the arrow tail would be preserved by applying $\gR 1$. In the second case, there is a node $V_0$ which is not adjacent to $V_1$, and a path $V_0 \starrightarrow \cdots \leftrightarrow \alpha_k \leftrightarrow T \rightarrow V_1 $ that each non-endpoint node ($\alpha_1 \cdots \alpha_k$) is a collider and a parent of $V_1$. This is a discriminating path, and we have shown this path is inside $\MB(T)$ in the proof for Proposition~\ref{prop:arrowhead-preserve}, and its arrow tails are preserved in $\gP_\varset{V}$ according to Corollary~\ref{cor:arrow-tails-in-discriminating-path}.
\end{proof}

\begin{proof}[Proof for Prop.~\ref{prop:arrow-tails-in-path-T-Y}]
Since $\alpha$ and $\gamma$ are two adjacent nodes in a possibly directed path from $T$ to $Y$, we have $\{\alpha, \gamma\} \subseteq \MB^{(2)}(\{\alpha, \gamma\}) \subseteq  \varset{V}$ by the Loop 1 in Algorithm~\ref{alg:coat-t-y}.

We construct $\widehat{\gP}_\varset{E}$ from $\gP_\varset{V}$ as follows: (1) we extend node set from $\varset{V}$ to $\varset{E}$; (2) Let the edges between $\varset{U}'$ and $\varset{E} \setminus \varset{U}'$ in $\widehat{\gP}_\varset{E}$ be the same as $\gP_\varset{E}$. By this construction, the skeleton in $\widehat{\gP}_\varset{E}$ is same as $\gP_\varset{E}$. By Proposition~\ref{prop:arrowhead-preserve}, the arrow heads are also preserved.

If the edge $\alpha \rightarrow \gamma$ occurs in  $\gP_\varset{E}$ but not in $\gP_\varset{V}$, i.e., $\alpha \circrightarrow \gamma$ in $\gP_\varset{V}$, then some orientation rules in $\gR 8 - \gR 10$ should be applicable in $\widehat{\gP}_\varset{E}$ on this edge, and we will show that this is impossible.

If $\gR 8$ is applicable to $\alpha \circrightarrow \gamma$ in $\gP_\varset{E}$, then we have $\alpha \rightarrow \beta \rightarrow \gamma$, since there is an uncertain mark $\circ$ on $\alpha$, it have been processed by the Loop 2 in Algorithm~\ref{alg:coat-t-y}. Therefore, $\gamma \in \MB(\alpha) \subseteq \MB^{(2)}(\{\alpha, \gamma\})  \subseteq \varset{V}$, thus $\gR 8$ could have been applied to $\gP_\varset{V}$, and violates the fact that $\gP_\varset{V}$ is maximally informative. 

If $\gR 9$ is applicable to $\alpha \circrightarrow \gamma$ in $\gP_\varset{E}$, then there is a uncovered possibly directed path $\vp$ from $\alpha$ to $\gamma$. Since $\vp$ is a part of possibly directed path from $T$ to $Y$, it holds that $\vp \subseteq \MB^{(2)}(\vp) \subseteq \varset{V}$ by the Loop 1 in Algorithm~\ref{alg:coat-t-y}, thus $\gR 9$ could have been applied to $\gP_\varset{V}$, and thus leads to a contradiction.

If $\gR 10$ is applicable to $\alpha \circrightarrow \gamma$ in $\gP_\varset{E}$, then we have $\beta \rightarrow \gamma \leftarrow \theta$, $\vp_1$ and $\vp_2$ are uncovered possibly directed paths from $\alpha$ to $\beta$, and from $\alpha$ to $\theta$ respectively. Note that $\beta, \theta \in \MB(\alpha)$, and $\vp_1$ and $\vp_2$ are in part of possibly directed path from $T$ to $Y$. Therefore, with same argument, this would lead to a contradiction.
\end{proof}

\begin{proof}[Proof for Prop.~\ref{prop:Adjustment-Sets-Preservation}]
    By the construction of Algorithm~\ref{alg:coat-t-y}, we have a set $\varset{U}'$ that $\MB(\varset{U}') \subseteq \varset{V}$ and $\MB\left(\left\{T,Y\right\}\right) \subseteq \varset{V}$, and for any node $V$ on a path between $T$ and $Y$, $V \in \varset{U}'$.
    To show $\varset{Z}$ is also an adjustment set relative to $(T, Y)$ in $\gP_\varset{V}$, we need to check whether it satisfy the three conditions in Definition~\ref{def:Generalized-adjustment-criterion}. 

    (1) {Amenability.} Since $\gP_\varset{E}$ has an adjustment set, then it has to be amenable, therefore, according to Proposition~\ref{prop:amenable-preservation}, $\gP_\varset{V}$ is amenable.

    (2) {Forbidden set.} 
    We need to show that: if a node $\alpha \in \varset{U}'$ satisfies $\alpha \notin \text{Forb}(T,Y,\gP_\varset{V})$, then $\alpha \notin \text{Forb}(T,Y,\gP_\varset{E})$. By definition, if $\alpha \in \text{Forb}(T,Y,\gP_\varset{E})$, then there is a possible directed path from $T$ to $\alpha$. Thus, by Proposition~\ref{prop:arrowhead-preserve}, there is no such a path in $\gP_\varset{V}$ containing an arrowhead towards $T$, thus we have $\alpha \in \text{Forb}(T,Y,\gP_\varset{V})$. On the other hand, if $\alpha \in \text{Forb}(T,Y,\gP_\varset{V})$ there is a possibly directed path $\vu$ from $\beta \in \text{S}(T,Y, \gP_\varset{V})$ to $\alpha$. By the construction of Algorithm~\ref{alg:coat-t-y},  $\beta \in \text{S}(T,Y, \gP_\varset{E})$ and $\vu$ is also a possibly directed path in $\gP_\varset{E}$, thus $\alpha \in \text{Forb}(T,Y,\gP_\varset{E})$.

    (3) {Blocking.} By Proposition~\ref{prop:arrow-tails-in-path-T-Y} and Corollary~\ref{cor:definite-status-path}, each definite status paths between $T$ and $Y$ are preserved. Thus if a node can m-separate some such paths in $\gP_\varset{E}$, it m-separates them in $\gP_\varset{V}$.
\end{proof}

    

    





\clearpage
\section{More Details about Experiments}
\label{appendix:More Details about Experiments}
\subsection{More Details on Constructing \apple}
\label{appdx:apple_construction}
In the \apple benchmark, we consider the target variable as a rating score of the apple by several gastronomes.
Each apple has its own attributes, including size, smell, and taste (or sweetness). Each gastronome has a unique preference for some attributes of the apple. They will give and rating as well as write a review according to the matchness of the apple with respect to their preference. We generate the review using \gptf by fetching \gptf the preferences and the apple attributes.

The prompts for generating the unstructured inputs are given in Fig.~\ref{fig:apple_gen_prompt}.

\begin{figure}[ht]
    \centering
    \includegraphics[width=0.4\textwidth]{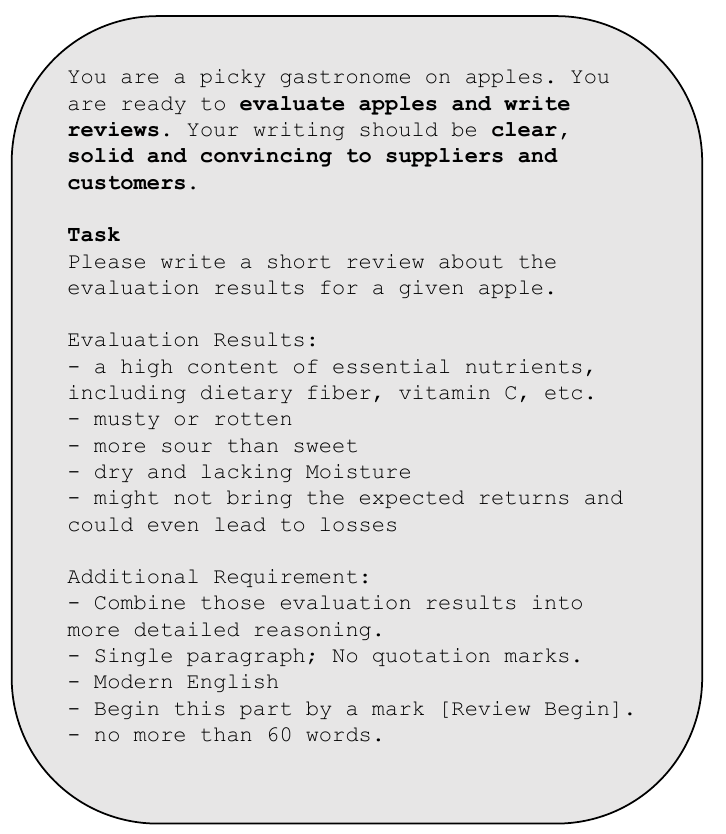}
    \caption{Illustration of prompts for generating \apple.}
    \label{fig:apple_gen_prompt}
\end{figure}

Examples of \apple are given in Fig.~\ref{fig:apple_example_appdx}.

\begin{figure}[ht]
    \centering
    \includegraphics[width=0.8\textwidth]{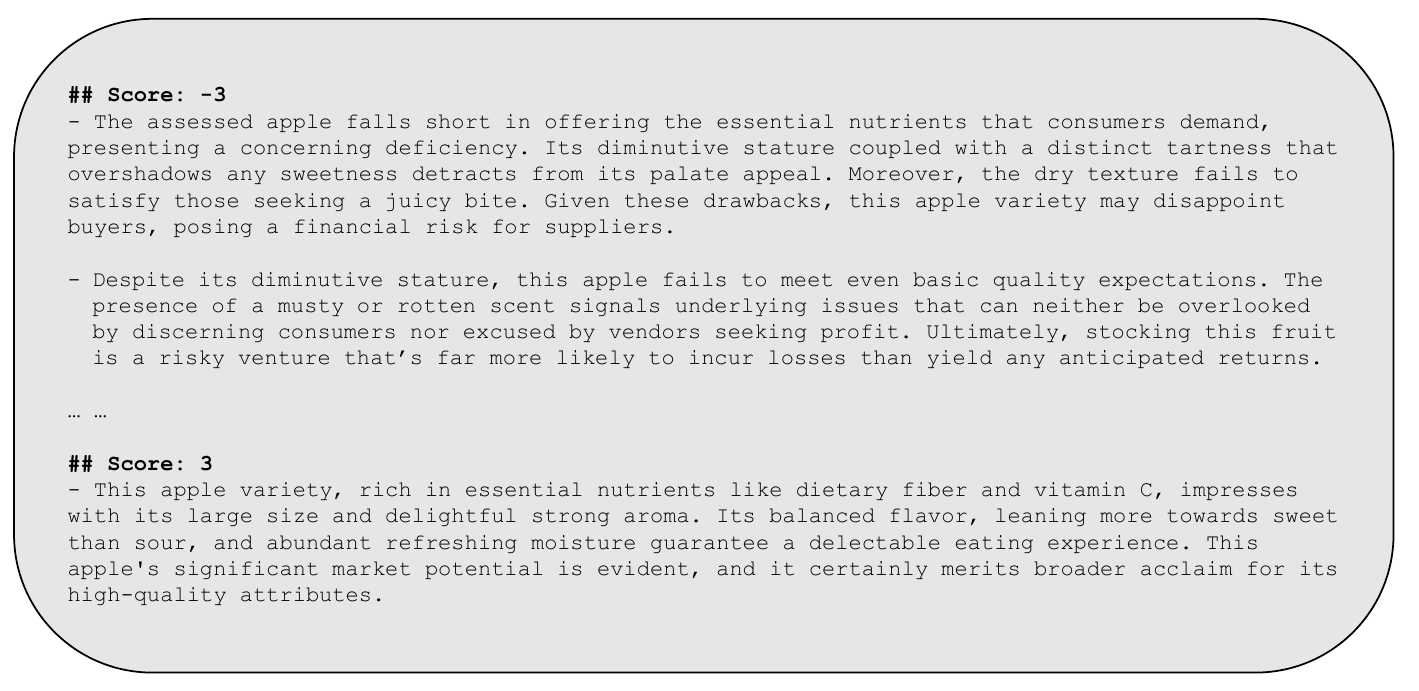}
    \caption{Illustration of examples in \apple.}
    \label{fig:apple_example_appdx}
\end{figure}

\subsection{More Details on Prompts for \apple}
\label{appdx:apple_prompts}

The prompts for factor proposal are given in Fig.~\ref{fig:prompt_example_appdx}.

\begin{figure}[t]
    \centering
    \includegraphics[width=0.5\columnwidth]{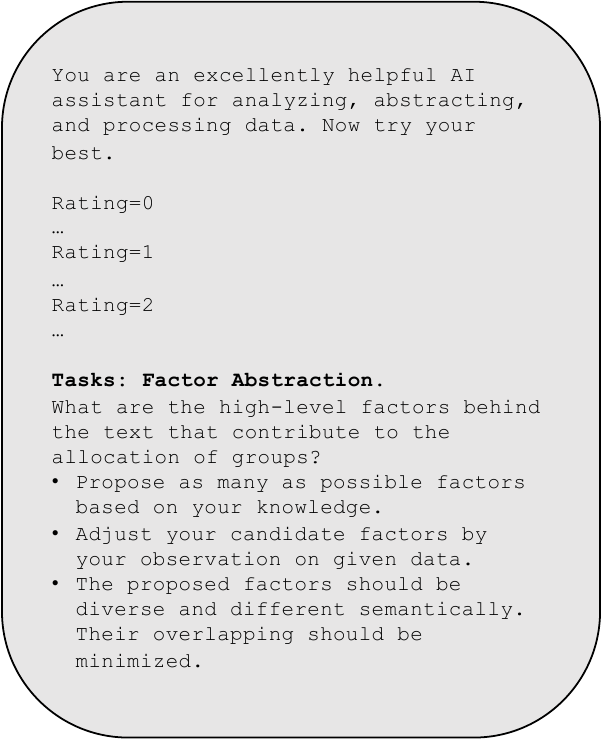}
    \caption{Illustration of the prompt for factor proposal.}
    \label{fig:prompt_example_appdx}
\end{figure}

The prompt for factor annotation is given in Fig.~\ref{fig:apple_anno_p1}.

\begin{figure}[ht]
    \centering
    \includegraphics[width=0.5\columnwidth]{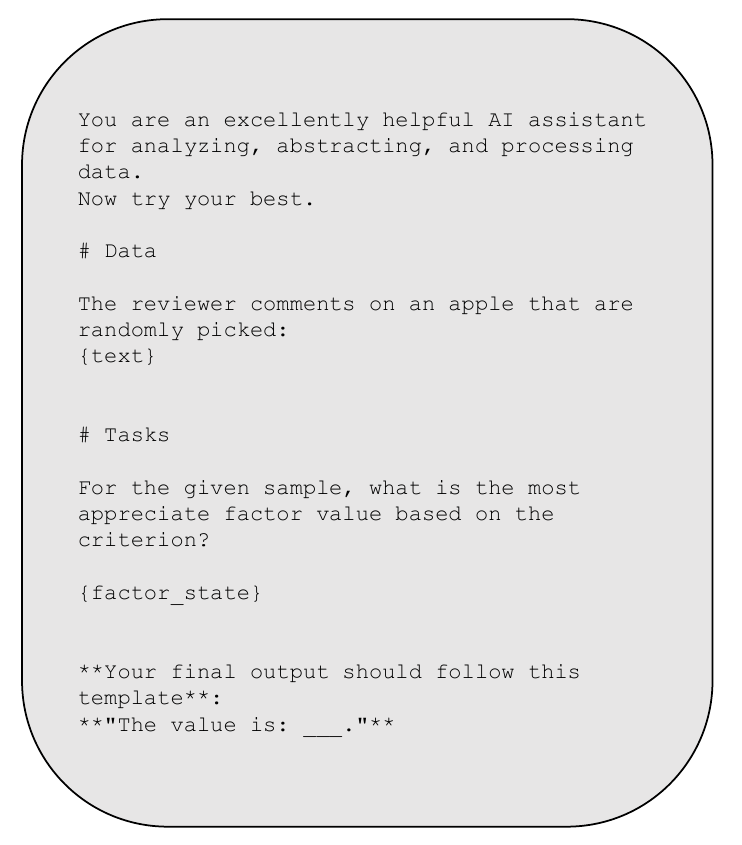}
    \caption{Illustration of the prompt for factor annotation.}
    \label{fig:apple_anno_p1}
\end{figure}



The prompts for constructing feedback are given in Fig.~\ref{fig:feedback_prompt}.

\begin{figure}[ht]
    \centering
    \includegraphics[width=0.8\columnwidth]{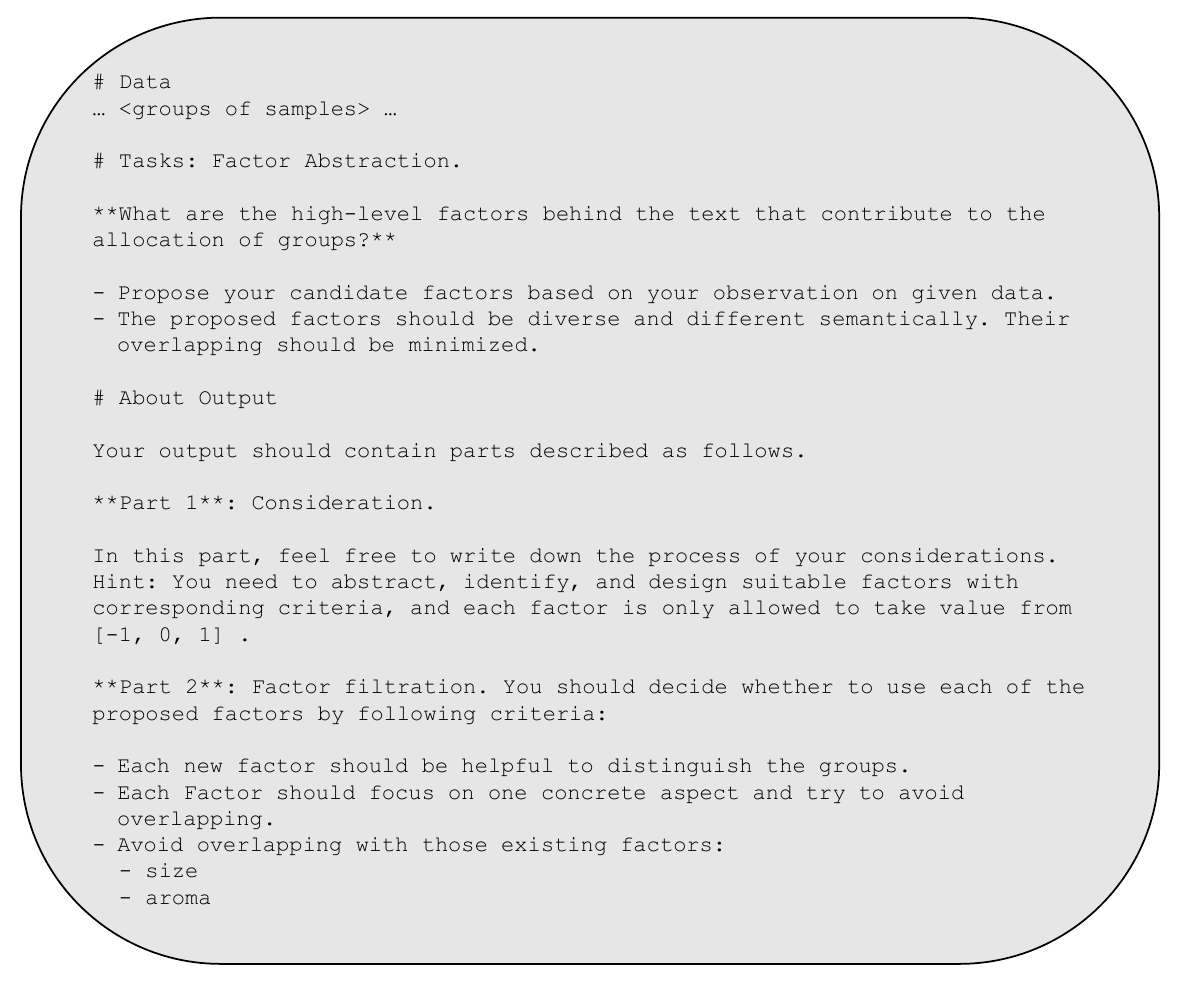}
    \caption{Illustration of the prompt for feedback.}
    \label{fig:feedback_prompt}
\end{figure}

\subsection{More Details of Results on \apple}
\label{appdx:apple_results}
The detailed causal graph results are given from Fig.~\ref{fig:causal_graph_apple_appdx} to Fig.~\ref{fig:causal_graph_apple_mistral_appdx}.
Independent tests about annotation on the Apple Gastronome benchmark are shown in Table ~\ref{tab:ind_test_annotation}.
Full Results of Causal Metrics each Round of each LLM on Apple Gastronome Benchmark are shown in Table~\ref{tab:full_results_causal_metrics_on_apple}.

\begin{table}[t]
    \caption{Full Result of Causal Metrics each Round each LLM on Apple Gastronome Benchmark}
    \label{tab:full_results_causal_metrics_on_apple}
    \small\centering
    \resizebox{\textwidth}{!}{
\begin{tabular}{lcccc}
\toprule
LLMs           & \ours Round & Perception Score   & Capacity Score & $I(y;x|h_S)$ \\ \midrule
GPT-4o         & iter 1     & 0.82\std{0.02} & 0.24\std{0.01} & 0.28\std{0.02} \\
               & iter 2     & 0.44\std{0.08} & 0.26\std{0.02} & 0.21\std{0.03} \\
               & iter 3     & 0.38\std{0.12} & 0.19\std{0.09} & 0.24\std{0.05} \\ \midrule
GPT-4          & iter 1     & 0.78\std{0.02} & 0.18\std{0.02} & 0.45\std{0.04} \\
               & iter 2     & 0.61\std{0.08} & 0.26\std{0.05} & 0.28\std{0.05} \\
               & iter 3     & 0.39\std{0.28} & 0.22\std{0.00} & 0.29\std{0.02} \\ \midrule
GPT-3.5        & iter 1     & 0.81\std{0.02} & 0.21\std{0.01} & 0.42\std{0.05} \\
               & iter 2     & 0.52\std{0.11} & 0.23\std{0.02} & 0.31\std{0.04} \\
               & iter 3     & 0.38\std{0.10} & 0.21\std{0.03} & 0.33\std{0.11} \\ \midrule
Mistral-Large  & iter 1     & 0.78\std{0.02} & 0.20\std{0.01} & 0.41\std{0.04} \\
               & iter 2     & 0.61\std{0.08} & 0.20\std{0.03} & 0.29\std{0.02} \\
               & iter 3     & 0.33\std{0.33} & 0.32\std{0.00} & 0.25\std{0.05} \\ \midrule
Mistral-Medium & iter 1     & 0.78\std{0.02} & 0.20\std{0.02} & 0.39\std{0.05} \\
               & iter 2     & 0.39\std{0.15} & 0.19\std{0.10} & 0.32\std{0.04} \\
               & iter 3     & 0.36\std{0.10} & 0.37\std{0.14} & 0.25\std{0.08} \\ \midrule
\llama-3-70b    & iter 1     & 0.75\std{0.00} & 0.19\std{0.02} & 0.47\std{0.08} \\
               & iter 2     & 0.36\std{0.31} & 0.19\std{0.00} & 0.38\std{0.02} \\
               & iter 3     & 0.27\std{0.38} & 0.18\std{0.00} & 0.36\std{0.03} \\ \midrule
\llama-2-70b    & iter 1     & 0.77\std{0.02} & 0.21\std{0.05} & 0.43\std{0.13} \\
               & iter 2     & 0.08\std{0.12} & 0.35\std{0.00} & 0.35\std{0.03} \\
               & iter 3     & 0.28\std{0.04} & 0.14\std{0.07} & 0.33\std{0.03} \\ \midrule
Qwen-1.5-110B  & iter 1     & 0.77\std{0.02} & 0.17\std{0.01} & 0.45\std{0.09} \\
               & iter 2     & 0.56\std{0.16} & 0.21\std{0.12} & 0.32\std{0.08} \\
               & iter 3     & 0.28\std{0.21} & 0.18\std{0.02} & 0.33\std{0.11} \\ \midrule
DeepSeek-V2    & iter 1     & 0.89\std{0.08} & 0.17\std{0.02} & 0.42\std{0.05} \\
               & iter 2     & 0.62\std{0.10} & 0.20\std{0.02} & 0.34\std{0.02} \\
               & iter 3     & 0.67\std{0.24} & 0.40\std{0.12} & 0.29\std{0.06} \\ \midrule
Claude-3-Opus  & iter 1     & 0.77\std{0.02} & 0.18\std{0.00} & 0.43\std{0.06} \\
               & iter 2     & 0.56\std{0.08} & 0.39\std{0.11} & 0.24\std{0.06} \\
               & iter 3     & 0.75\std{0.25} & 0.15\std{0.00} & 0.22\std{0.05} \\ \bottomrule
\end{tabular}}
\end{table}

\subsection{Implementation of the FCI algorithm}
We use a third-party open-sourced Python library to perform the FCI algorithm: https://causal-learn.readthedocs.io/en/latest/

We set \codeblock{$\alpha=0.05$}, and \codeblock{independence\_test\_method="fisherz"} hroughout all experiments. Other parameters are kept as the default.

\begin{figure}[th]
    \centering
    \subfigure[Ground truth]{
        \includegraphics[width=0.45\textwidth, trim=40 40 40 35, clip]{figures/apple/Apple_GroundTruth.pdf}
    }
    \subfigure[FCI results]{
        \includegraphics[width=0.45\textwidth, trim=40 40 40 35, clip]{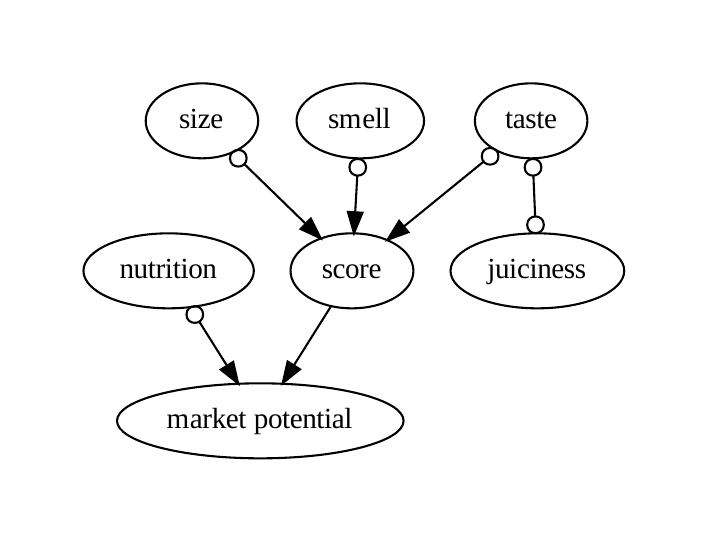}
    }
    \caption{Ground truth and faithful (via FCI algorithm) causal graphs in \apple.}
    \label{fig:causal_graph_apple_appdx}
\end{figure}

\begin{figure}[th]
    \centering
    \subfigure[\gptf reasoning]{
        \includegraphics[width=0.4\textwidth, trim=40 40 40 35, clip]{figures/apple/gpt-4_Apple_LLM_revised.pdf}
    }
    \subfigure[\gptf \ours]{
        \includegraphics[width=0.6\textwidth, trim=40 40 40 35, clip]{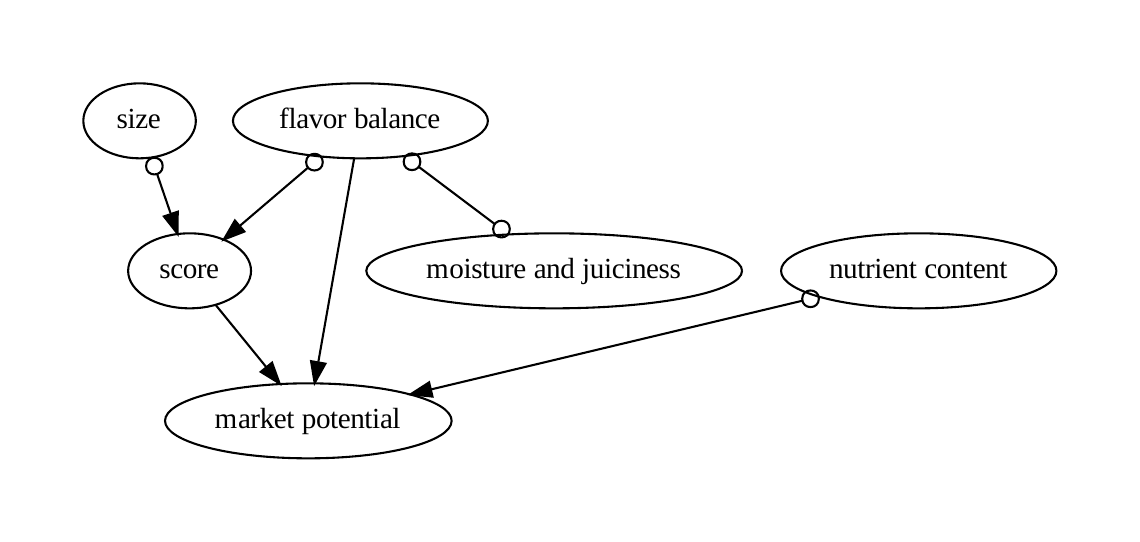}
    }
    \caption{Causal graphs with \gptf in \apple.}
    \label{fig:causal_graph_apple_gpt4_appdx}
\end{figure}

\begin{figure}[th]
    \centering
    \subfigure[\chatgpt reasoning]{
        \includegraphics[width=0.45\textwidth, trim=40 40 40 35, clip]{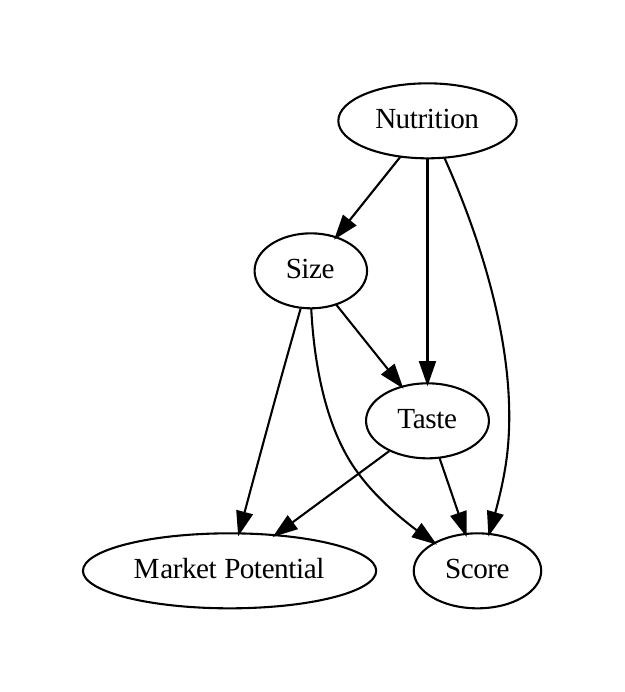}
    }
    \subfigure[\chatgpt \ours]{
        \includegraphics[width=0.45\textwidth, trim=40 40 40 35, clip]{figures/apple/gpt-3.5_Apple.pdf}
    }
    \caption{Causal graphs with \chatgpt in \apple.}
    \label{fig:causal_graph_apple_gpt3_appdx}
\end{figure}

\begin{figure}[th]
    \centering
    \subfigure[\llama reasoning]{
        \includegraphics[width=0.4\textwidth, trim=40 40 35 35, clip]{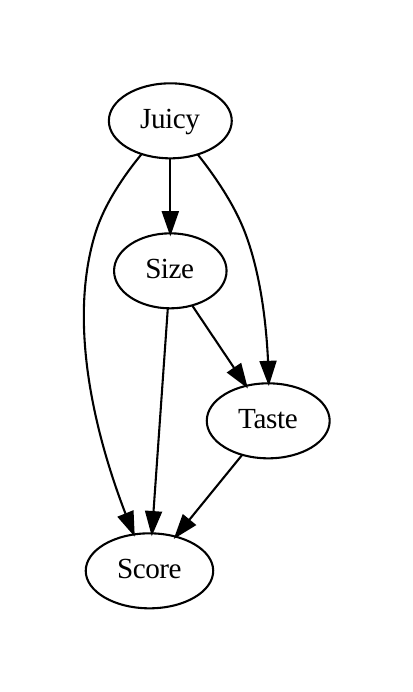}
    }
    \subfigure[\llama \ours]{
        \includegraphics[width=0.5\textwidth, trim=40 40 35 35, clip]{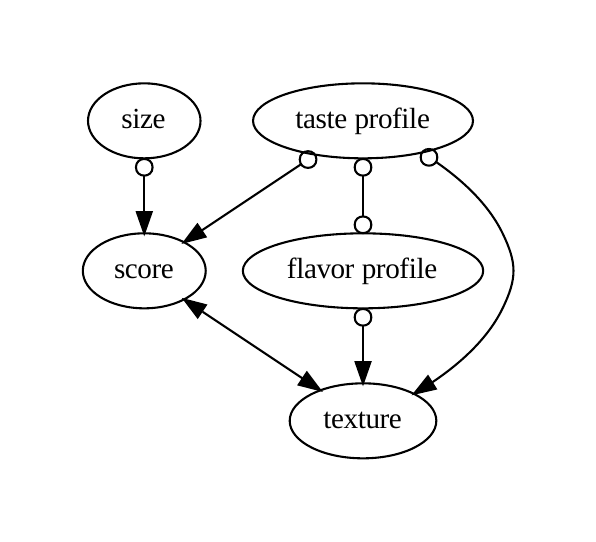}
    }
    \caption{Causal graphs with Llama-2 in \apple.}
    \label{fig:causal_graph_apple_llama_appdx}
\end{figure}

\begin{figure}[th]
    \centering
    \subfigure[\mistral reasoning]{
        \includegraphics[width=0.45\textwidth, trim=40 40 40 35, clip]{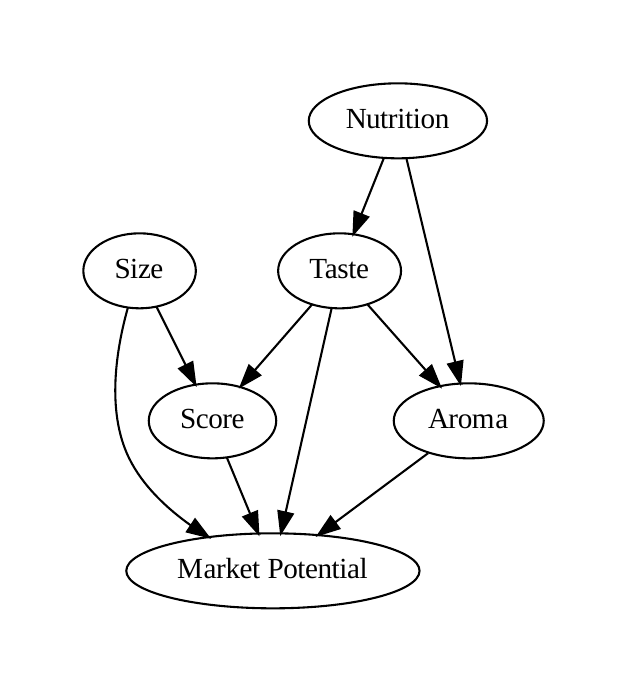}
    }
    \subfigure[\mistral \ours]{
        \includegraphics[width=0.45\textwidth, trim=40 40 40 35, clip]{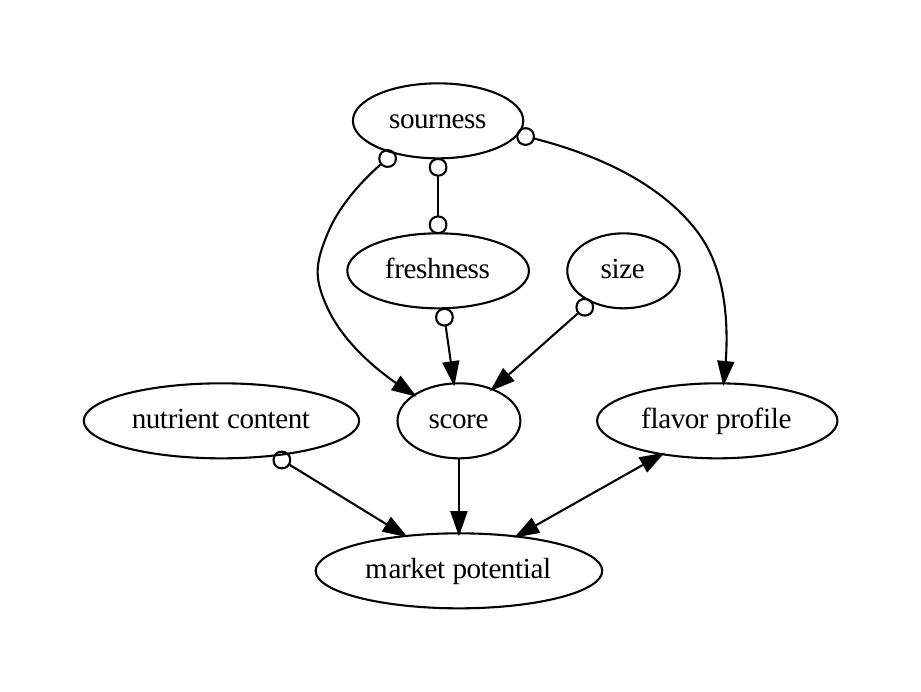}
    }
    \caption{Causal graphs with mistral Medium in \apple.}
    \label{fig:causal_graph_apple_mistral_appdx}
\end{figure}

\begin{figure}[th]
    \centering
    \subfigure[Claude-3-Opus reasoning]{
        \includegraphics[width=0.45\textwidth, trim=40 40 40 35, clip]{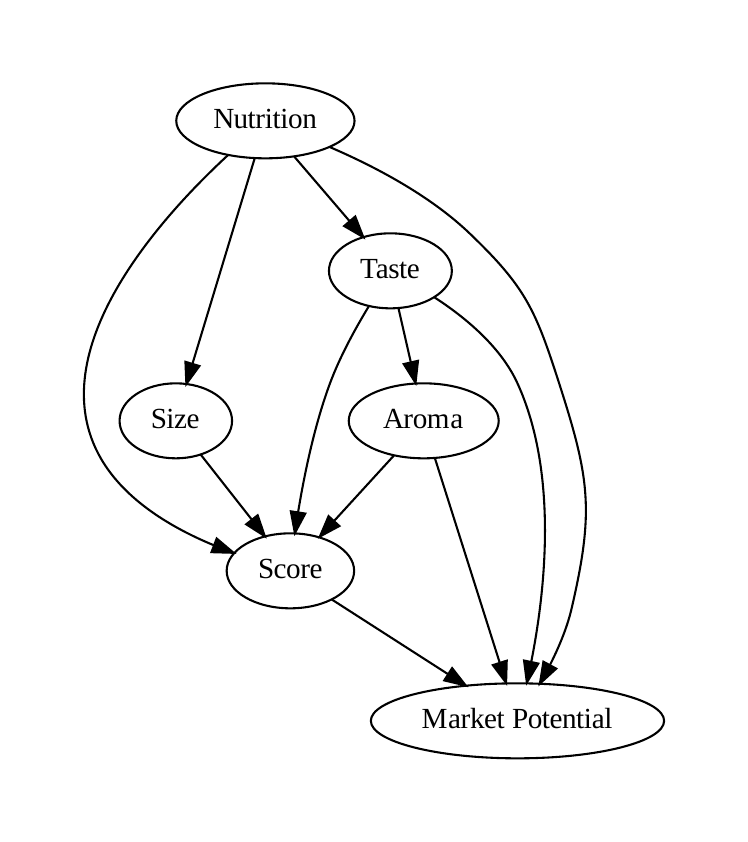}
    }
    \subfigure[Claude-3-Opus \ours]{
        \includegraphics[width=0.45\textwidth, trim=40 40 40 35, clip]{figures/apple/claude_3_opus.pdf}
    }
    \caption{Causal graphs with Claude-3-Opus in \apple.}
    \label{fig:causal_graph_apple_Claude_3_Opus}
\end{figure}

\begin{figure}[th]
    \centering
    \subfigure[DeepSeek-V2 reasoning]{
        \includegraphics[width=0.45\textwidth, trim=40 40 40 35, clip]{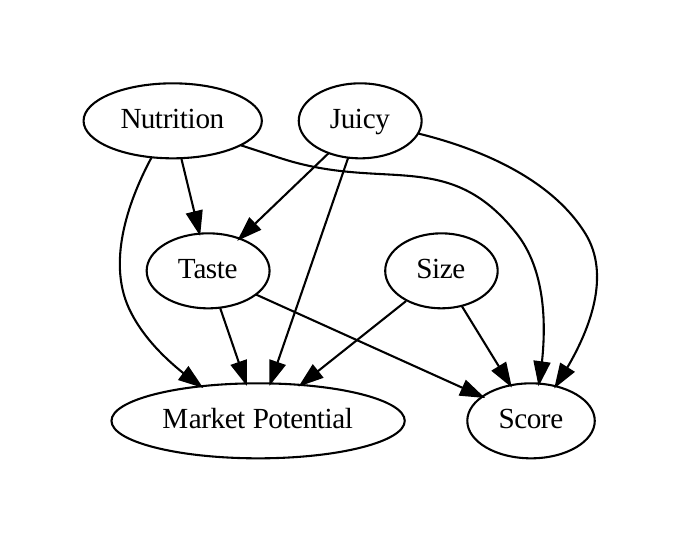}
    }
    \subfigure[DeepSeek-V2 \ours]{
        \includegraphics[width=0.45\textwidth, trim=40 40 40 35, clip]{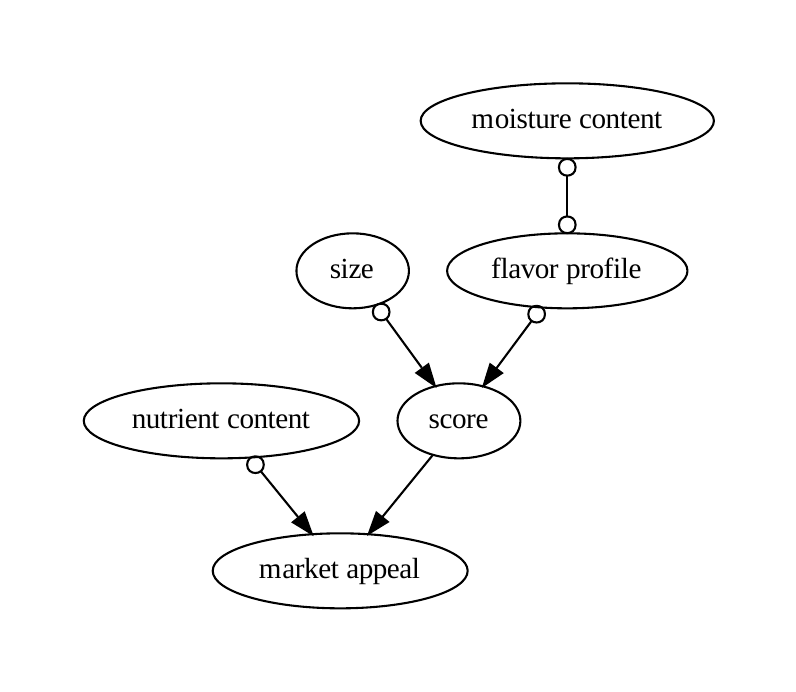}
    }
    \caption{Causal graphs with DeepSeek-V2 in \apple.}
    \label{fig:causal_graph_apple_DeepSeek-V2}
\end{figure}

\begin{figure}[th]
    \centering
    \subfigure[Llama-3-70b reasoning]{
        \includegraphics[width=0.45\textwidth, trim=40 40 35 35, clip]{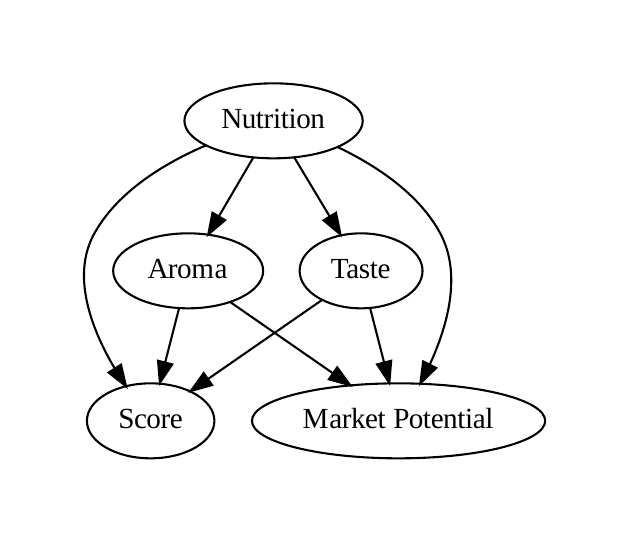}
    }
    \subfigure[Llama-3-70b \ours]{
        \includegraphics[width=0.45\textwidth, trim=40 40 35 35, clip]{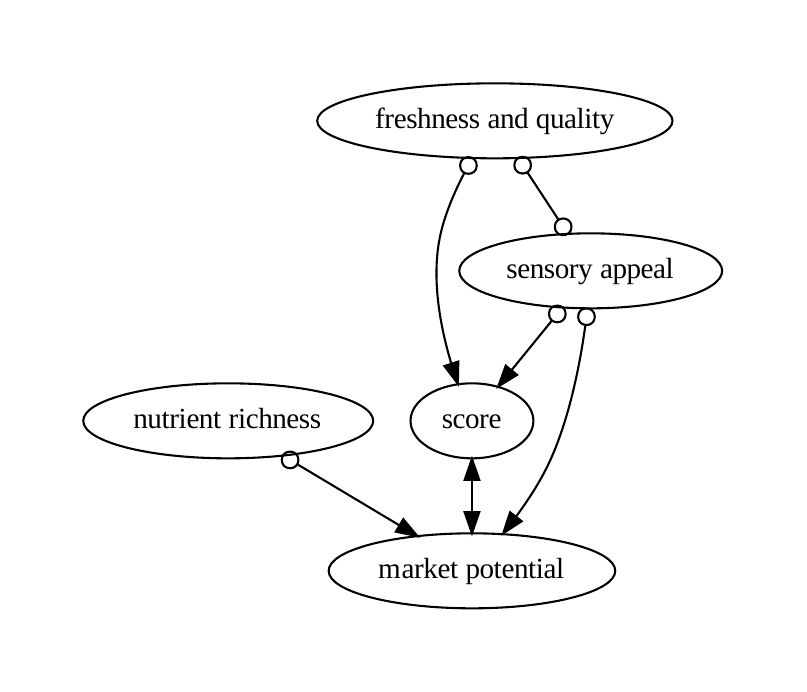}
    }
    \caption{Causal graphs with Llama-3-70b in \apple.}
    \label{fig:causal_graph_apple_llama_3_appdx}
\end{figure}

\begin{figure}[th]
    \centering
    \subfigure[mistral-Large reasoning]{
        \includegraphics[width=0.45\textwidth, trim=40 40 35 35, clip]{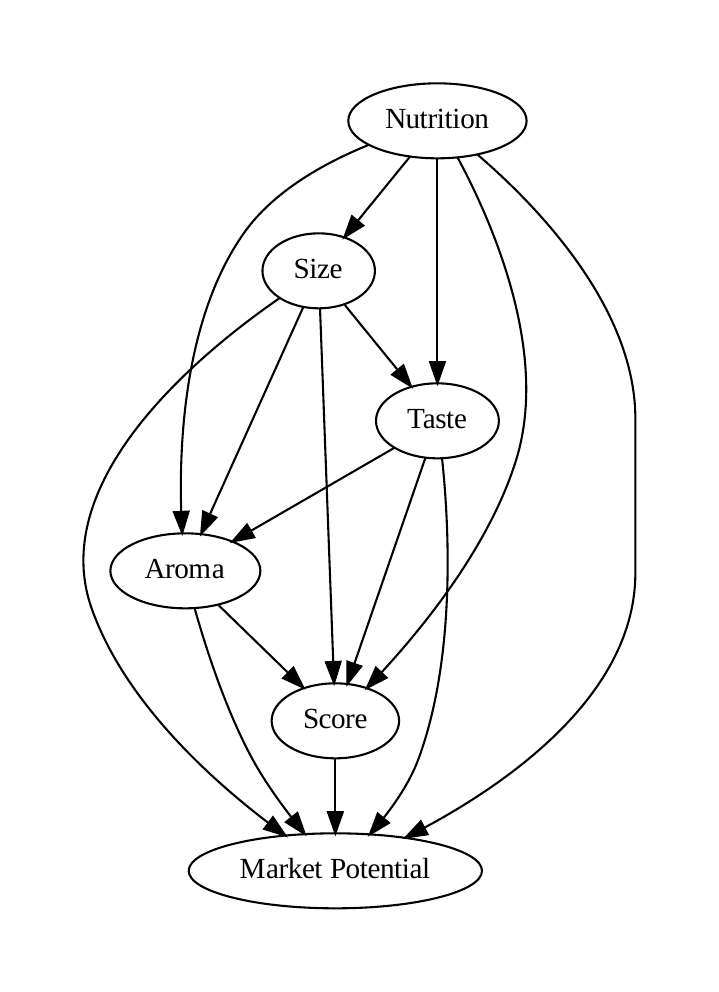}
    }
    \subfigure[mistral-Large \ours]{
        \includegraphics[width=0.45\textwidth, trim=40 40 35 35, clip]{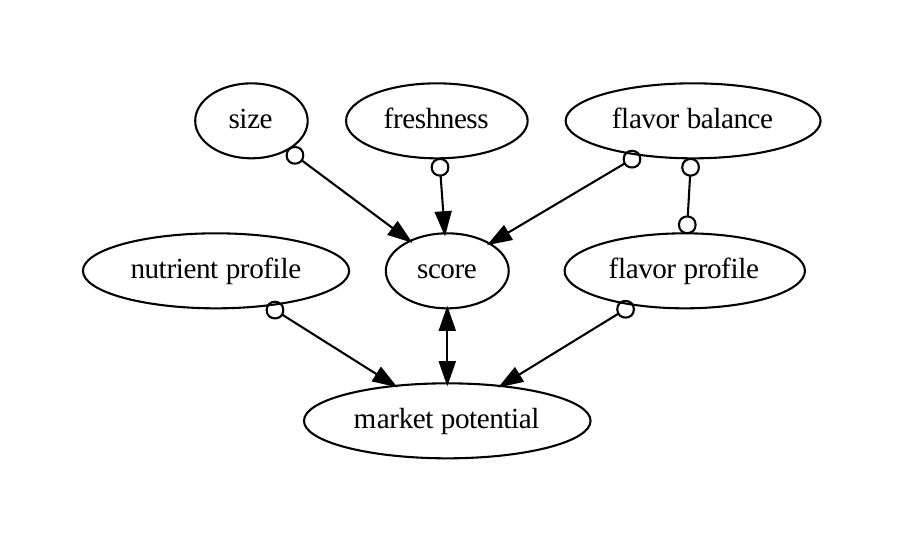}
    }
    \caption{Causal graphs with mistral-Large in \apple.}
    \label{fig:causal_graph_apple_mistral-Large_appdx}
\end{figure}

\begin{figure}[th]
    \centering
    \subfigure[qwen-1.5-110B reasoning]{
        \includegraphics[width=0.4\textwidth, trim=40 40 35 35, clip]{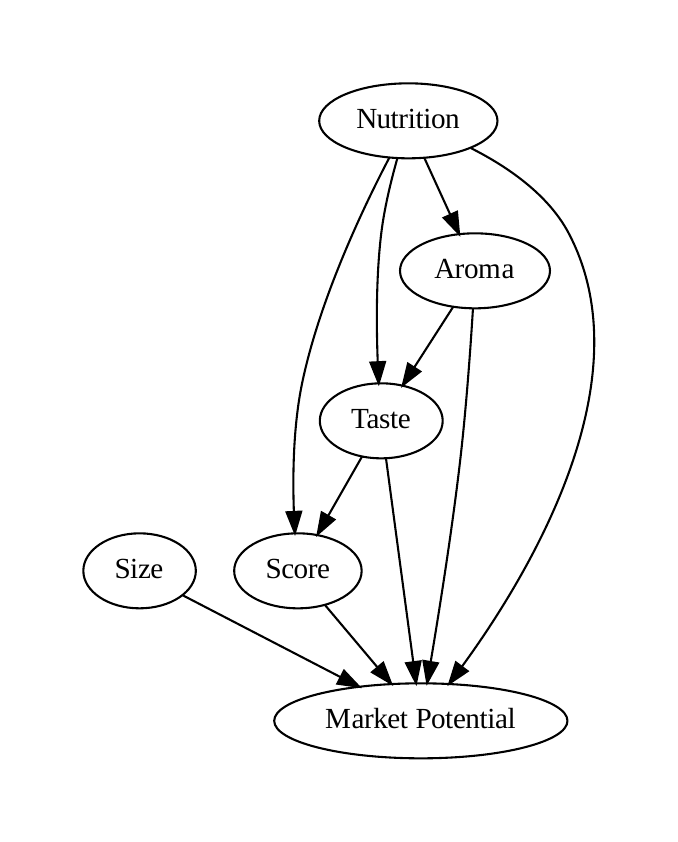}
    }
    \subfigure[qwen-1.5-110B \ours]{
        \includegraphics[width=0.55\textwidth, trim=40 40 35 35, clip]{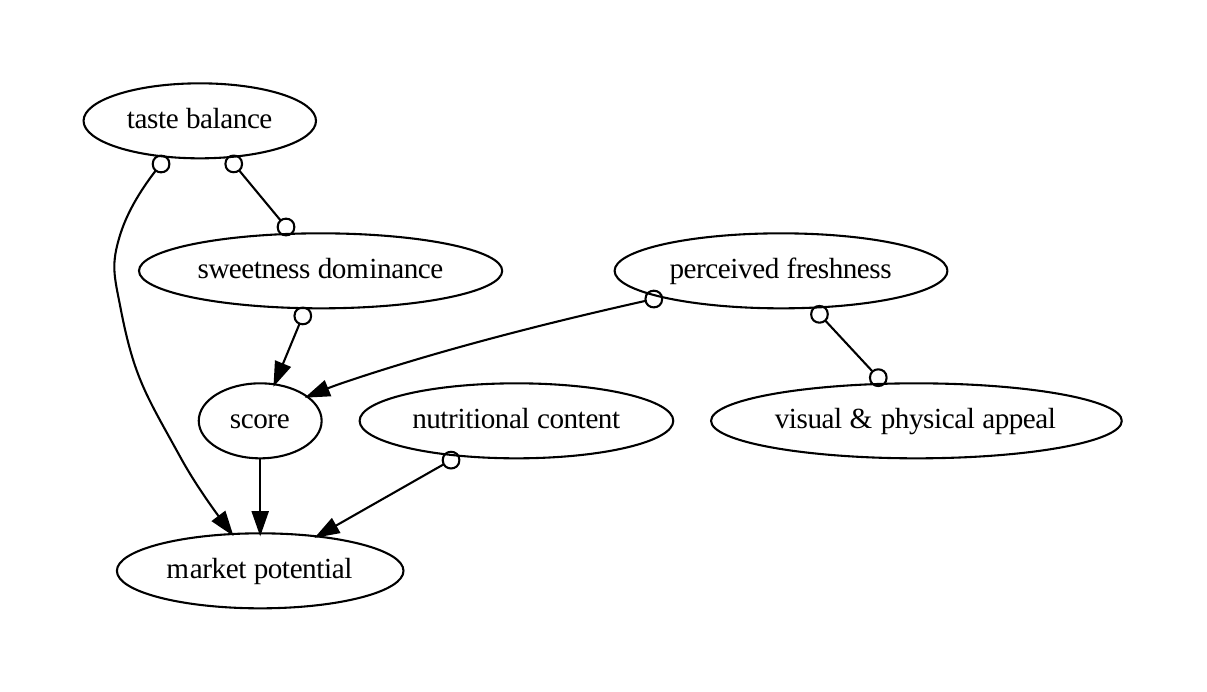}
    }
    \caption{Causal graphs with qwen-1.5-110B in \apple.}
    \label{fig:causal_graph_apple_qwen-1.5-110B_appdx}
\end{figure}

\begin{figure}[th]
    \centering
    \subfigure[GPT-4o reasoning]{
        \includegraphics[width=0.45\textwidth, trim=40 40 35 35, clip]{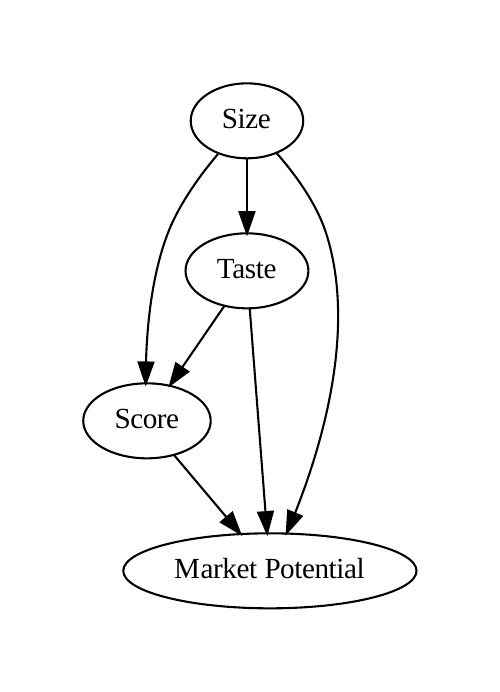}
    }
    \subfigure[GPT-4o \ours]{
        \includegraphics[width=0.45\textwidth, trim=40 40 35 35, clip]{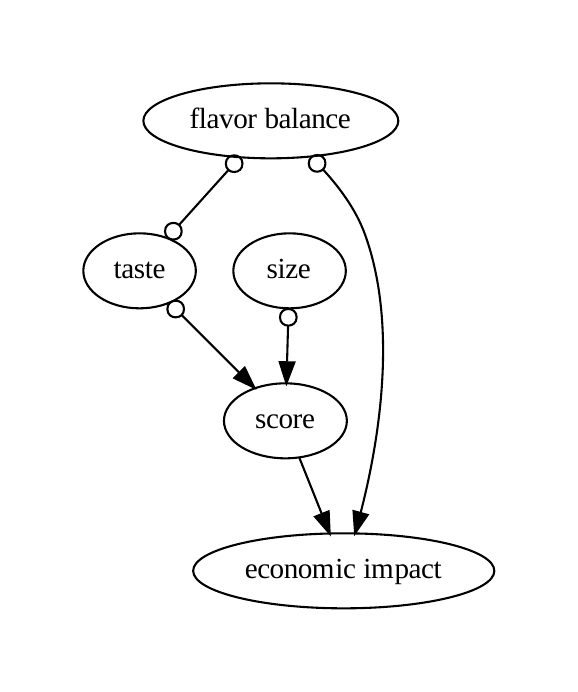}
    }
    \caption{Causal graphs with GPT-4o in \apple.}
    \label{fig:causal_graph_apple_GPT-4o_appdx}
\end{figure}

\clearpage
\newpage
\subsection{More Details on Constructing \pain}
\label{appdx:pain_construction}

\begin{figure}[ht]
    \centering
    \includegraphics[width=0.8\textwidth]{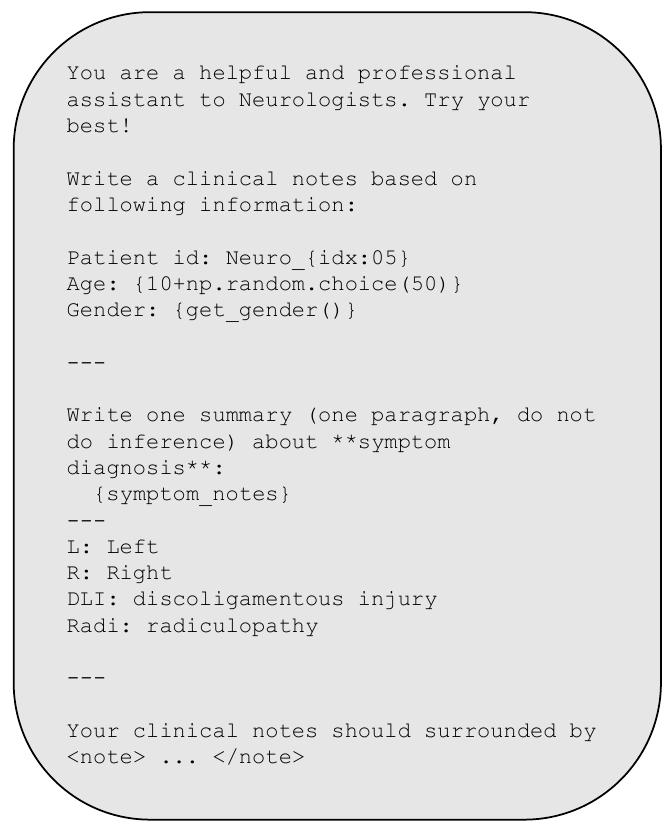}
    \caption{Illustration of prompts for generating \pain.}
    \label{fig:neuro_gen_prompt}
\end{figure}

In the \pain benchmark, we convert the dataset into a clinical diagnosis task. In the original dataset, there are three levels of causal variables, including the symptom-level, radiculopathy-level and the pathophysiology-level.
In experiments, we mainly consider the target variable of right shoulder impingement.
When generating the clinical diagnosis notes as $\vx$ using \gptf, we will avoid any mentioning of variables other than symptoms.

As we intend to leverage the \pain benchmark to simulate the real-life diagnosis, after the factor proposal stage, we directly incorporate external experts that measure the values of the candidate factors.
The prompts to generate the diagnosis records are given in Fig.~\ref{fig:neuro_gen_prompt}.

Examples of \pain are given in Fig.~\ref{fig:neuro_example_appdx}.

\begin{figure}[ht]
    \centering
    \includegraphics[width=\textwidth]{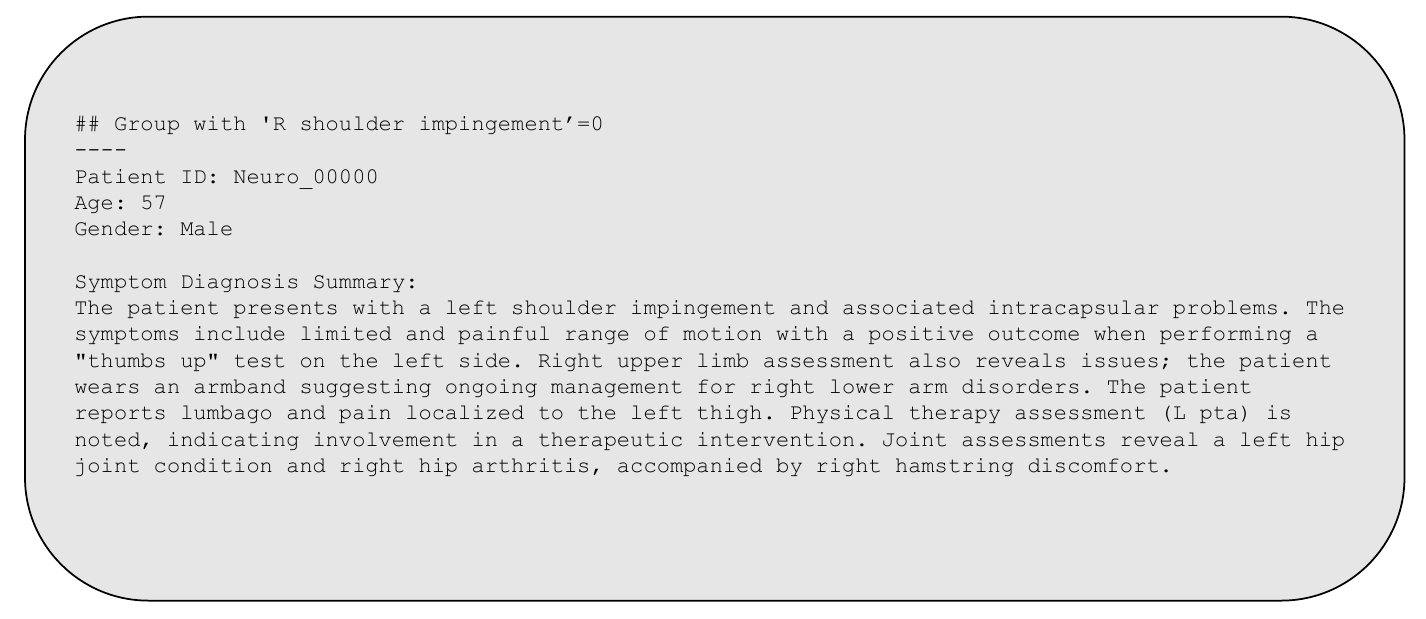}
    \caption{Illustration of examples in \pain.}
    \label{fig:neuro_example_appdx}
\end{figure}



\subsection{More Details of Results on \pain}
\label{appdx:pain_results}
The detailed causal graph results are given from Fig.~\ref{fig:causal_graph_neuro_appdx} to Fig.~\ref{fig:causal_graph_neuro_mistral_appdx}.

\begin{figure}[t]
    \centering
    \subfigure[Ground truth]{
        \includegraphics[width=0.45\textwidth, trim=40 40 35 35, clip]{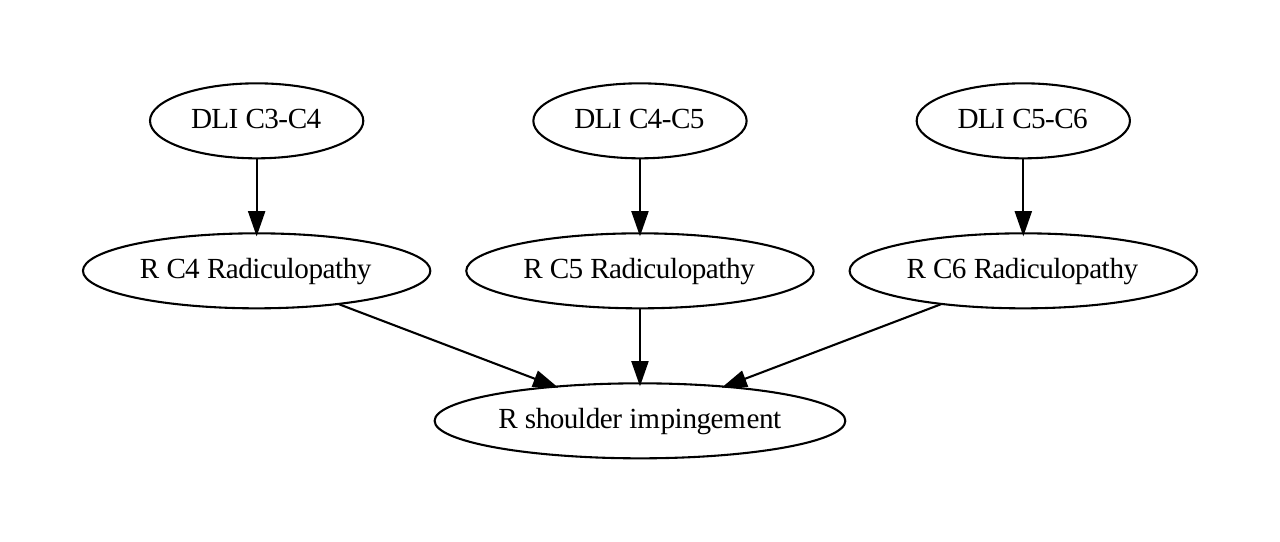}
    }
    \subfigure[FCI results]{
        \includegraphics[width=0.45\textwidth, trim=40 40 35 35, clip]{figures/neuro/Neuro_FCI.pdf}
    }
    \caption{Ground truth and faithful (via FCI algorithm) causal graphs in \pain.}
    \label{fig:causal_graph_neuro_appdx}
\end{figure}

\begin{figure}[t]
    \centering
    \subfigure[\gptf reasoning]{
        \includegraphics[width=0.45\textwidth, trim=40 40 35 35, clip]{figures/neuro/gpt-4-LLM-Neuro.pdf}
    }
    \subfigure[\gptf \ours]{
        \includegraphics[width=0.45\textwidth, trim=40 40 35 35, clip]{figures/neuro/gpt-4-turbo-Neuro.pdf}
    }
    \caption{Causal graphs with \gptf in \pain.}
    \label{fig:causal_graph_pain_gpt4_appdx}
\end{figure}

\begin{figure}[t]
    \centering
    \subfigure[\chatgpt reasoning]{
        \includegraphics[width=0.45\textwidth, trim=40 40 35 35, clip]{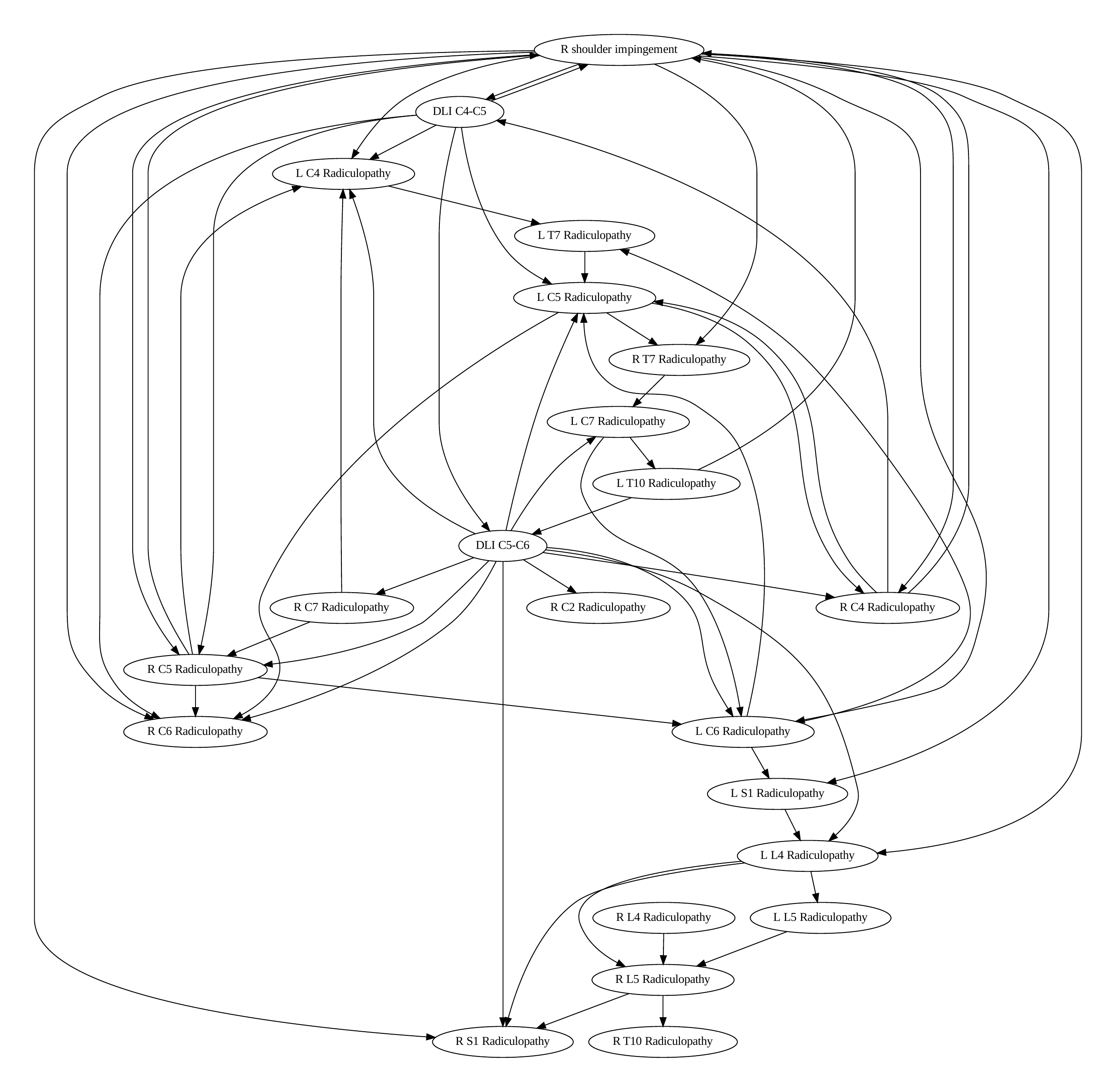}
    }
    \subfigure[\chatgpt \ours]{
        \includegraphics[width=0.45\textwidth, trim=40 40 35 35, clip]{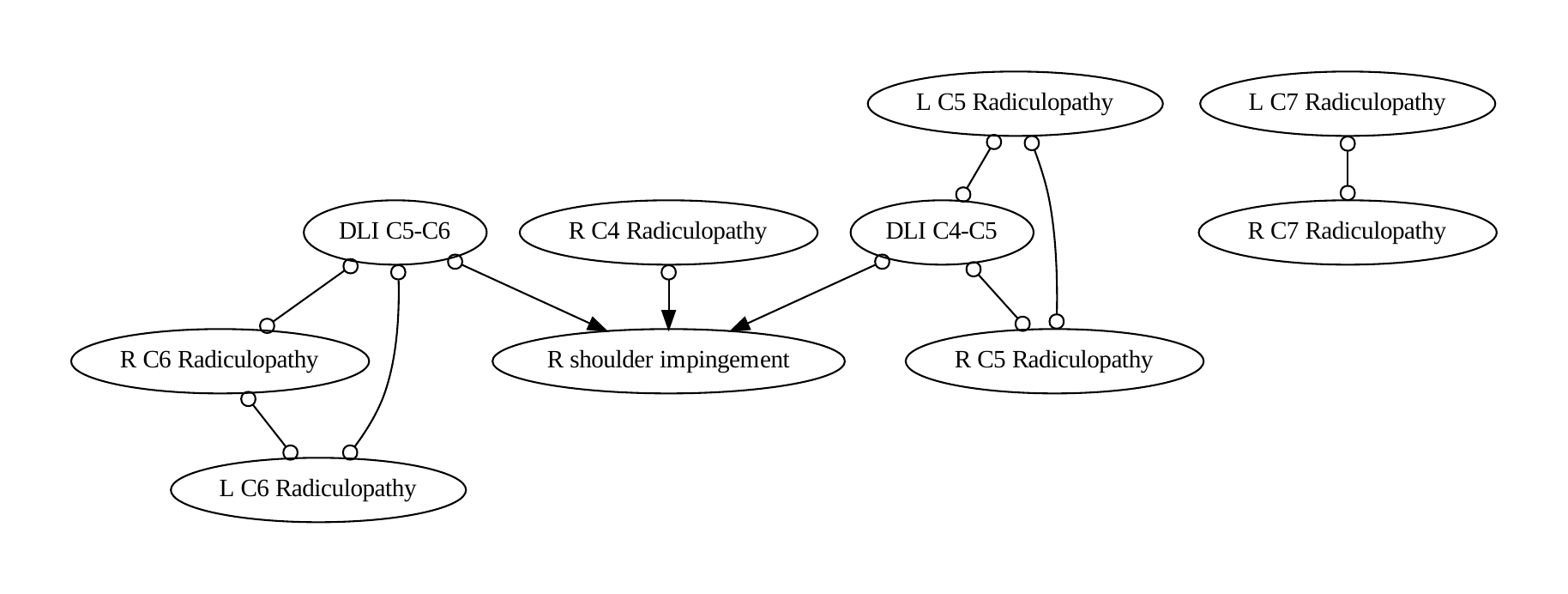}
    }
    \caption{Causal graphs with \chatgpt in \pain. }
    \label{fig:causal_graph_neuro_gpt3_appdx}
\end{figure}

\begin{figure}[t]
    \centering
    \subfigure[LLaMA-2-70b reasoning]{
        \includegraphics[width=0.45\textwidth, trim=40 40 35 35, clip]{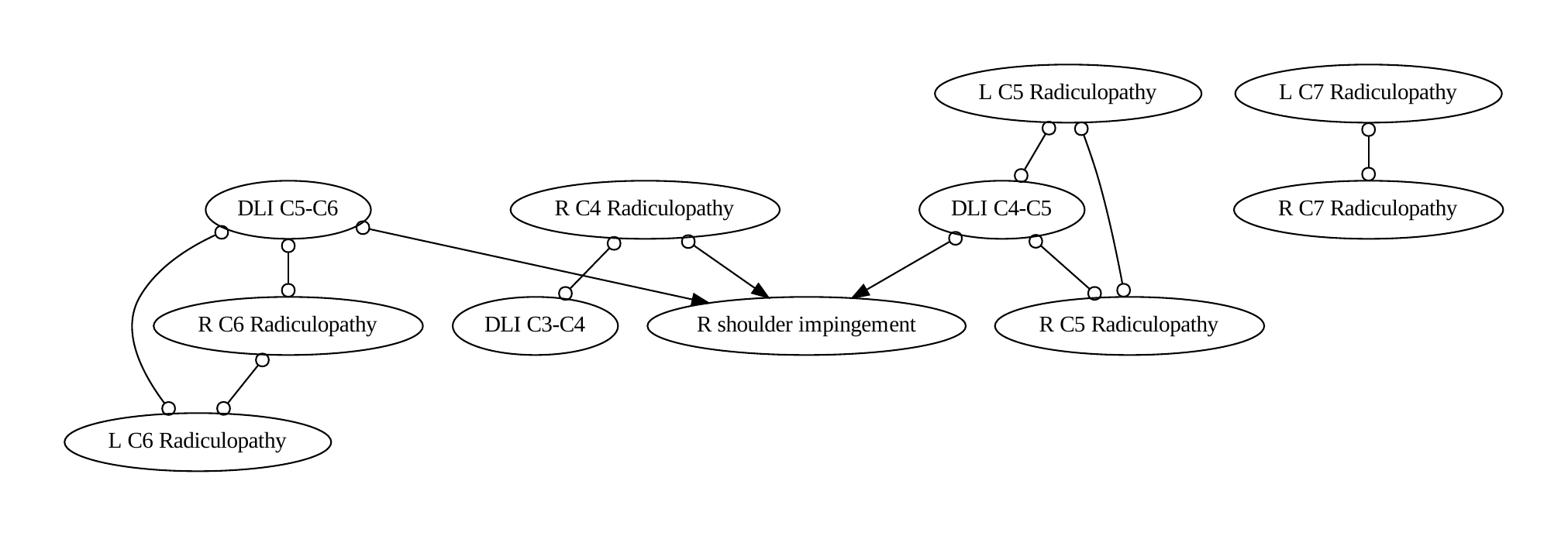}
    }
    \subfigure[LLaMA-2-70b \ours]{
        \includegraphics[width=0.45\textwidth, trim=40 40 35 35, clip]{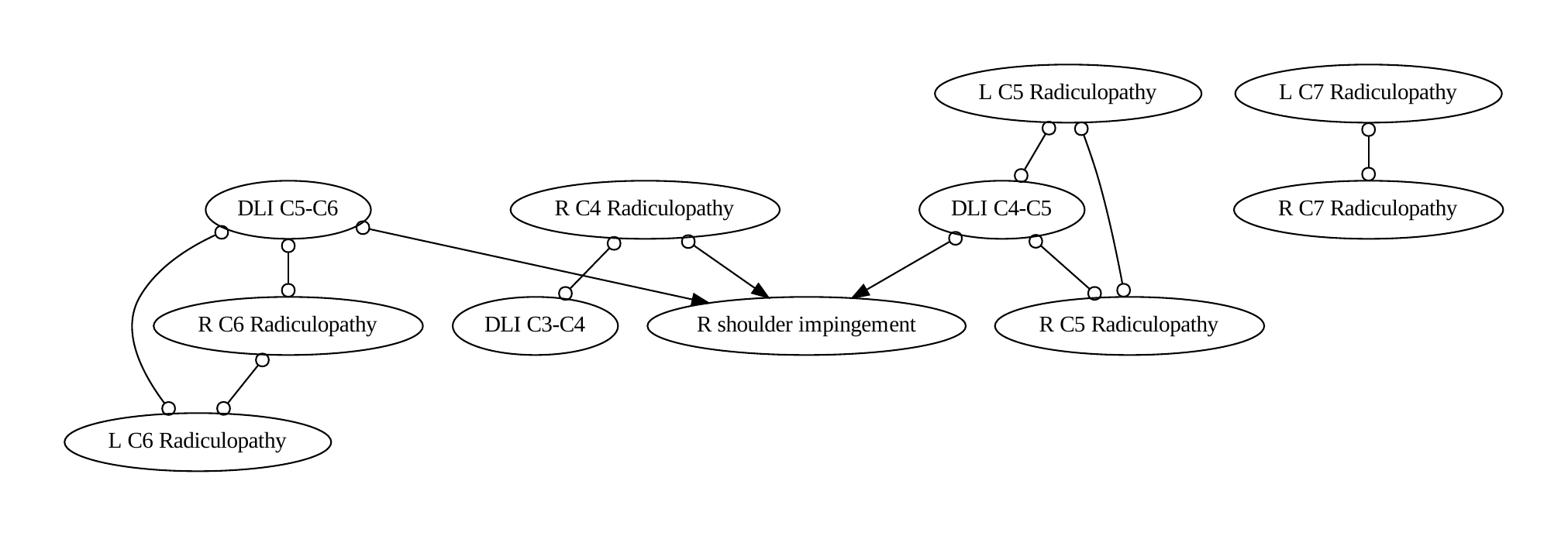}
    }
    \caption{Causal graphs with LLaMA-2-70b in \pain.}
    \label{fig:causal_graph_neuro_llama_appdx}
\end{figure}

\begin{figure}[t]
    \centering
    \subfigure[Mistral-Medium reasoning]{
        \includegraphics[width=0.45\textwidth, trim=40 40 35 35, clip]{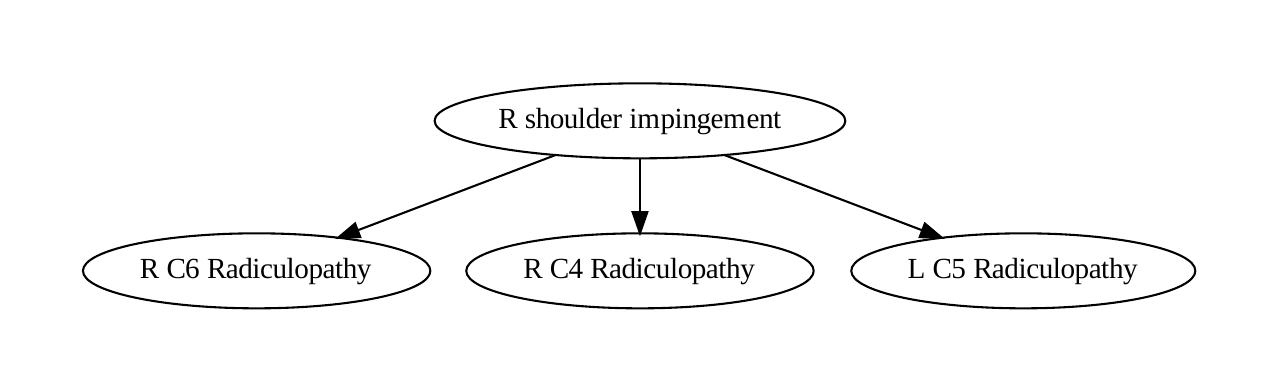}
    }
    \subfigure[Mistral-Medium \ours]{
        \includegraphics[width=0.45\textwidth, trim=40 40 35 35, clip]{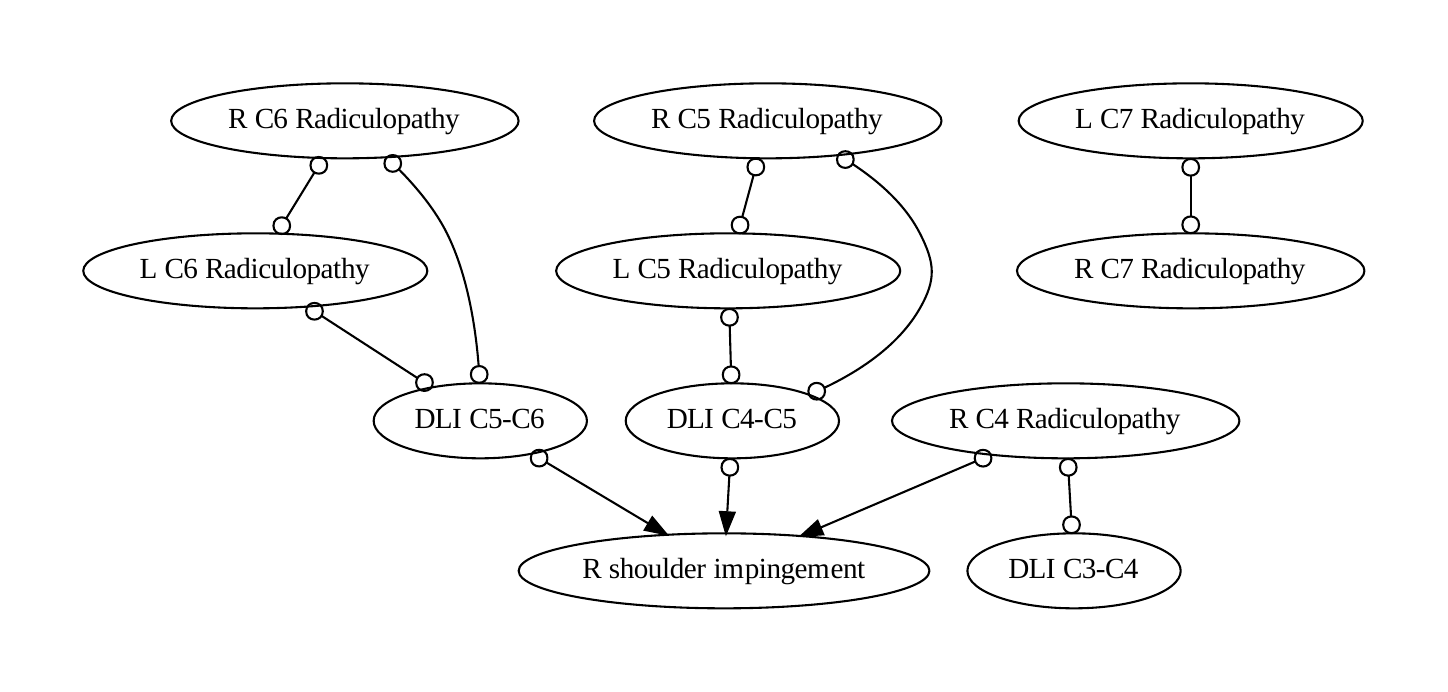}
    }
    \caption{Causal graphs with Mistral-Medium in \pain. }
    \label{fig:causal_graph_neuro_mistral_appdx}
\end{figure}

\subsection{Discussion on the time complexity}

Assume there are m samples with n possible factors.

For Factor Proposal:

META only needs to interact once with LLM, so it is O(1).
DATA runs one single COAT round, so it is O(1).
COAT interacts with LLMs multiple rounds. In each round, at least one new factor should be proposed; otherwise, the loop will stop. So it is O(n)
For Factor Annotation:

META: Not applicable.
DATA: At most n factors would be proposed in a single round. And each of them needs m times annotations by LLMs for all samples. So it is O(nm).
COAT: At most n factors would be proposed during all rounds. And each of them needs m times annotations by LLMs for all samples. So it is O(nm).
For Causal Discovery:

Pair-wise reasoning by LLMs: O($n^2$)
COAT: LLM is not involved. The computational cost of it depends on the numeric methods. The FCI algorithm used by COAT has a time complexity that goes exponentially with n. Learning the causal graph over a large number of nodes effectively is still an open problem in causal discovery literature.

\subsection{Resources}
\label{appdx:Compute Resources}
We utilized a system comprising two Intel Xeon E5-2630v4 processors with 2.2GHz, two NVIDIA Tesla P40 GPUs, and 256 GB of memory. For conversations with large language models (LLMs), we leveraged the poe.com platform, while annotations were facilitated using the OpenAI API and the Mistral API.

\clearpage

\section{\ours with Different Causal Discovery Algorithm}
\label{appdx:ligam_coat}

\begin{table}[h]
    \caption{Causal discovery results in \pain. PA, AN, and OT refer to the parents, ancestors, and others, respectively. Accuracy and F1 measure the recovery of the causal ancestors.}
    \small\centering
    \resizebox{0.8\textwidth}{!}{
        \begin{tabular}{ll|ccccc}
            \toprule
            \multirow{2}{*}{LLM}          & \multirow{2}{*}{Method} & \multicolumn{5}{c}{Factor Proposal}                         \\
                                          &                         & PA                                  & AN & OT & Acc  & F1   \\
            \midrule
            \multirow{3}{*}{GPT-4}        & Meta                    & 3                                   & 5  & 6  & 0.91 & 0.59 \\
                                          & Data                    & 2                                   & 2  & 0  & 0.95 & 0.50 \\
                                          & \ours                   & 3                                   & 6  & 3  & 0.96 & 0.80 \\
                                          & Data (LiNGAM)           & 3                                   & 3  & 0  & 0.96 & 0.67 \\
                                          & \ours (LiNGAM)          & 3                                   & 6  & 4  & 0.95 & 0.75 \\
            \midrule
            \multirow{3}{*}{GPT-3.5}     & Meta                    & 3                                   & 5  & 6  & 0.91 & 0.59 \\
                                          & Data                    & 3                                   & 5  & 4  & 0.94 & 0.67 \\
                                          & \ours                   & 3                                   & 5  & 2  & 0.96 & 0.77 \\
                                          & Data (LiNGAM)           & 2                                   & 3  & 4  & 0.91 & 0.46 \\
                                          & \ours (LiNGAM)          & 3                                   & 5  & 4  & 0.94 & 0.67 \\
            \midrule
            \multirow{3}{*}{\llama-70b}   & Meta                    & 2                                   & 4  & 5  & 0.91 & 0.53 \\
                                          & Data                    & 3                                   & 3  & 1  & 0.95 & 0.60 \\
                                          & \ours                   & 3                                   & 6  & 2  & 0.97 & 0.86 \\
                                          & Data (LiNGAM)           & 3                                   & 4  & 4  & 0.92 & 0.57 \\
                                          & \ours (LiNGAM)          & 3                                   & 6  & 4  & 0.95 & 0.75 \\
            \midrule
            \multirow{3}{*}{\mistral-Med} & Meta                    & 3                                   & 6  & 3  & 0.96 & 0.80 \\
                                          & Data                    & 3                                   & 3  & 2  & 0.94 & 0.66 \\
                                          & \ours                   & 3                                   & 6  & 2  & 0.97 & 0.86 \\
                                          & Data (LiNGAM)           & 3                                   & 3  & 3  & 0.92 & 0.50 \\
                                          & \ours (LiNGAM)          & 3                                   & 6  & 3  & 0.96 & 0.80 \\
            \bottomrule
            \label{table:neuropathic_lingam_factor_proposal}
        \end{tabular}}
\end{table}

\begin{figure}[ht]
\vspace{-0.2in}
    \centering
    \subfigure[GPT-4 \ours with LiNGAM]{
        \includegraphics[height=0.2\textheight, trim=40 40 35 35, clip]{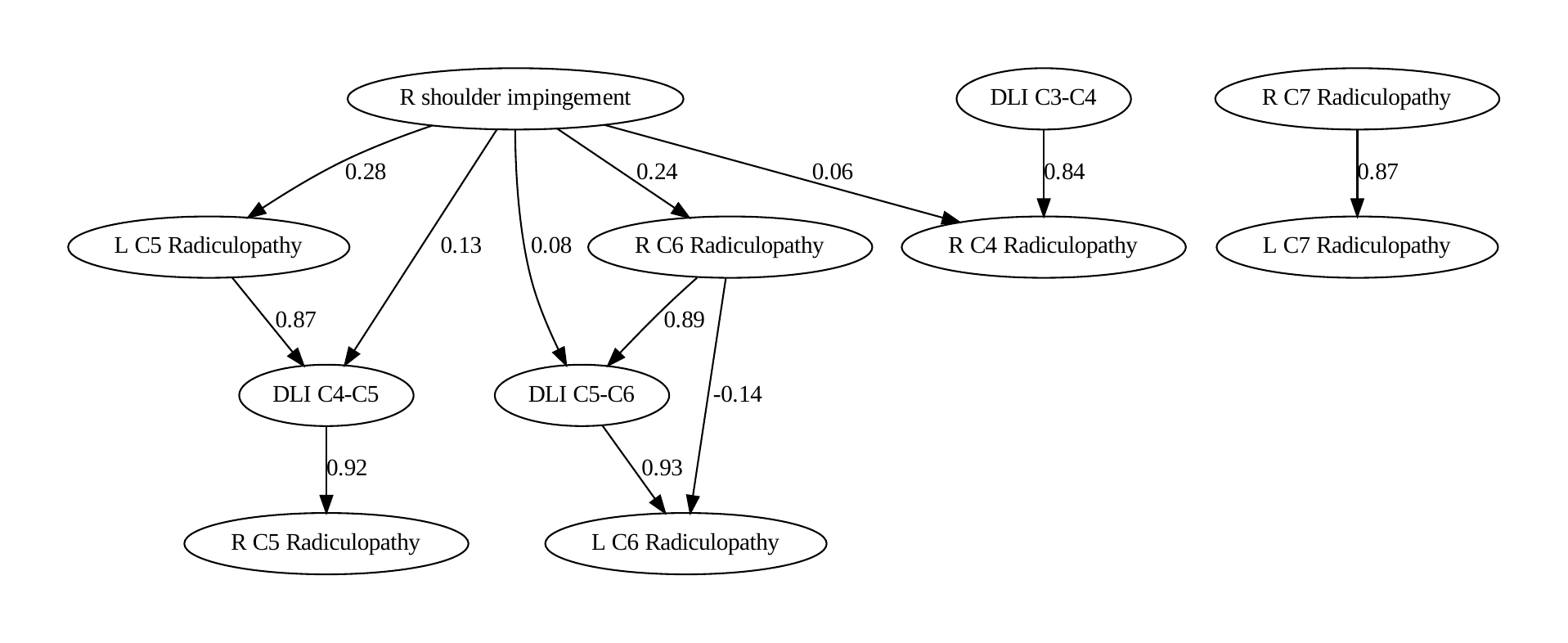}
    } 
    \subfigure[GPT-3.5 \ours with LiNGAM]{
        \includegraphics[height=0.2\textheight, trim=40 40 35 35, clip]{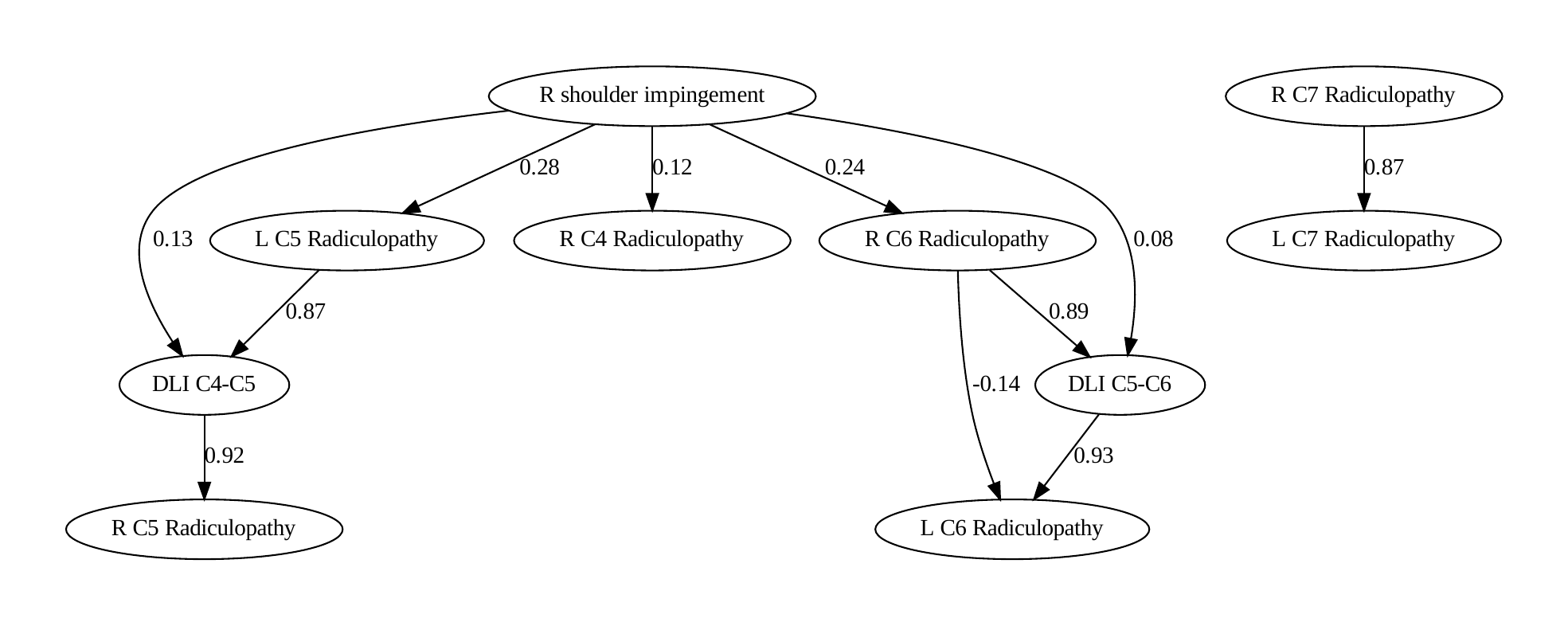}
    } \\ \vspace{0.15in} 
    \subfigure[llama-2-70b \ours with LiNGAM]{
        \includegraphics[height=0.2\textheight, trim=40 40 35 35, clip]{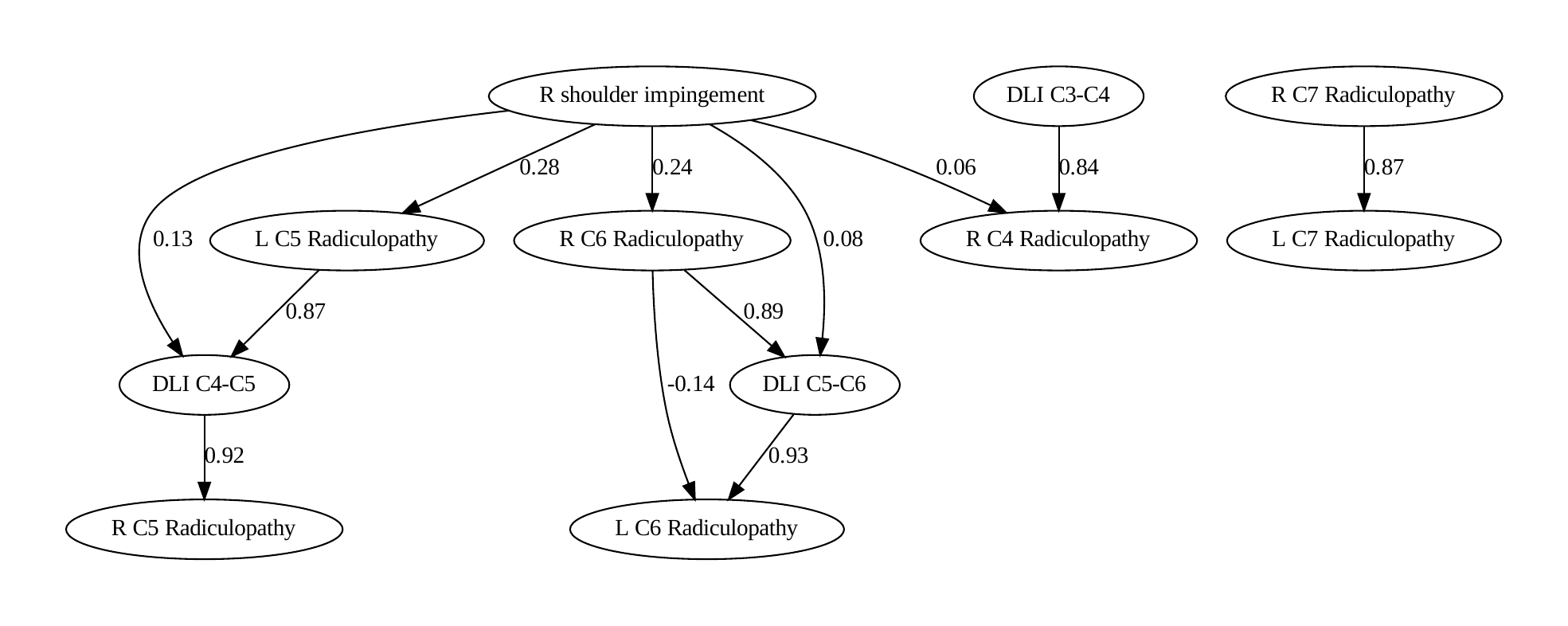}
    }
    \subfigure[Mistral-med \ours with LiNGAM]{
        \includegraphics[height=0.2\textheight, trim=40 40 35 35, clip]{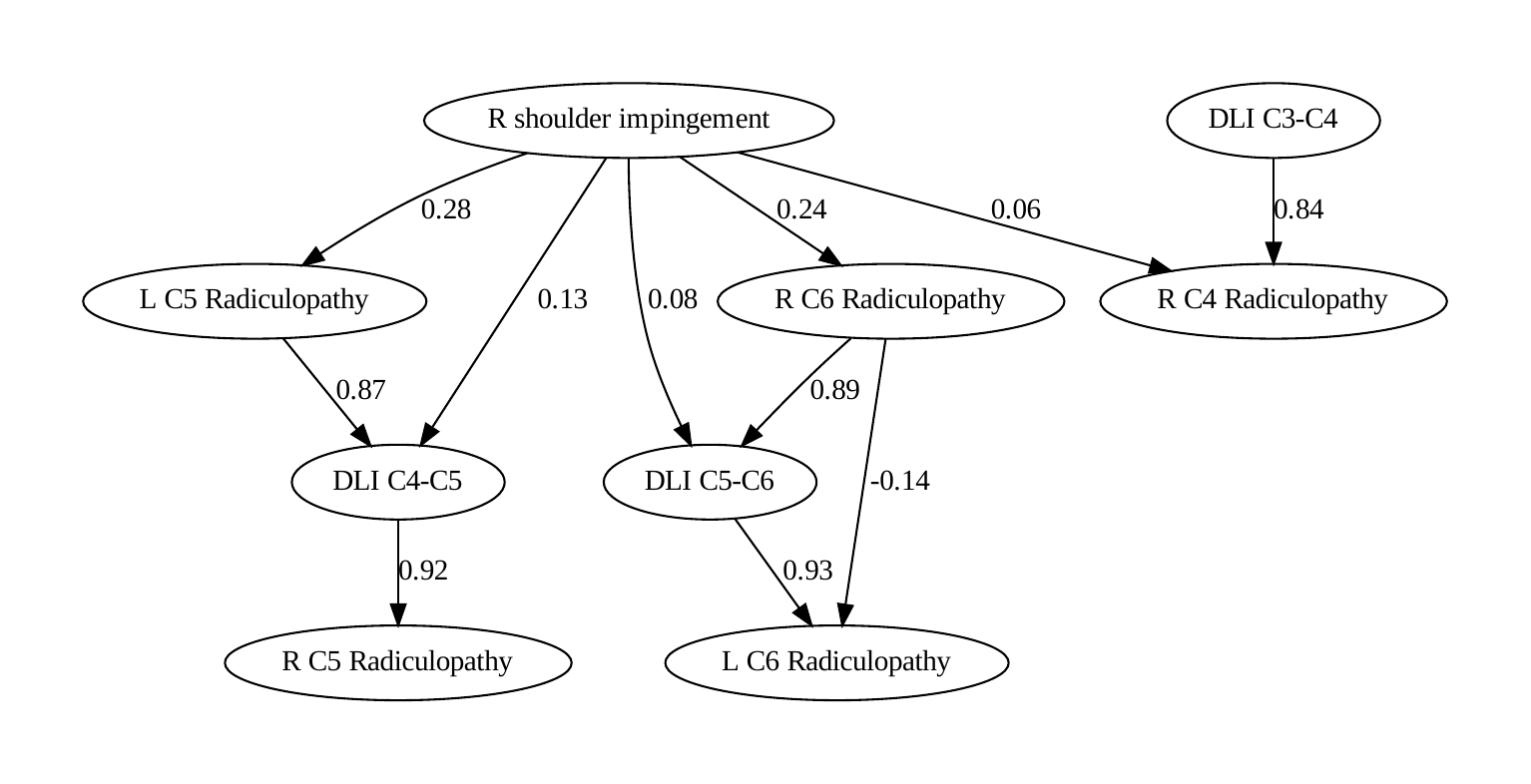}
    }
    \caption{The discovered causal graphs in \pain with LiNGAM.}
    \label{fig:causal_graph_neuro_lingam}
\end{figure}

\clearpage

\newpage

\section{Summary of Benchmark Data}
\label{appendex:summary_of_benchmark_data}

\begin{table}[H]
\centering
\resizebox{0.8\textwidth}{!}{
\begin{tabular}{>{\raggedright\arraybackslash}p{2cm} >{\raggedright\arraybackslash}p{2cm} >{\raggedright\arraybackslash}p{5cm} >{\raggedright\arraybackslash}p{2cm} >{\raggedright\arraybackslash}p{5cm} >{\raggedright\arraybackslash}p{2cm}}
\toprule
\textbf{Benchmark Name} & \textbf{Type} & \textbf{Source} & \textbf{Sample} & \textbf{Sample Size} & \textbf{Ground Truth} \\
\midrule
Apple Gastronome & Synthetic Data & This paper. It will be open-sourced under \href{https://creativecommons.org/licenses/by/4.0/deed.en}{CC-BY 4.0}. & Textual & 200 textual reviews. & Yes \\
\midrule
Neuropathic & Semi Real-world Data & Tabular samples from~\citet{neuropanic} on their \href{https://github.com/TURuibo/Neuropathic-Pain-Diagnosis-Simulator}{GitHub repo} under \href{https://creativecommons.org/licenses/by/4.0/deed.en}{CC-BY 4.0}. Textual samples are generated by this paper. & Textual & 100 synthetic textual data for prompt construction; 1000 tabular data for CI tests. & Yes \\
\midrule
Brain Tumor & Real-world Data & An open Kaggle dataset (\href{https://www.kaggle.com/datasets/sartajbhuvaji/brain-tumor-classification-mri/data}{kaggle/brain-tumor-classification-mri}) with an open-sourced project.~\citep{bhuvaji2024brain}. \href{https://www.mit.edu/~amini/LICENSE.md}{MIT License.} & Image & 60 MRI samples for COAT and CI tests. The sample size is small due to the expensive API cost. One can easily construct a much larger dataset by following our instructions. & No \\
\midrule
Stock News & Real-world Data & An open Kaggle dataset (\href{https://www.kaggle.com/datasets/BidecInnovations/stock-price-and-news-realted-to-it/data}{kaggle/stock-price-and-news-realted-to-it})~\citep{BidecInnovations2023}. \href{https://www.mit.edu/~amini/LICENSE.md}{MIT License.} & Textual with time index & 200 samples for COAT experiments. 400 samples for empirical validation. & No \\
\midrule
ENSO & Real-world Data & The NOAA 20th Century Reanalysis V3 dataset~\citep{compo2011twentieth} from their website at \href{https://psl.noaa.gov}{https://psl.noaa.gov} (with \href{https://creativecommons.org/publicdomain/zero/1.0/?ref=chooser-v1}{CC0 1.0 License}). We developed a set of simple tools for LLMs to utilize the official database. & NetCDF & Monthly observation for 1836/01 to 2015/12; 1.0 degree latitude x 1.0 degree longitude global grid (360x181); 90+ climate variables. & No \\
\bottomrule
\end{tabular}}
\caption{Summary of Benchmark Data}
\end{table}

\clearpage
\newpage

\vskip 0.2in

\end{document}